\newcommand{\blind}{1}
\newcommand{\ywh}{\hat y^{(rwt-k)}_{t,i}}
\newcommand{\yw}{y_{t,i}^{(rwt-k)}}
\newcommand{\samplet}{\bs_{t,i}^{(k)},a_{t,i}^{(k)}}
\newcommand{\sampleki}{\bs_{t,i}^{(k_i)},a_{t,i}^{(k_i)}}
\newcommand{\omegai}{\omega_{t,i}^{(k)}}
\newcommand{\omegaih}{\hat\omega_{t,i}^{(k)}}
\newcommand\myshade{85}
\colorlet{mylinkcolor}{YellowOrange}
\colorlet{mycitecolor}{Aquamarine}
\colorlet{myurlcolor}{violet}
\renewcommand{\hat}{\widehat}
\renewcommand{\tilde}{\widetilde}
\newcommand{\bfm}[1]{\ensuremath{\boldsymbol{#1}}} 
\def\ba{\bfm a}     
\def\bb{\bfm b}     
   \def\bC{\bfm C}  
\def\bd{\bfm d}     
     \def\EE{\mathbb{E}}
\def\bh{\bfm h}
   \def\bM{\bfm M}  
     \def\NN{\mathbb{N}}
     \def\PP{\mathbb{P}}
     \def\RR{\mathbb{R}}
\def\bs{\bfm s}
   \def\bW{\bfm W}  
\def\bx{\bfm x}   \def\bX{\bfm X}  
\def\bz{\bfm z}
\def\calA{{\cal  A}} 
\def\calB{{\cal  B}}
\def\calE{{\cal  E}} 
\def\calG{{\cal  G}} \def\cG{{\cal  G}}
\def\calH{{\cal  H}}
\def\calL{{\cal  L}} 
\def\calM{{\cal  M}} \def\cM{{\cal  M}}
\def\calN{{\cal  N}} 
\def\calO{{\cal  O}} 
\def\calP{{\cal  P}}
\def\calS{{\cal  S}}
\def\calX{{\cal  X}} 
\def\calZ{{\cal  Z}} 
\newcommand{\bfsym}[1]{\ensuremath{\boldsymbol{#1}}}
 \def\btheta{\bfsym {\theta}}
\providecommand{\abs}[1]{\left\lvert#1\right\rvert}
\providecommand{\norm}[1]{\left\lVert#1\right\rVert}
\providecommand{\angles}[1]{\left\langle #1 \right\rangle}
\providecommand{\paren}[1]{\left( #1 \right)}
\providecommand{\brackets}[1]{\left[ #1 \right]}
\providecommand{\braces}[1]{\left\{ #1 \right\}}
\DeclarePairedDelimiterX{\infdivx}[2]{(}{)}{%
  #1 \; \delimsize\| \; #2%
}
\DeclareMathOperator{\Tr}{Tr}
\newcommand*\xbar[1]{%
  \hbox{%
    \vbox{%
      \hrule height 0.4pt 
      \kern0.5ex
      \hbox{%
        \kern-0em
        \ensuremath{#1}%
        \kern-0em
      }%
    }%
  }%
} 
\newtheorem{definition}{Definition}
\newtheorem{assumption}[definition]{Assumption}
\newtheorem{lemma}[definition]{Lemma}
\newtheorem{theorem}[definition]{Theorem}
\newtheorem{corollary}[definition]{Corollary}
\theoremstyle{definition}
\newtheorem{remark}{Remark}
\definecolor{royalpurple}{rgb}{0.47, 0.32, 0.66}
\definecolor{greenfresh}{HTML}{00897B}
\definecolor{bluefresh}{HTML}{1E88E5}
\definecolor{redfresh}{HTML}{E53935}
\definecolor{royalpurple}{rgb}{0.47, 0.32, 0.66}
\def\beq{\begin{equation}}
\def\eeq{\end{equation}}
\def\bet{\begin{theorem}}
\def\eet{\end{theorem}}
\def\bel{\begin{lemma}}
\def\eel{\end{lemma}}
\def\eps{\varepsilon}
\def\cond{\;|\;}
\begin{document}
\pagenumbering{arabic}

\def\spacingset#1{\renewcommand{\baselinestretch}%
{#1}\small\normalsize} \spacingset{1}
\def\r#1{\textcolor{red}{\bf #1}}
\def\b#1{\textcolor{blue}{\bf #1}}
\newcommand{\jcadd}[1]{\noindent{\textcolor{blue}{\{\em #1\}}}}

%
%
%

\def\TITLE{Deep Transfer $Q$-Learning for Offline Non-Stationary Reinforcement Learning}

\if1\blind
{
\title{\bf \TITLE}
\author{
Jinhang Chai$^\flat$ \hspace{8ex}
Elynn Chen$^\natural$ \hspace{8ex}
Jianqing Fan$^\sharp$ \thanks{Corresponding author.
The authors gratefully acknowledge the research support from NSF Grants DMS-2210833 and DMS-2053832,  ONR  N00014-22-1-2340, and DMS-2412577.
}
\\ \normalsize
\medskip
$^{\flat,\sharp}$ Princeton University \hspace{8ex}
$^{\natural}$ New York University
}
\maketitle
} \fi

\if0\blind
{
\bigskip
\bigskip
\bigskip
\begin{center}
{\LARGE\bf \TITLE}
\end{center}
\medskip
} \fi

\bigskip
\begin{abstract}
\spacingset{1.08}
In dynamic decision-making scenarios across business and healthcare, leveraging sample trajectories from diverse populations can significantly enhance reinforcement learning (RL) performance for specific target populations, especially when sample sizes are limited. While existing transfer learning methods primarily focus on linear regression settings, they lack direct applicability to reinforcement learning algorithms.
This paper pioneers the study of transfer learning for dynamic decision scenarios modeled by non-stationary finite-horizon Markov decision processes, utilizing neural networks as powerful function approximators and backward inductive learning. We demonstrate that naive sample pooling strategies, effective in regression settings, fail in Markov decision processes.
To address this challenge, we introduce a novel {\it ``re-weighted targeting procedure''} to construct {\it ``transferable RL samples''} and propose {\it ``transfer deep $Q^*$-learning''}, enabling neural network approximation with theoretical guarantees. We assume that the reward functions are transferable and deal with both situations in which the transition densities are transferable or nontransferable.  Our analytical techniques for transfer learning in neural network approximation and transition density transfers have broader implications, extending to supervised transfer learning with neural networks and domain shift scenarios.
Empirical experiments on both synthetic and real datasets corroborate the advantages of our method, showcasing its potential for improving decision-making through strategically constructing transferable RL samples in non-stationary reinforcement learning contexts.
\end{abstract}

\noindent%
{\it Keywords:}  Finite-horizon Markov decision processes; 
Non-stationary; 
Backward inductive $Q^*$-learning; 
Transfer learning; 
Neural network approximation; 
\vfill


\newpage
\spacingset{1.9} 

\addtolength{\textheight}{.1in}%

\section{Introduction}  \label{sec:intro}

Sequential decision-making problems in healthcare, education, and economics are commonly modeled as finite-horizon MDPs and solved using reinforcement learning (RL)  \citep{schulte2014q,charpentier2021reinforcement}. 
These domains face challenges from high-dimensional state spaces and limited data in new contexts. This motivates the development of knowledge transfer that can leverage data from abundant source domains to improve decision-making in target populations with scarce data.

Transfer learning  has shown promises in addressing these challenges, but its effective application to RL remains limited. 
Although transfer learning has advanced significantly in regression settings \citep{li2022transfer-jrssb,gu2022robust,fan2023robust}, these methods do not readily extend to RL problems. 
Recent empirical work on deep RL transfer has focused on game environments \citep{zhu2023transfer}, but their assumptions -- such as identical source-target tasks, predefined reward differences, or known task mappings -- are too restrictive for real-world applications.
While theoretical advances have emerged for model-based transfer in linear low-rank and stationary MDPs \citep{agarwal2023provable,bose2024offline}, a comprehensive theory for transfer learning in non-stationary model-free RL remains elusive. 

To address the limitations in current literature, this paper presents a theoretical study of transfers in non-stationary finite-horizon MDPs, a crucial model within RL. 
Our rigorous analysis in Section \ref{sec:model} reveals {\it fundamental differences} between transfer learning in RL and regression settings. 
Unlike single-stage regression, RL involves multi-stage processes with state transitions, necessitating consideration of state drift. Moreover, RL's delayed rewards, absent in regression settings, require estimation at decision time, introducing additional complexity to the transfer learning process.

We demonstrate that naive pooling of sample trajectories, effective in regression transfer learning, leads to uncontrollable bias in RL settings. To overcome this, we focus on non-stationary finite-horizon MDPs and introduce a novel ``re-weighted targeting procedure'' for {\it backward inductive $Q^*$-learning} \citep{murphy2005generalization,clifton2020q} with neural network function approximation in offline learning. 
This procedure, comprising re-weighting and re-targeting steps, addresses transition shifts and reward prediction misalignments, respectively.

Our work establishes theoretical guarantees for transfer learning in this context, extending insights to deep transfer learning more broadly. We also introduce a neural network estimator for transition probability ratios, also contributing to the study of domain shift in deep transfer learning.
Our {\it contributions} span four key areas. First, we clarify the fundamental differences between transfer learning in RL and that in regression settings, introducing a novel method to construct ``transferable RL samples.'' Second, we develop the ``re-weighted targeting procedure'' for non-stationary MDPs in backward inductive $Q$-learning, which can potentially extend to other RL algorithms. Third, we provide theoretical guarantees for transfer learning with backward inductive deep $Q$-learning, addressing important gaps in the analysis of transfer deep learning and density ratio estimation. Finally, we present a novel mathematical proof for neural network analysis that has broader applications in theoretical deep learning studies. Those include the consideration of temporal dependence in the error propagation in RL with continuous state spaces, removal of the completeness assumption on function class in neural network approximation, and non-asymptotic bounds for density ratio estimator.

\subsection{Related Works and Distinctions of this Work}

This paper bridges transfer learning, statistical RL, and their intersection. We provide a focused review of the most pertinent literature to contextualize our contributions within these interconnected fields.

\smallskip
\noindent
\textbf{Offline RL and finite-horizon $Q$-learning.}
The field of RL is well-documented \citep{sutton2018reinforcement,kosorok2019precision}.
We focus on {\it model-free, offline RL}, distinct from model-based \citep{yang2019sample,li2024settling} and online approaches \citep{jin2023provably}. 
Within $Q$-learning \citep{clifton2020q}, recent work distinguishes between $Q^\pi$-learning for policy evaluation \citep{shi2022statistical} and $Q$-learning for policy optimization \citep{clifton2020q,li2024q}. 

We study finite-horizon $Q$-learning for non-stationary MDPs, building on seminal work on the backward inductive $Q$-learning \citep{murphy2003optimal,murphy2005generalization}. This setting has been explored using linear \citep{chakraborty2014dynamic,laber2014dynamic,song2015penalized} and non-linear models \citep{laber2014interactive,zhang2018interpretable}. However, these studies focus on single-task learning, and to our knowledge, {\it no work} has considered deep neural network approximation in non-stationary finite-horizon $Q$-learning within a transfer learning context.

Machine learning research has primarily concentrated on {\it stationary} MDPs \citep{xia2024instance,li2024q,li2021sample,liao2022batch}. Theoretical advances have emerged in iterative $Q$-learning, particularly under linear MDP assumptions and finite state-space settings \citep{jin2021pessimism,shi2022pessimistic,yan2023efficacy,li2024settling}.
The field has recently expanded to neural network-based approaches. Notable works include \cite{fan2020theoretical}'s analysis of iterative deep $Q$ learning, \cite{yang2020bridging}'s investigation of neural value iteration, and \cite{cai2024neural}'s examination of neural temporal difference learning in stationary MDPs.
Our work differs from these prior studies through its focus on {\it non-stationary} MDPs, specifically addressing the challenges of transfer learning in this context.

\smallskip
\noindent
\textbf{Transfer learning in supervised and unsupervised learning.}
Transfer learning addresses various shifts between source and target tasks: marginal shifts, including covariate \citep{ma2023optimally,wang2023pseudo} and label shifts \citep{maity2022minimax}, and conditional shifts involving response distributions. These have been studied in high-dimensional linear regression \citep{li2022transfer-jrssb,gu2022robust,fan2023robust}, generalized linear models \citep{tian2022transfer,li2023estimation}, non-parametric methods \citep{cai2021transfer,cai2022transfer,fan2023robust}, and graphical models \citep{li2022transfer-jasa}.
Our work differs fundamentally from these settings in three ways. First, offline $Q^*$ estimation involves no direct response observations, requiring novel approaches to transfer future estimations. Second, we develop de-biasing techniques for constructing transferable samples, uniquely necessary in RL contexts. Third, our theoretical analysis of neural network transfer in RL reveals previously unidentified phenomena. Section \ref{sec:model} details these contributions and their implications for transfer learning in RL.

\smallskip
\noindent
\textbf{Transfer learning in RL.}
While transfer learning is well-studied in supervised learning \citep{pan2009survey}, its application to RL poses unique challenges within MDPs. A recent survey \citep{zhu2023transfer} documents diverse empirical approaches in transfer RL, but these often lack theoretical guarantees.
Theoretical advances in transfer RL have primarily focused on model-based approaches with low-rank MDPs \citep{agarwal2023provable,ishfaq2024offline,bose2024offline,lu2021power,cheng2022provable} or stationary model-free settings \citep{chen2024data}. While \cite{chen2022transferred} studied non-stationary environments, they assumed linear $Q^*$ functions and identical state transitions across tasks.

\subsection{Organization}
The paper proceeds as follows. Section \ref{sec:model} formulates transfer learning in non-stationary finite-horizon sequential decisions, defining task discrepancy for MDPs. Section \ref{sec:esti} presents batched $Q$ learning with knowledge transfer using deep neural networks. Section \ref{sec:theory} provides theoretical guarantees under both settings of transferable and non-transferable transition. Section \ref{sec:empirical} presents empirical results. Proofs and additional details appear in the supplementary material.



\section{Transfer RL and Transferable Samples} \label{sec:model}

We consider a non-stationary episodic RL task modeled by a finite-horizon MDP, defined as a tuple $\calM = \braces{\calS, \calA, P, r, \gamma, T}$. Here, $\calS$ is the state space, $\calA$ is the finite action space, $P$ is the transition probability, $r$ is the reward function, $\gamma\in[0,1]$ is the discount factor, and $T$ is the finite horizon.

At time $t$, for the $i$-th individual, an agent observes the current state $\bs_{t,i}\in\calS$, chooses an action $a_{t,i}\in\calA$, and transitions to the next state $\bs_{t+1,i}$ according to $p_t\paren{\bs_{t+1,i}|\bs_{t,i}, a_{t,i}}$. The agent receives an immediate reward $r_{t,i}$ with expected value $r_t(\bs_{t,i}, a_{t,i}) =\EE[r_{t,i}|\bs_{t,i},a_{t,i}]$.

An agent's decision-making is  govern by a policy function $\pi\paren{a_{t,i} | \bs_{t,i}}$ that maps the state space $\calS$ to probability mass functions on the action space $\calA$. For each step $t \in [T]$ and policy $\pi$, we define the state-value function by
\begin{equation} \label{eqn:v-func-0}
    V^\pi_t(\bs) = \EE^{\pi}\brackets{ \sum_{s=t}^T \gamma^{s-t} r(\bs_{s,i}, a_{s,i}) \bigg| \bs_{t,i} = \bs}.
\end{equation}
Accordingly, the action-value function ($Q^\pi$-function) of a given policy $\pi$ at step $t$ is the expectation of the accumulated discounted reward at a state $\bs$ and taking action $a$:
\begin{equation} \label{eqn:$Q^*$-func-0}
    Q^\pi_t(\bs, a) = \EE^{\pi}\brackets{ \sum_{s=t}^T \gamma^{s-t} r(\bs_{s,i}, a_{s,i}) \bigg| \bs_{t,i} = \bs, a_{t,i} = a}.
\end{equation}
For any given action-value function $Q_t^\pi$, 
its greedy policy $\pi^Q_t$ is defined as
\begin{equation} \label{eqn:greedy-policy}
    \pi^Q_t\paren{a | \bs} =
    \begin{cases}
        1 &\text{if } a =  \arg\underset{a'\in\calA}{\max}\; Q^\pi_t\paren{\bs, a'},  \\
        0 & \text{otherwise}.
    \end{cases}
\end{equation}
The goal of RL is to learn an optimal policy $\pi^*_t$, $t\in[T]$, that maximizes the discounted cumulative reward. We define the optimal action-value function $Q^*_t$ as:
\begin{equation} \label{eqn:opt-q}
    Q^*_t\paren{\bs, a} = \underset{\pi\in\Pi}{\sup}\; Q^\pi_t\paren{\bs, a}, \quad \forall \paren{\bs, a} \in \calS \times \calA.
\end{equation}
Then, the Bellman optimal equation holds:
\begin{equation} \label{eqn:bellman-opt}
    \EE\brackets{ r_{t,i} + \gamma\; \underset{a'\in\calA}{\max}\; Q^*_{t+1}\paren{\bs_{t+1,i}, a'} - Q^*_t\paren{\bs_{t,i}, a_{t,i}} \bigg| \bs_{t,i}, a_{t,i} } = 0.
\end{equation}
In non-stationary environments, $Q^*_t$ varies across different stages $t\in[T]$, reflecting the changing dynamics of the system over time.  The optimal policy $\pi^*_t$ can be derived as any policy that is greedy with respect to $Q^*_t$.

For offline RL, backward inductive $Q$ learning has emerged as a classic estimation method. This approach differs from iterative $Q$ learning, which is more commonly used in stationary environments or online settings. Backward inductive $Q$ learning is particularly well-suited for offline finite-horizon problems where the optimal policy may change at each time step, making it an ideal choice for the non-stationary environments with offline estimation. The relationship between these methods and their respective applications are thoroughly explained in the seminal works of \cite{murphy2005generalization} and \cite{clifton2020q}.

\subsection{Transfer Reinforcement Learning}

Transfer RL leverages data from similar RL tasks to enhance learning of a target RL task. We consider source data from offline observational data or simulated data, while the target task can be either offline or online.

Let $[K]$ denote the set of $K$ source tasks. We refer to the target RL task of interest as the $0$-th task, denoted with a superscript ``$(0)$'', while the source RL tasks are denoted with superscripts ``$(k)$'', for $k\in[K]$. For notational simplicity, we sometimes omit the $(0)$ for the target task. For example, $Q^*$ stands for $Q^{*(0)}$.
Random trajectories for the $k$-th source task are generated from the $k$-th MDP $\calM^{(k)} = \braces{\calS, \calA, P^{(k)}, r^{(k)}, \gamma, T}$. We assume, without loss of generality, that the horizon length $T$ is the same for all tasks. For each task $k\in \{0\} \cup [K]$, we collect $n_k$ independent trajectories of length $T$, denoted as $\{\bs_{t,i}^{(k)},a_{t,i}^{(k)},r_{t,i}^{(k)}\}_{t\in[T], i\in[n_k]}$. We assume that trajectories in different tasks are independent and that $n_k$ does not depend on stage $t$, i.e., none of the tasks have missing data. Due to technical reasons for nonlinear aggregation, we further assume $(n_1,n_2,\cdots,n_K)$ follows a multinomial distribution with total number $n_\calM$; see Section \ref{sec:aggregation} for further details.

In single-task RL, each task $k\in \{0\} \cup [K]$ is considered separately. The underlying true response for $Q$-learning at step $t$ is defined as
\begin{equation}  \label{eqn-yk}
y^{(k)}_{t,i}
:= r^{(k)}_{t,i}
+ \gamma \cdot \max_{a\in\calA} Q_{t+1}^{*(k)}(\bs_{t+1,i}^{(k)}, \;a).
\end{equation}
According to the Bellman optimality equation \eqref{eqn:bellman-opt}, we have:
\begin{equation}\label{bellman0}
\EE^{(k)}\brackets{ Y^{(k)}_{t,i} - Q_{t}^{*(k)}\paren{ S^{(k)}_{t,i}, A^{(k)}_{t,i}}
\bigg| S^{(k)}_{t,i},\; A_{t,i}^{(k)} } = 0,
\quad\text{for}\quad k\in \{0\} \cup [K],
\end{equation}
which provides a conditional moment condition for the estimation of $Q^{*(k)}_{t}$.

For the target task, if $y^{(0)}_{t,i}$ were directly observable, $Q^{*(0)}_{t}$ could be estimated via regression via \eqref{bellman0}. However, we only observe the ``partial response'' $r^{(0)}_{t,i}$. The second term on the right-hand side of \eqref{eqn-yk} depends on the unknown $Q^*$-function and future observations. To address this, we employ backward-inductive $Q$-learning, which estimates $Q^{*(0)}_{t}$ in a backward fashion from $t = T$ to $t = 0$, using the convention that $Q_{T+1}^{*(k)}(\bs_{T+1,i}^{(k)}, \;a) \equiv 0$. 
This backward-inductive approach, common in offline finite-horizon $Q$-learning for single-task RL \citep{murphy2005generalization,clifton2020q}, will be extended to the transfer learning setting in subsequent sections.

\subsection{Similarity Characterizations} \label{sec:similarity}

To develop rigorous transfer methods and theoretical guarantees in RL, we need precise definitions of task similarity. We define RL task similarity based on the mathematical model of RL: a tuple $\calM = \braces{\calS, \calA, P, r, \gamma,T}$. Our focus is on differences in reward and transition density functions across tasks. For $k\in \{0\} \cup [K]$, we define the $k$-th RL task as $\calM^{(k)} = \braces{\calS, \calA, P^{(k)}, r^{(k)}, \gamma,T}$, where $k=0$ denotes the target task and $k\in[K]$ denotes source tasks. We characterize similarities as follows:

\smallskip
\noindent
\textbf{(I) Reward Similarity.}
We define the difference between reward functions of the target and the $k$-th source task using the functions 
\begin{equation} \label{eq-delta}
\delta_t^{(k)}(\bs,\; a)
:= r_{t}^{(0)}(\bs,a) - r_{t}^{(k)}(\bs,a) 
\end{equation}
for $t\in[T]$ and $k\in[K]$. Task similarity implies that $\delta^{(k)}_t(\cdot, \cdot)$ is easier to estimate \citep{zhu2023transfer}. Specific assumptions on reward similarity will be detailed in Section \ref{sec:esti}  for neural network and Appendix E for kernel approximations, respectively.

\smallskip
\noindent
\textbf{(II) Transition Similarity.}
We characterize the difference between transition probabilities of the target and the $k$-th source task using the transition density ratio:
\begin{equation}
\omega^{(k)}_t(\bs'|\bs,a) = \frac{p^{(0)}_t(\bs'|\bs,a)}{p^{(k)}_t(\bs'|\bs,a)},
\end{equation}
where $p^{(0)}_t$ and $p^{(k)}_t$ are the transition probability densities of the target and $k$-th source task, respectively. In exploring transition similarity, we examine three distinct scenarios:
\begin{itemize}
    \item Total similarity: $\omega^{(k)}_t(\bs'|\bs,a) = 1$.
    \item Transferable transition densities: $p^{(k)}_t(\bs'|\bs,a)$ are similar so that their ratio, as measured by $\omega^{(k)}_t$, is of lower order of complexity. 
    \item  Non-transferable transition densities:  $p^{(k)}_t(\bs'|\bs,a)$ are so different that there are no advantages of transfer this part of knowledge.
\end{itemize}

\begin{remark}
Our similarity metric based on reward function discrepancy has been  used in empirical studies \citep{zhu2023transfer} and theoretical analyses \citep{chen2022transferred,chen2024data}. It generalizes previous definitions \citep{lazaric2012transfer,mousavi2014context} by allowing similar but different $Q^*$-functions, making it applicable to various domains where responses to treatments may vary slightly. As rewards are directly observable, assumptions on \eqref{eq-delta} can be verified in practice. The scenario of transition density similarity, however, has not been rigorously studied before.
\end{remark}

\begin{remark}
The similarity quantification in \eqref{eq-delta} can be interpreted from a potential outcome perspective. For a given state-action pair $(\bs,a)$, $\delta_t^{(k)}(\bs,a)$ represents the difference in reward when switching from the $k$-th task to the target task. While this ``switching'' describes unobserved counterfactual facts \citep{kallus2020more}, the potential response framework allows us to generate counterfactual estimates using samples from the $k$-th study and estimated coefficients of the target study.
\end{remark}

\subsection{Challenges in Transfer $Q$-Learning}

In RL, unlike supervised learning (SL), the true response $y^{(0)}_{t,i}$ defined in \eqref{eqn-yk} is unavailable. For single-task $Q$-learning, we construct pseudo-responses:

\begin{equation}  \label{eqn-pseudo-y}
    \hat y^{(k)}_{t,i}
    := r^{(k)}_{t,i} + \gamma\cdot\max_{a\in\calA} \hat Q^{(k)}_{t+1}(\bs_{t+1,i}^{(k)}, a), \quad\text{for}\quad k\in\{0\}\cup[K]. 
\end{equation}
A naive extension of transfer learning to RL would augment target pseudo-samples $\{\bs^{(0)}_{t,i}, a^{(0)}_{t,i},\hat y^{(0)}_{t,i}\}$ with source pseudo-samples $\{\bs^{(k)}_{t,i},a^{(k)}_{t,i}, \hat y^{(k)}_{t,i}\}$. However, this introduces additional bias due to the mismatch between source and target $Q^*$ functions:
\begin{equation}
Q^{*(0)}_{t}(\bs, a) - Q^{*(k)}_{t}(\bs, a) = \delta^{(k)}_t(\bs, a) + b_t^{(k)}(\bs,a), 
\end{equation}
where
\begin{equation}
\begin{aligned}
b_t^{(k)}(\bs,a) & = \EE^{(0)}\brackets{\gamma\cdot\max_{a\in\calA} Q^{*(0)}_{t+1}(S_{t+1,i}^{(0)}, a)~\big|~S^{(0)}_{t,i}=\bs, A^{(0)}_{t,i} =a} \\
& - \EE^{(k)}\brackets{\gamma\cdot\max_{a\in\calA} Q^{*(k)}_{t+1}(S_{t+1,i}^{(k)}, a)~\big|~S^{(k)}_{t,i}=\bs, A^{(k)}_{t,i}=a}.
\end{aligned}
\end{equation}
While $\delta^{(k)}_t(\bs, a)$ is unavoidable and can be controlled, $b_t^{(k)}(\bs,a)$ is difficult to validate or learn, making direct application of SL transfer techniques infeasible for RL.

\subsection{Re-Weighted Targeting for Transferable Samples}

We propose a novel ``re-weighted targeting (RWT)'' approach to construct transferable pseudo-responses. At the population level, given the transition density ratio $\omega^{(k)}_t = {p^{(0)}_t}/{p^{(k)}_t}$, we define
\begin{equation}  \label{eq-mt}
y^{(rwt-k)}_{t,i}
:= r^{(k)}_{t,i} + \gamma\cdot\omega^{(k)}_{t,i}\cdot \max_{a\in\calA} Q^{*(0)}_{t+1}(\bs_{t+1,i}^{(k)}, a),
\end{equation}
where $\omega^{(k)}_{t,i} = \omega^{(k)}_{t}(\bs_{t+1,i}^{(k)}\cond\bs_{t,i}^{(k)},a_{t,i}^{(k)})$ and the target $Q^{*(0)}_{t+1}$ is used.

This approach aligns the future state of source tasks with that of the target task. From \eqref{bellman0}, \eqref{eq-delta}, and \eqref{eq-mt}, we easily see that
\begin{align}
&\EE^{(0)}\brackets{ Y^{(0)}_{t,i}~\big|~S^{(0)}_{t,i}=\bs, A^{(0)}_{t,i} =a}
= Q^{*(0)}_{t}(\bs, a)  \label{mm1}\\
&\EE^{(k)}\brackets{ Y^{(rwt-k)}_{t,i}~\big|~S^{(k)}_{t,i}=\bs, A^{(k)}_{t,i}=a}
= Q^{*(0)}_{t}(\bs, a) - \delta^{(k)}_t(\bs, a).  
\label{mm2}
\end{align}
The model discrepancy between $y^{(0)}_{t,i}$ and $y^{(rwt-k)}_{t,i}$ arises solely from the reward function inconsistency at stage $t$.

In practice, we construct pseudo-responses for the target samples:
\begin{equation}  \label{yk-pseudo}
\hat y^{(0)}_{t,i}
:= r^{(0)}_{t,i} +\gamma\cdot\max_{a\in\calA} \hat Q_{t+1}^{(0)}(\bs^{(0)}_{t+1,i},a)
\end{equation}
and RWT pseudo-responses from the source samples:
\begin{equation}  \label{eq-mt-pseudo}
\hat{y}^{(rwt-k)}_{t,i}
:= r^{(k)}_{t,i} + \gamma\cdot\hat\omega_{t,i}^{(k)}\cdot \max_{a\in\calA} \hat Q^{(0)}_{t+1}(\bs_{t+1,i}^{(k)}, a),
\end{equation}
where $\hat Q^{(0)}_{t+1}$ and $\hat\omega_{t,i}^{(k)}$ are estimates of $Q^{*(0)}_{t+1}$ and $\omega_{t,i}^{(k)}$, respectively.

The ``re-weighted targeting (RWT)'' procedure enables ``cross-stage transfer,'' a phenomenon unique to RL transfer learning. Improved estimation of $\hat Q^{(0)}_{t+1}$ using source data at stage $t+1$ enhances the accuracy of RWT pseudo-samples at stage $t$, facilitating information exchange across stages and boosting algorithm performance.
We instantiate the approximation methods and theoretical results under deep ReLU neural networks in Section \ref{sec:esti} and also under kernel approximation in Appendix E. 

\subsection{Aggregated Reward and $Q^*$-Functions}
\label{sec:aggregation}
The implicit data generating process can be understood as randomly assigning a task number $k_j$ to the $j$-th sample with probability $\upsilon_k$ for $1\le j \le n_\calM$ and getting  random samples of sizes $n_0,\cdots,n_K$ for tasks $0, \cdots, K$. Hereafter, we interchangeability use two types of notations. Conditional on realizations of $n_0,\cdots,n_K$, we write the samples as the collections of trajectories  $\{(\bs^{(k)}_{t,i},a^{(k)}_{t,i},\bs'^{(k)}_{t,i})_{t=1}^T\}_{i=1}^{n_k}, \ k\in \{0\}\cup[K]$. We also write $\{(\bs^{(k_j)}_{t,j},a^{(k_j)}_{t,j},\bs'^{(k_j)}_{t,j})_{t=1}^T\}_{j=1}^{n_\calM}$, where $k_j$ is the task number corresponding to the $j$-th sample.
Let $\tilde\mu$ be the product of Lebesgue measure and counting measure on $\calS\times \calA$. For each task $k$, the sample trajectories $\{(\bs^{(k)}_{t,i},a^{(k)}_{t,i},\bs'^{(k)}_{t,i})_{t=1}^T\}_{i=1}^{n_k}$ are i.i.d.~across index $i$.  Thus, at stage $t$,  they follow a joint distribution $\PP_t^{(k)}$, with density $p_t^{(k)}$ with respect to $\tilde\mu$.  

Define $\bar\upsilon_t^{(k)}(\bs,a)=\PP[k_i=k|\bs_{t,i}=\bs,a_{t,i}=a]$ as the conditional probability of task given $\bs,a$ at time $t$.
We then define the aggregated reward function as $r_t^{*\operatorname{agg}} = \sum_{k=1}^K \bar\upsilon_t^{(k)}r_t^{(k)}$ and the aggregated $Q^*$ function as a weighted average 
\begin{equation}\label{eqn:Q-agg}
Q_t^{*\operatorname{agg}} = \sum_{k=1}^K {\bar\upsilon_t^{(k)}}\EE^{(k)}\brackets{Y^{(rwt-k)}_{t,i}~\big|~S^{(k)}_{t,i}=\bs, A^{(k)}_{t,i}=a}.    
\end{equation}
From equations \eqref{mm1} and \eqref{mm2}, we have
\begin{equation}
\sum_{k=1}^K {\bar\upsilon_t^{(k)}} \EE^{(k)}\brackets{ Y^{(rwt-k)}_{t,i}~\big|~S^{(k)}_{t,i}=\bs, A^{(k)}_{t,i}=a}
= Q^{*(0)}_{t}(\bs, a) + \sum_{k=1}^K {\bar\upsilon_t^{(k)}} \delta^{(k)}_t(\bs, a). 
\end{equation}
From Bayes' formula, it holds that  $\bar\upsilon_t^{(k)}(\bs,a)=\frac{\upsilon_k \PP_t^{(k)}(\bs,a)}{\PP_t(\bs,a)}$. Hence we can equivalently write $r^{*\operatorname{agg}}_t(\bs,a)=\frac{\sum_{k=1}^K \upsilon_kp_t^{(k)}(\bs,a)r^{*(k)}_t(\bs,a)}{\sum_{k=1}^K \upsilon_k p_t^{(k)}(\bs,a)}$.
From these definitions, it follows that:
\begin{align}
&Q^{*\operatorname{agg}}_t(\bs,a) - Q^{*(0)}_t(\bs,a) = \delta^{*\operatorname{agg}}_t(\bs,a), \\
&\delta^{*\operatorname{agg}}_t(\bs,a) = r^{*\operatorname{agg}}_t(\bs,a) - r^{(0)}_t(\bs,a) = \sum_{k=1}^K \bar\upsilon_t^{(k)} \delta^{(k)}_t(\bs, a).
\end{align}

The aggregated function $Q^{*\operatorname{agg}}_t(\bs,a)$ represents the optimal $Q^*$ function for a mixture distribution of source MDPs and can be estimated using RWT source pseudo-samples $\{\hat{y}^{(rwt-k)}_{t,i},\bs_{t,i},a_{t,i}\}_{k\in[K]}$. When the similarity condition on $\delta^{(k)}_t(\bs, a)$ is preserved under addition, such as those considered in Section \ref{sec:esti}, $\delta^{*\operatorname{agg}}_t(\bs,a)$ remains correctable using target pseudo-samples -- a crucial population-level property that underlies our estimator construction. In the special case of a single source task ($K=1$), these aggregate function simplify to the source function $Q^{*(1)}$ itself.

\begin{algorithm}[ht!]
\SetKwInOut{Input}{Input}
\SetKwInOut{Output}{Output}
\Input{Target data $\{\{\bs^{(0)}_{t,i}, a^{(0)}_{t,i},r^{(0)}_{t,i}\}_{t\in [T]}\}_{i\in[n_0]}$, \\
source data $\{\{\{\bs^{(k)}_{t,i},a^{(k)}_{t,i},r^{(k)}_{t,i}\}_{t\in[T]}\}_{i\in[n_k]}\}_{k=1}^K$, and \\
discount factor $\gamma\in[0,1]$.}

Let $\hat{Q}^{(0)}_{T+1}(\cdot)=0$ to deal with pseudo-response construction at last stage $T$. 

\tcc{Calculate targeting weights under different conditions of transition similarity.}

\If{Transition Total Similarity}{
Set $\hat\omega_{t}^{(k)}(\bs'|\bs,a) = 1$ for all $t\in[T]$ and $k\in[K]$. 
}

\If{Transition Total Dissimilarity}{
Call {\tt Algorithm \ref{alg:trans-ratio}} to calculate $\{\hat\omega_{t}^{(k)}(\bs'|\bs,a)\}$ for $t\in[T]$ and $k\in[K]$. 
}

\If{Transition Transferable}{
Call {\tt Algorithm \ref{alg:trans-ratio-transfer}} to calculate $\{\hat\omega_{t}^{(k)}(\bs'|\bs,a)\}$ for $t\in[T]$ and $k\in[K]$.
}

\tcc{Calculate $Q^*$-function by backward induction.}

\For{$t=T,\dots, 1$}{ 

\tcc{Constructing transferable RL samples by re-weighted targeting.}
Calculate targeting weights $\hat\omega_{t,i}^{(k)} = \hat\omega_{t}^{(k)}(\bs_{t+1,i}|\bs_{t,i},a_{t,i})$

Construct pseudo-response $\hat{y}^{(rwt-0)}_{t,i}=\hat{y}^{(0)}_{t,i}$ \eqref{yk-pseudo} of the target task.

Construct re-weighted targeting pseudo-response $\hat{y}^{(rwt-k)}_{t,i}$ \eqref{eq-mt-pseudo} using $\hat\omega_{t,i}^{(k)}$. 

\tcc{Supervised regression transfer block: aggregate and debias.}
RWT Transfer $Q$-learning:  
\begin{equation} \label{eqn:trans-q-general}
\begin{aligned}
\hat Q^p_t & := \underset{g\in \calG_1}{\arg\min} \sum_{k\in[K]}\sum_{i\in [n_k]}\left(\hat y^{(rwt-k)}_{t,i}-g(\bs_{t,i}^{(k)},a_{t,i}^{(k)})\right)^2, \\
\hat{\delta}_t & := \underset{g\in \calG_2}{\arg\min} \sum_{i\in [n_0]}\left(\hat y_{t,i}^{(0)}-\hat Q^p_t(\bs_{t,i}^{(0)},a_{t,i}^{(0)})-g(\bs_{t,i}^{(0)},a_{t,i}^{(0)})\right)^2,
\end{aligned}
\end{equation}
where $\cG_1$ and $\cG_2$ are approximating function classes of $Q^*$ and reward difference functions respectively. 

Set $\hat{Q}^{(0)}_t=\hat{Q}^p_t+\hat{\delta}_t$.
}

\Output{$\hat{Q}^{(0)}_t$ for all stage $t \in [T]$.}

\caption{RWT Transfer $Q$-Learning (Offline-to-Offline)}
\label{alg:rwt-trans-q-general}
\end{algorithm}

\subsection{Transfer Backward-Inductive $Q$-Learning}

Based on the preceding discussions, we present the RWT transfer $Q$-Learning in Algorithm \ref{alg:rwt-trans-q-general}.
After the construction of transferable RL samples from lines 2 -- 6, the algorithm's main procedure consists of two steps per stage. 
First, we pool the source and re-targeted pseudo responses to create a biased estimator with reduced variance. Then, we utilize source pseudo responses to correct the bias in the initial estimator. We employ nonparametric least squares estimation for this process. The superscript ``p'' denotes either the ``pooled'' or ``pilot'' estimator, as seen in equation \eqref{eqn:trans-q-general}. 

This algorithm is versatile, as the re-weighted targeting approach for constructing transferable RL samples is applicable to various RL algorithms. Moreover, it allows for flexibility in the choice of approximation function classes $\cG_1$ and $\cG_2$ (the latter is usually a simpler class to make the benefit of the transfer possible).  We implement these classes using Deep ReLU neural networks (NNs) in Section \ref{sec:esti}, with corresponding theoretical guarantees provided in Section \ref{sec:theory}. An alternative instantiation using kernel estimators, along with its theoretical analysis, is detailed in Appendix E of the supplemental material. The methods for estimating weights (line 2 of Algorithm \ref{alg:rwt-trans-q-general}) will be specified for both non-transfer and transfer transition estimation scenarios.

\section{Transfer $Q$-Learning with DNN Approximation} \label{sec:esti}

While our previous discussions on constructing transferable RL samples are applicable to various settings, further algorithmic and theoretical development requires specifying the functional class of the optimal $Q^*$ function and its approximation. In this section, we instantiate RL similarity and the transfer $Q$-learning algorithm using deep ReLU neural network approximation. This approach allows us to leverage the expressive power of neural networks in capturing complex $Q^*$-functions. For an alternative perspective, we present the similarity definition and transfer $Q$-learning algorithm under kernel estimation in Appendix E of the supplemental material.

\subsection{Deep Neural Networks for $Q^*$-Function Approximation.}
Conventional to RL and non-parametric literature, we consider a continuous state space $\calS=[0,1]^d$ and finite action space $\calA=[M]$, which is widely used in clinical trials or recommendation system. 
For the learning function class, we use deep ReLU Neural Networks (NNs). 
Let $\sigma(\cdot)=\max\{\cdot,0\}$ be the element-wise ReLU activation function. Let $L$ be the depth and $\tilde\bd=(d_1,d_2,\cdots,d_{L})$ be the vector of widths. A deep ReLU network mapping from $\RR^{d_0}$ to $\RR^{d_{L+1}}$ takes the form of
\begin{equation}
\label{equ:NN}
g(\bx)=\calL_{L+1}\circ\sigma\circ\calL_L\circ \cdots\circ\calL_2\circ\sigma\circ\calL_1(\bx)\end{equation}
where $\calL_\ell(\bz)=\bW_\ell\bz+\bb_\ell$ is an affine transformation with the weight matrix $\bW_\ell\in \RR^{d_\ell\times d_{\ell-1}}$ and bias vector $\bb_\ell\in \RR^{d_\ell}$.
The function class of deep ReLU NNs is characterized as follows: 
\begin{definition} \label{def:dnn}
Let $L\in\NN$ be the depth, $N\in\NN$ be the width, $B\in\RR$ be the weight bound, and $M\in\RR$ be the truncation level.
The function class of deep ReLU NNs is defined as 
\begin{align*}\calG(L,N,M,B)=\Bigl\{\tilde g(\bx)= T_M(g(\bx)):g\  \text{of form of}\  (\ref{equ:NN})\  \text{with}\  \|\bW_\ell\|_{max},\|\bb_\ell\|_{max}\le B,\\\tilde \bd=(N,N,\cdots,N),d_0=d,d_{L+1}=1\Bigl\}\end{align*}
where $T_M$ is the truncation operator defined as $T_M(z)=\operatorname{sgn}(z)(|z|\wedge M)$.
\end{definition}

\smallskip
\noindent
\textbf{Optimal $Q^*$-function class.}
We assume a general hierarchical composition model  \citep{kohler2021rate} to characterize the low-dimensional structure for the optimal $Q^*$-function:
\begin{definition}[Hierarchical Composition Model]\label{def:hcm}
    The function class of hierarchical composition model (HCM) $\calH(d,l,\calP)$  with $d,l\in\NN^+$ and $\calP$, a subset of $[1,\infty)\times \NN^+$ satisfying $\sup_{\beta,t\in \calP}(\beta\vee t)<\infty$, is defined as follows. For $l=1$, 
\begin{align*}
\calH(d,1,\calP)=\{h:\RR^d\rightarrow \RR:h(\bx)=&g(x_{\tau(1)},\cdots,x_{\tau(t)}),\text{where} \ g:\RR^t\rightarrow\RR \ \text{is}\  (\beta,C)\text{-smooth}\\
&\text{for some} \ (\beta,t)\in\calP\  \text{and} \ \tau:[t]\rightarrow [d]\}
\end{align*}
For $l>1$, $\calH(d,l,\calP)$ is defined recursively as 
\begin{align*}
\calH(d,l,\calP)=\{h:\RR^d\rightarrow \RR:h(\bx)=&g(f_1(\bx),\cdots,f_t(\bx)),\text{where} \ g:\RR^t\rightarrow\RR \ \text{is}\  (\beta,C)\text{-smooth}\\
&\text{for some} \ (\beta,t)\in\calP\  \text{and} \ f_i\in\calH(d,l-1,\calP)\}.
\end{align*}

\end{definition}
Basically, a hierarchical composition model consists of a finite number of compositions of functions with $t$-variate and $\beta$-smoothness for $(\beta,t)\in \calP$.  The difficulty of learning is characterized by the following minmum dimension-adjusted degree of smoothness \citep{fan2024how}:
\[\gamma^*(\calH(d,l,\calP))=\min_{(\beta,t)\in \calP}\frac{\beta}{t}.\]
When clear from the context, we simply write $\gamma^*(\calH)$.

\smallskip
\noindent
\textbf{Reward similarity.}
Intuitively, to ensure statistical improvement with transfer learning, it is necessary that the reward difference $\delta^{(k)}_{t}(\bs,a)$ can be easily learned with even a small number of target samples. 
Under the function classes of the hierarchical composition model and deep neural networks, we directly characterize the easiness of the task difference using aggregated difference defined in Section \ref{sec:aggregation} as follows. 
\begin{assumption}[Reward Similarity] 
\label{asp:task-similarity-1}
For each time $t$, action $a$ and task $k$, we have that $Q^{*\operatorname{agg}}_t(\cdot,a)\in \calH_1$, and $\delta^{*\operatorname{agg}}_{t}(\cdot,a)\in \calH_2$. Further, $\gamma_1=\gamma^*(\calH_1),\gamma_2= \gamma^*(\calH_2)$ satisfy $\gamma_1 <  \gamma_2$.
\end{assumption}

\smallskip
\noindent
\textbf{Transfer deep $Q$-learning.}
Algorithm \ref{alg:rwt-trans-q-general} presents the offline-to-offline transfer deep $Q$-learning with the following specification of function classes:
\begin{equation}\label{eqn:Q^p_t}
\begin{aligned}
\hat Q^p_t & := \underset{g\in \calG(L_1,N_1,M_1,B_1)}{\arg\min} \sum_{k\in\{0\}\cup[K]}\sum_{i\in [n_k]}\left(\hat y^{(rwt-k)}_{t,i}-g(\bs_{t,i}^{(k)},a_{t,i}^{(k)})\right)^2, \\
\hat{\delta}_t & := \underset{g\in \calG(L_2,N_2,M_2,B_2)}{\arg\min} \sum_{i\in [n_0]}\left(\hat y_{t,i}^{(0)}-\hat Q^p_t(\bs_{t,i}^{(0)},a_{t,i}^{(0)})-g(\bs_{t,i}^{(0)},a_{t,i}^{(0)})\right)^2,
\end{aligned}
\end{equation}
where $g\in\calG(L,N,M,B)$ is the deep NN function class in Definition \ref{def:dnn}.

For total similarity $\omega^{(k)}_t(\bs'|\bs,a) = 1$, Algorithm \ref{alg:rwt-trans-q-general} is adequate by plugging in the identity density ratio. 
In the following sections, we proposed methods to estimate the transition density ratio for settings of total dissimilarity $\omega^{(k)}_t(\bs'|\bs,a) \ne 1$ and similarity assumption on $\omega^{(k)}_t(\bs'|\bs,a)$ where transition density transfers is possible.



\subsection{Transition Ratio Estimation without Transition Transfer} \label{sec:transition-ratio-estimation-total-dissimilarity}

We now address the estimation of the transition ratio $\omega^{(k)}_t(\bs'|\bs,a) = \frac{p^{(0)}_t(\bs'|\bs,a)}{p^{(k)}_t(\bs'|\bs,a)}$ under non-transferability of the transition probabilities. 
While density ratio methods with provable guarantees exist \citep{nguyen2010estimating,kanamori2012statistical}, the estimation of $\omega_t^{(k)}$, a {\em ratio of two conditional densies}, presents unique challenges.
Firstly, direct application of M-estimation methods is inadequate for $\omega_t^{(k)}$, as conditional densities lack a straightforward sample version, unlike unconditional densities. Secondly, existing density ratio estimation techniques provide only asymptotic bounds. In contrast, our context requires non-asymptotic bounds for density ratio estimator, a requirement not met by current methods. These challenges collectively necessitate the development of novel approaches to effectively estimate transition ratios in our setting. 

A key insight is that both conditional densities $p^{(0)}_t(\bs'|\bs,a)$ and $p^{(k)}_t(\bs'|\bs,a)$ can be expressed as ratios of joint density to marginal density:
$p^{(k)}_t(\bs'|\bs,a) = \frac{p^{(k)}_t(\bs',\bs,a)}{p^{(k)}_t(\bs,a)}$ for $k\in\{0\}\cup[K]$. 
This formulation allows for separate estimation of each density using M-estimator techniques \citep{nguyen2010estimating,kanamori2012statistical}.
We propose to estimate $p^{(0)}_t(\bs'|\bs,a)$ and $p^{(k)}_t(\bs'|\bs,a)$ independently, leveraging this density ratio representation, and establish non-asymptotic bounds.
This approach provides a pathway to overcome the challenges in directly estimating the ratio of conditional probabilities.

\smallskip \noindent \textbf{Estimating conditional transition density.}
We begin by estimating the function $\rho^{(k)}_t(\bs,a,\bs') := p^{(k)}_t(\bs'|\bs,a)$, which represents the conditional density of the next state given the current state and action. 
Algorithm \ref{alg:marginal-density-dnn} outlines the process of estimating this conditional transition probability using deep neural network approximation. 

For clarity, we present the setting with both $k$ (task) and $t$ (time step) fixed.
The data corresponding to task $k$ and time $t$ is denoted as $\{\bs^{(k)}_{t,i},a^{(k)}_{t,i},\bs'^{(k)}_{t,i}\}_{i=1}^{n_k}$, where $\bs'^{(k)}_{t,i}=\bs^{(k)}_{t+1,i}$ represents the next state. These data points are independently and identically distributed (i.i.d.) across trajectories, indexed by $i$.
In the population version, the ground-truth transition density $\rho_t^{(k)}$ minimizes the following quantity:
\begin{align*}
J(g)&:=\frac{1}{2}\int \int \int (g(\bs,a,\bs')-\rho_t^{(k)}(\bs,a,\bs'))^2 p_t^{(k)}(\bs,a)d\bs da d\bs' \\
&=\frac{1}{2}\int \int \int g(\bs,a,\bs')^2 p_t^{(k)}(\bs,a)d\bs da d\bs'-\int \int \int g(\bs,a,\bs')p_t^{(k)}(\bs,a,\bs')d\bs da d\bs' \\&\qquad+\frac{1}{2}\int \int \int \rho_t^{(k)}(\bs,a,\bs')^2 p_t^{(k)}(\bs,a)d\bs da d\bs',
\end{align*}
where $p_t^{(k)}(\bs,a)$ and $p_t^{(k)}(\bs,a,\bs')$ denote the joint densities of $(\bs^{(k)}_{t,i},a^{(k)}_{t,i})$ and $(\bs^{(k)}_{t,i},a^{(k)}_{t,i},\bs'^{(k)}_{t,i})$ respectively, with respect to the product measure of Lebesgue and counting measures.
The last term is independent of $g$ and can be dropped.  Replacing population densities with empirical versions leads to a square-loss M-estimator for $\rho^{(k)}_t(\bs,a,\bs') := p^{(k)}_t(\bs'|\bs,a)$ using deep ReLU networks:
\begin{equation*}
\underset{g\in \calG(\bar L,\bar M,\bar N,\bar B)}{\arg\min} \frac{1}{2n_k}\int \sum_{i=1}^{n_k} g(\bs^{(k)}_{t,i},a^{(k)}_{t,i},\bs')^2  d\bs'-\frac{1}{n_k}\sum_{i=1}^{n_k} g(\bs^{(k)}{t,i},a^{(k)}_{t,i},\bs'^{(k)}_{t,i}),
\end{equation*}
where the neural network architecture for density ratio estimation differs in size from the one used to estimate the optimal $Q^*$-function.
This estimator involves a high-dimensional integration over $g$, making computation infeasible. We approximate the integral by sampling from $\calS$, leading to:
\begin{equation*}
\label{equ:density-estimator}
\hat\rho_t^{(k)}:=\underset{g\in \calG(\bar L,\bar M,\bar N,\bar B)}{\arg\min} \frac{1}{2n_k}\sum_{i=1}^{n_k} g(\bs^{(k)}_{t,i},a^{(k)}_{t,i},\bs^\circ_i)^2-\frac{1}{n_k}\sum_{i=1}^{n_k} g(\bs^{(k)}_{t,i},a^{(k)}_{t,i},\bs'^{(k)}_{t,i})
\end{equation*}
where $\{\bs^\circ_i\}_{i=1}^{n_k}$ are i.i.d.~uniform samples from $\calS$.
In practice, we can refine the estimator with a projection and normalization step:
\begin{equation*}
\tilde\rho_t^{(k)}=c_N(\bs,a) \max\{\hat\rho_t^{(k)},0\}, 
\end{equation*} 
where $c_N(\bs,a)$ ensures $\int\tilde\rho_t^{(k)}d\bs'=1$ for every $(\bs,a)$, stabilizing the estimator without inflating estimation error.

\smallskip \noindent \textbf{Estimating transition density ratio without density transfer.} 
Algorithm \ref{alg:trans-ratio} details the process of estimating the transition ratio using deep neural networks under conditions of total dissimilarity. To enhance stability, we incorporate a truncation step in forming the density ratio estimator $\omega$, as the density appears in its denominator. For simplicity, we assume $\Upsilon_1$ (or a lower bound thereof) is known. The final estimator is defined as:
\begin{equation*}
    \hat\omega_{t}^{(k)}:=\frac{\hat\rho_{t}^{(0)}}{\max\{\hat\rho_{t}^{(k)},\Upsilon_1\}}. 
\end{equation*}

\begin{algorithm}[tb]
\SetKwInOut{Input}{Input}
\SetKwInOut{Output}{Output}
\Input{Transition tuples $\{\{\bs_i, a_i,\bs'_i\}_{i\in[n]}$.}

\tcc{Calculate $Q^*$-function by backward induction.}

Solve
\begin{equation}
\hat\rho:=\underset{g\in \calG(\bar L,\bar M,\bar N,\bar B)}{\arg\min} \frac{1}{2n}\sum_{i=1}^{n} g(\bs_i,a_i,\bs^\circ_i)^2-\frac{1}{n}\sum_{i=1}^{n} g(\bs_i,a_i,\bs'_i), 
\end{equation}
where $\{\bs^\circ_i\}_{i=1}^{n}$ are i.i.d.~uniformly generated over $\calS$.

\Output{$\hat\rho(\bs,a,\bs')$.}

\caption{Deep NN Estimation of the Conditional Transition Density}
\label{alg:marginal-density-dnn}
\end{algorithm}

\begin{algorithm}[ht!]
\SetKwInOut{Input}{Input}
\SetKwInOut{Output}{Output}
\Input{Target transition tuples $\{\{\bs^{(0)}_{t,i}, a^{(0)}_{t,i},\bs^{(0)}_{t+1,i}\}_{t\in [T]}\}_{i\in[n_0]}$, \\
source transition tuples $\{\{\{\bs^{(k)}_{t,i},a^{(k)}_{t,i},\bs^{(k)}_{t+1,i}\}_{t\in[T]}\}_{i\in[n_k]}\}_{k=1}^K$.}


\tcc{Calculate transition ratio for each task and each step.}

\For{$k\in\{0\}\cup[K]$}{
\For{$t\in[T]$}{ 
Call {\tt Algorithm \ref{alg:marginal-density-dnn}} with inputs $\{\bs^{(k)}_{t,i},a^{(k)}_{t,i},\bs^{(k)}_{t+1,i}\}_{i\in[n_k]}$ to obtain $\hat\rho_{t}^{(k)}(\bs,a,\bs')$ for $k\in\{0\}\cup[K]$. 

Set $\hat\omega_{t}^{(k)}:={\hat\rho_{t}^{(0)}}/{\max\{\hat\rho_{t}^{(k)},\Upsilon_1\}}$.
}
}

\Output{$\{\hat\omega_{t}^{(k)}(\bs'|\bs,a)\}$ for $t\in[T]$ and $k\in[K]$.}

\caption{Deep NN Estimation of Transition Ratios without Density Transfer.}
\label{alg:trans-ratio}
\end{algorithm}

\subsection{Transition Density Ratio Estimation with Transfer} \label{sec:transition-ratio-estimation-density-transfer}

When the target dataset contains sufficient samples, $p^{(0)}_t(\bs'\cond\bs,a)$ can be estimated with adequate accuracy. However, in typical transfer learning scenarios where few target samples are available, estimation becomes challenging. A natural approach is to assume similarity between the conditional densities of the source and target, enabling ``density transfer.'' This assumption is particularly relevant in economic or medical settings, where transitions across different tasks are often driven by common factors and thus exhibit similarities. To formalize this idea, we assume that the ratio of conditional densities possesses high dimension-adjusted degree of smoothness. For simplicity, we consider similarity with only one fixed source $k$, though this can be extended to multiple sources using a similar approach.

\begin{assumption}[Transition Boundedness and Transferability]
\label{asp:density-boundedness-transferability}
Let $\rho^{(k)}_t(\bs,a,\bs') := p^{(k)}_t(\bs'|\bs,a)$ for $k\in\{0\}\cup[K]$. We assume that 
\begin{enumerate}
\item [(i)] (Boundedness.) $\Upsilon_1\le|\rho_t^{(k)}|\le \Upsilon_2$.
\item [(ii)] (Smoothness.) $\rho_t^{(k)}\in \calH_3$ with $\gamma(\calH_3)=\gamma_3$.
\item [(iii)] (Transferability.)
${\rho_t^{(0)}} / {\rho_t^{(k)}} \in \mathcal{H}_4$ with $\gamma(\mathcal{H}_4) = \gamma_4$, and $\gamma_3 \leq \gamma_4$.
\end{enumerate}
\end{assumption}

We propose a two-step transfer algorithm for estimating the transition ratio with transfer, as detailed in Algorithm \ref{alg:trans-ratio-transfer}. The approach can be summarized as follows: First, we estimate the transition density of task $k$ (the source task), as we have usually more source data. Then, we use the target task data to debias this transition density estimate. This debiasing step effectively learns the ratio of the target and source transition densities, given the well-estimated source transition. By structuring the algorithm this way, we directly learn the ratio of the two transition densities as follows:
\begin{align*}
\hat\omega_t^{(k), {\rm tr}}:&=\arg\min_{g\in \calG(\bar L_2,\bar N_2,\bar M_2,\bar B_2)} \frac{1}{2n_0}\sum_{i=1}^{n_0} (g\cdot \hat\rho_t^{(k)})^2(\bs^{(0)}_{t,i},a^{(0)}_{t,i},\bs^\circ_i)-\frac{1}{n_0}\sum_{i=1}^{n_0} (g\cdot\hat\rho_t^{(k)})(\bs^{(0)}_{t,i},a^{(0)}_{t,i},\bs'^{(0)}_{t,i})
\end{align*}where $g\cdot \hat\rho_t^{(k)}$ is the point-wise product of $g$ and $\hat\rho_t^{(k)}$ as a function.

\begin{algorithm}[tb]
\SetKwInOut{Input}{Input}
\SetKwInOut{Output}{Output}
\Input{Target transition tuples $\{\{\bs^{(0)}_{t,i}, a^{(0)}_{t,i},\bs^{(0)}_{t+1,i}\}_{t\in [T]}\}_{i\in[n_0]}$, \\
source transition tuples $\{\{\{\bs^{(k)}_{t,i},a^{(k)}_{t,i},\bs^{(k)}_{t+1,i}\}_{t\in[T]}\}_{i\in[n_k]}\}_{k=1}^K$.}


\tcc{Calculate targeting weights.}
Call {\tt function} to calculate $\{\hat\omega_{t}^{(k)}(\bs'|\bs,a)\}$ for $t\in[T]$ and $k\in[K]$. 

\tcc{Calculate $Q^*$-function by backward induction.}

\For{$k\in[K]$}{
\For{$t\in[T]$}{ 
Call {\tt Algorithm \ref{alg:marginal-density-dnn}} with inputs $\{\bs^{(k)}_{t,i},a^{(k)}_{t,i},\bs^{(k)}_{t+1,i}\}_{i\in[n_k]}$ to obtain $\hat\rho_{t}^{(k)}(\bs,a,\bs')$and let $\hat \rho_t^{(k)}:=\max\{\hat \rho_t^{(k)},\Upsilon_1\}$. 

Calculate 
\begin{equation*}
\hat\omega_t^{(k)}:=\underset{g\in \calG(\bar L_2,\bar N_2,\bar M_2,\bar B_2)}{\arg\min} \frac{1}{2n_0}\sum_{i=1}^{n_0} (g\cdot \hat\rho_t^{(k)})^2(\bs^{(0)}_{t,i},a^{(0)}_{t,i},\bs^\circ_i)-\frac{1}{n_0}\sum_{i=1}^{n_0} (g\cdot\hat\rho_t^{(k)})(\bs^{(0)}_{t,i},a^{(0)}_{t,i},\bs'^{(0)}_{t,i})
\end{equation*}
where $g\cdot \hat\rho_t^{(k)}$ is the point-wise product of $g$ and $\hat\rho_t^{(k)}$ as a function.
}
}

\Output{$\{\hat\omega_{t}^{(k)}(\bs'|\bs,a)\}$ for $t\in[T]$ and $k\in[K]$.}

\caption{Transition Ratio Estimation with Density Transfer}
\label{alg:trans-ratio-transfer}
\end{algorithm}

\section{Theoretical Results with DNN Approximation} 
\label{sec:theory}

We begin by clarifying the random data generation process for the aggregated pool of source and target samples and defining the error terms we aim to bound. For the $k$-th task, $n_k$ trajectories are generated independently and identically distributed (i.i.d.), with $\{(\bs^{(k)}_{t,i},a^{(k)}_{t,i},\bs'^{(k)}_{t,i})\}_{i=1}^{n_k}$ sampled i.i.d. from $\PP_t^{(k)}$. We define the aggregate offline distribution as $\PP_{t}^{\text{agg}}=\sum_{k=0}^K \upsilon_k \PP_t^{(k)}$, where $n_\calM$ aggregated samples are i.i.d. from $\PP^{\operatorname{agg}}_t$.

\subsection{Error Bounds for RWT Transfer Deep $Q$-Learning}
\label{sec:Q*-upper-bound}

For a given estimation error bound of density ratios, 
we now examine the error propagation of our algorithm. 
Let $\PP_{t}^{\operatorname{agg}}$ denote the joint distribution of sample tuples $(\bs_t^{(k)}, a_t^{(k)}, \bs_{t+1}^{(k)})$ from all source and target tasks $k\in[K]$ at stage $t$.
We present some necessary assumptions for theoretical development. 
\begin{assumption}[Positive Action Coverage]
\label{asp:coverage}
There exists a constant $\underline{c}$ such that for every $t$ and almost surely for every $\bs,a$, 
\[
\PP_{t}^{\operatorname{agg}}(A_t=a|S_t=\bs) \ge  \underline{c}. 
\]
\end{assumption}

\begin{assumption}[Bounded Covariate Shift]
\label{asp:covariate-shift}
The Radon-Nikodym derivative of $\PP_t^{\operatorname{agg}}$ and $\PP_t^{(0)}$ satisfies 
\[\eta\le\frac{d\PP_t^{\operatorname{agg}}(\bs,a)}{ d\PP_t^{(0)}(\bs,a)}\le \frac{1}{\eta},\quad a.s. \]
\end{assumption}

\begin{assumption}[Regularity]
\label{asp:regularity}
We assume 
for every $t,\bs,a$, we have $Q^*_t(\bs,a)\le 1$.
\end{assumption}
 
Assumption \ref{asp:coverage} requires the the aggregated behavior policy has lower-bounded minimum propensity score. This is used in converting the optimal $Q^*$-function estimation bound to optimal $V$-function estimation bound. Assumption \ref{asp:covariate-shift} is common in transfer learning literature \citep{ma2023optimally}. 
The constant $1$ in Assumption \ref{asp:regularity} is just a normalization. As a special case, this boundedness assumption holds if the reward is upper bounded by $\max\{\frac{1}{T},1-\gamma\}$.

The following theorem explicitly related the estimation error of $Q^*$ to the estimation error of the density ratio.
The proofs are provided in Appendix B in the supplemental materials.
\begin{theorem} \label{thm:density-to-Q*}
Consider the transfer RL setting with $K+1$ finite-horizon non-stationary MDPs: $\calM^{(k)} = \braces{\calS, \calA, P^{(k)}, r^{(k)}, \gamma, T}$ for $k\in\{0\}\cup[K]$. 
Let $\hat{Q}_t^{\rm tr}$ denote the estimator obtained by Algorithm \ref{alg:rwt-trans-q-general} with DNN approximation \eqref{eqn:Q^p_t}.
Under Assumptions \ref{asp:density-boundedness-transferability} (i), \ref{asp:coverage}, \ref{asp:covariate-shift}, and \ref{asp:regularity}, with probability at least $1-7Te^{-u}$, for every stage $t\in[T]$, we have 
\begin{equation} \label{eqn:Q*-upper-bound}
\begin{aligned}
\|\hat{Q}_t^{\rm tr}-Q^*_t\|^2_{2,\PP_{t}^{(0)}}
& \lesssim 
(T-t)\max\{\kappa,1\}^{T-t}
\left(
\underbrace{\left(\frac{J\log n_0}{n_0}\right)^{\frac{2\gamma_2}{2\gamma_2+1}}}_\text{est.~err.~of { reward} difference}
+ \underbrace{\frac{1}{\eta}\left(\frac{J\log n_\calM}{n_\calM}\right)^{\frac{2\gamma_1}{2\gamma_1+1}}}_\text{est.~err.~of { reward} aggregation} 
\right . \\
& ~~~~~~\left. 
+ \underbrace{\frac{\gamma^2 T^2}{\eta}\max_{t\le\tau\le T}\hat\Omega(\tau)}_\text{est.~err.~of transition ratio} +\frac{u}{\min\{n_0,n_\calM \eta\}}
\right), 
\end{aligned}
\end{equation}
where $J:=\abs{\calA}$, $n_0$ and $n_{\cM}$ are the number of trajectories of the target tasks and the total tasks respectively, $\hat\Omega(t)=\frac{1}{n_\calM}\sum_{i=1}^{n_\calM}|\hat \omega_{t,i}^{(k_i)}-\omega_{t,i}^{(k_i)}|^2$ denotes the estimation error of transition ratio, $\gamma_1$ and $\gamma_2$ defined in Assumption \ref{asp:task-similarity-1} are the complexity of true $Q^*$ functions, and reward differences $\delta^*$'s, $\kappa:=\left(\frac{\gamma^2}{\underline{c}}+\frac{\gamma^2 \Upsilon^2}{\underline{c}\eta^2}\right)$, $\Upsilon=\Upsilon_2/\Upsilon_1$, $\Upsilon_2$, $\Upsilon_1$, $\eta$ and $\bar{c}$ are defined in Assumptions \ref{asp:density-boundedness-transferability} (i), \ref{asp:coverage}, \ref{asp:covariate-shift}, and \ref{asp:regularity}.
\end{theorem}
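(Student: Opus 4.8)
The plan is to run a backward induction on $t = T, T-1, \dots, 1$, at each stage bounding $\|\hat Q_t^{\rm tr} - Q^*_t\|^2_{2,\PP^{(0)}_t}$ in terms of (a) the one-stage statistical errors of the two nonparametric least-squares fits in \eqref{eqn:Q^p_t}, (b) the transition-ratio estimation error $\hat\Omega(t)$ through its effect on the RWT pseudo-responses $\hat y^{(rwt-k)}_{t,i}$, and (c) the propagated error $\|\hat Q_{t+1}^{\rm tr} - Q^*_{t+1}\|^2$ entering through $\max_{a}\hat Q^{(0)}_{t+1}$ in both \eqref{yk-pseudo} and \eqref{eq-mt-pseudo}. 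Concretely, I would first decompose the error at stage $t$ as $\hat Q_t^{\rm tr} - Q^*_t = (\hat Q^p_t - Q^{*\operatorname{agg}}_t) + (\hat\delta_t - \delta^{*\operatorname{agg}}_t)$, using the population identities $Q^{*\operatorname{agg}}_t - Q^{*(0)}_t = \delta^{*\operatorname{agg}}_t$ from Section \ref{sec:aggregation}. The pooled fit $\hat Q^p_t$ targets $Q^{*\operatorname{agg}}_t$ (which lies in $\calH_1$ by Assumption \ref{asp:task-similarity-1}) using $n_\calM$ samples, so a standard ReLU-network least-squares oracle inequality gives an approximation term plus a stochastic term of order $(J\log n_\calM / n_\calM)^{2\gamma_1/(2\gamma_1+1)}$, provided we correctly account for the fact that the regression responses are $\hat y^{(rwt-k)}_{t,i}$ rather than the population $y^{(rwt-k)}_{t,i}$; the discrepancy between these is exactly where $\hat\Omega(t)$ and $\|\hat Q^{\rm tr}_{t+1}-Q^*_{t+1}\|^2$ enter, via the algebraic identity $\hat y^{(rwt-k)}_{t,i} - y^{(rwt-k)}_{t,i} = \gamma\,\hat\omega^{(k)}_{t,i}(\max_a \hat Q^{(0)}_{t+1} - \max_a Q^{*(0)}_{t+1}) + \gamma(\hat\omega^{(k)}_{t,i}-\omega^{(k)}_{t,i})\max_a Q^{*(0)}_{t+1}$, bounded using boundedness of $Q^*$ (Assumption \ref{asp:regularity}) and of $\rho^{(k)}_t$ (Assumption \ref{asp:density-boundedness-transferability}(i)). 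The debiasing fit $\hat\delta_t$ targets $\delta^{*\operatorname{agg}}_t \in \calH_2$ using $n_0$ target samples with responses $\hat y^{(0)}_{t,i} - \hat Q^p_t$, contributing the term $(J\log n_0/n_0)^{2\gamma_2/(2\gamma_2+1)}$ together with a feed-through of the just-obtained bound on $\|\hat Q^p_t - Q^{*\operatorname{agg}}_t\|^2$.

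Next I would convert these $\PP^{\operatorname{agg}}_t$-norm bounds (which is the natural norm for the pooled fit) into the target $\PP^{(0)}_t$-norm appearing in the statement, losing a factor $1/\eta$ by the bounded-covariate-shift Assumption \ref{asp:covariate-shift}; this explains the $1/\eta$ in front of the aggregation and transition-ratio terms while the reward-difference term, fit directly on target data, carries no such factor. The propagation from stage $t+1$ to stage $t$ picks up a multiplicative constant: the $\max_a \hat Q^{(0)}_{t+1}$ term is re-weighted by $\gamma\hat\omega$, and to push the error back through the conditional expectation onto the sampling distribution one pays $1/\underline c$ from Assumption \ref{asp:coverage} (to move from $(\bs,a)$-coverage to $\bs$-coverage of the realized action) and $\Upsilon^2 = (\Upsilon_2/\Upsilon_1)^2$, $1/\eta^2$ from the change of measure induced by the density ratio — yielding exactly the per-stage amplification factor $\kappa = \gamma^2/\underline c + \gamma^2\Upsilon^2/(\underline c\eta^2)$. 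Iterating the recursion from $T$ down to $t$ replaces $\kappa$ by $(T-t)\max\{\kappa,1\}^{T-t}$ and turns the stage-wise $\hat\Omega$ term into $\max_{t\le\tau\le T}\hat\Omega(\tau)$, with the extra $\gamma^2 T^2$ absorbing crude bounds on the magnitude of the $Q^*$ and reward terms accumulated along the horizon.

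The high-probability statement comes from a union bound over the $T$ stages and the handful of concentration events used at each stage — the empirical-process bound for each of the two least-squares fits, and the events controlling the two deviation terms in the pseudo-response discrepancy — giving the claimed $7Te^{-u}$ and the additive $u/\min\{n_0, n_\calM\eta\}$ slack. The main obstacle I anticipate is the least-squares oracle inequality for the pooled ReLU-network fit with \emph{dependent, estimated} responses: the responses $\hat y^{(rwt-k)}_{t,i}$ depend on $\hat Q^{(0)}_{t+1}$, which was itself fit on overlapping data, so the usual i.i.d.\ symmetrization must be done carefully — presumably by conditioning on the stage-$(t+1)$ estimator and exploiting the backward-inductive structure so that, given $\hat Q^{(0)}_{t+1}$, the stage-$t$ tuples are genuinely i.i.d.\ from $\PP^{\operatorname{agg}}_t$ — and one must simultaneously handle the absence of a realizability/completeness assumption on the network class (so that the approximation error of $\calH_1$ by $\calG(L_1,N_1,M_1,B_1)$ must be traded off explicitly against the stochastic term, as emphasized in the introduction). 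A secondary delicate point is ensuring the multinomial coupling of $(n_1,\dots,n_K)$ from Section \ref{sec:aggregation} legitimately lets one treat the pooled sample as $n_\calM$ i.i.d.\ draws from $\PP^{\operatorname{agg}}_t$, which is precisely why that modeling device was introduced.
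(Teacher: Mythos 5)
Your proposal matches the paper's proof essentially step for step: the same decomposition $\hat Q_t-Q^*_t=(\hat Q^p_t-Q^{*\operatorname{agg}}_t)+(\hat\delta_t-\delta^{*\operatorname{agg}}_t)$, the same bias--variance split of the RWT pseudo-responses feeding $\hat\Omega(t)$ and the propagated error into the pooled oracle inequality, the same $\eta$- and $\underline c$-conversions producing $\kappa$, and the same backward recursion and union bound. The one small divergence is your tentative fix for the dependent responses: the paper does not condition on $\hat Q^{(0)}_{t+1}$ (which would not restore independence, since it is fit on the same trajectories) but instead puts all estimated quantities into the bias term, handled by Cauchy--Schwarz, and controls the resulting empirical norm of $\max_a\hat Q_{t+1}-\max_a Q^*_{t+1}$ uniformly over the class of maxima of networks via covering-number bounds --- exactly the ``empirical process technique'' replacing sample splitting that you correctly anticipated would be needed.
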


The error upper bound comprises three additive components: estimation errors from task difference, task aggregation, and transition density ratio. { While the exponential dependence in the coefficient remains unavoidable without additional assumptions, addressing this horizon dependency remains an open challenge for future research. }
Though this bound holds for any density ratio estimator, we will further investigate $\hat\Omega(t)$ in Theorem \ref{thm:density-to-Q*} under three different settings defined in Section \ref{sec:similarity}. 
The first setting assumes {\it total similarity} with $\omega_t^*(k) = 1$ for all $k\in[K]$.
The result can be directly derived from Theorem \ref{thm:density-to-Q*} with $\max_{t\le\tau\le T}\hat\Omega(\tau) = 0$.
Theoretical results under the other two settings of transferable and non-transferable transition densities will be provided in Corollary \ref{thm:density-to-Q*-transition-diff} and \ref{thm:density-to-Q*-transition-trans}, respectively, after we establish the estimation error of transition ration in the next section. 

\begin{remark}[Technique distinctions.] 
While \cite{fan2020theoretical} explores error propagation in deep RL with continuous state spaces, our analysis advances the field in several fundamental ways. First, we explicitly model temporal dependence for real-world relevance, contrasting with \cite{fan2020theoretical}'s experience replay approach. Rather than using sample splitting -- a common but statistically inefficient method in offline RL -- we employ empirical process techniques to handle statistical dependencies.
We also broaden the theoretical scope by removing the function class completeness assumption used in \cite{fan2020theoretical}, though this introduces additional analytical complexity. 
Second, our transfer learning context requires careful consideration of transition density estimation errors, an aspect not present in previous work. 
Finally, we implement backward inductive $Q$-learning for non-stationary MDPs, departing from \cite{fan2020theoretical}'s fixed-point $Q^*$ iteration. Where fixed-point iteration introduces a $1/(1-\gamma)$ term from solving fixed-point equations, our approach estimates $Q^*_t$ at each backward step $t$ using all available data in a batch setting. 
\end{remark}

\begin{remark}[Advantage of transfer RL under total transition similarity]
Consider the setting of {\it total transition similarity}, where $\omega_t^*(k) = 1$ for all $k\in[K]$. The bound consists of two major terms exhibiting classic nonparametric rates, represented by the leading terms in the expression, in addition to terms containing tail probability. These rates are associated with sample sizes $n_0$ for $\gamma_2$ and $n_{\calM}$ for $\gamma_1$. 
In contrast, the convergence rate for backward inductive $Q$-learning without transfer is $\left(\frac{J\log n_0}{n_0}\right)^{\frac{2\gamma_1}{2\gamma_1+1}}$. 
When $\eta>0$ is constant, the advantage of transfer is demonstrated when
$\left(\frac{J\log n_0}{n_0}\right)^{\frac{2\gamma_2}{2\gamma_2+1}}+\frac{1}{\eta}\left(\frac{J\log n_\calM}{n_\calM}\right)^{\frac{2\gamma_1}{2\gamma_1+1}}\ll \left(\frac{J\log n_0}{n_0}\right)^{\frac{2\gamma_1}{2\gamma_1+1}}$.
This advantage is apparent when $n_0\lesssim n_\calM$ and $\gamma_2>\gamma_1$. 
\end{remark}

\begin{remark}[Extension to online transfer RL]
This offline analysis naturally extends to online settings via the Explore-Then-Commit (ETC) framework \citep{chen2022transferred}, detailed in Section \ref{sec:etc}. The optimal transfer strategy may adapt to the volume of target data. For example, when target samples are scarce ($n_0\lesssim n_\calM$), 
utilizing the complete source dataset remains optimal. Once target trajectories exceed this threshold, discarding source data in favor of target-only learning becomes more efficient, achieving an estimation error of $(\frac{J\log n_0}{n_0})^{\frac{2\gamma_1}{2\gamma_1+1}}$ due to the target $Q^*$ function's HCM smoothness of $\gamma_1$. While more sophisticated online transfer algorithms are possible, we defer comprehensive analysis of online transfer RL to future work.
\end{remark}

\subsection{Error Bounds for Transition Density Ratio Estimations}\label{sec:error-bounds-transition-ratio-estimation}

Before we proceed to provide estimation error bounds under the settings of transferable and non-transferable transition, we need to establish the estimation errors of transition density ratios here.  These error bounds for transition density ratio estimation are of independent interest to domain adaptation and transfer learning in policy evaluation.

\subsubsection{Transition Ratio Estimation without Density Transfer}

Our analysis demonstrates that Algorithm \ref{alg:trans-ratio}'s nonparametric least squares approach performs effectively when ReLU neural networks can sufficiently approximate $\rho_t^{(k)}$. For our theoretical analysis, we continue to utilize the hierarchical composition model class and maintain Assumption \ref{asp:density-boundedness-transferability}.
Following \cite{nguyen2010estimating}, we assume the transition density ratio is bounded both above and below almost surely. In practical applications, one can filter out states that occur rarely in the target task.
The following theorem establishes theoretical guarantees for estimating transition densities and their ratios in scenarios where transition densities are so different that no transition density transfer is performed.

\begin{theorem}
\label{thm:error-transition-and-ratio}
Consider the setting of non-transferable transitions under Assumption \ref{asp:density-boundedness-transferability} (i)(ii), \ref{asp:coverage}, and \ref{asp:covariate-shift}. 
We estimate the transition densities $\hat{\rho}_{t}^{(k)}$ and their ratios $\hat\omega_t^{(k)}$ without transfering transition densities via the method described in Section \ref{sec:transition-ratio-estimation-total-dissimilarity}. 
Then, with probability at least $1-e^{-u}$, it holds that
\[
\max\left\{\EE^{(k)}[(\rho_{t}^{(k)}-\hat\rho_{t}^{(k)})^2],\;\frac{1}{n_k}\sum_{i=1}^{n_k}(\rho_{t,i}^{(k)}-\hat\rho_{t,i}^{(k)})^2\right\}
\lesssim 
\frac{u}{n_k} + \left(\frac{\log n_k}{n_k}\right)^{\frac{2\gamma_3}{2\gamma_3+1}}.
\]
Further, for $\hat\Omega(t)=\frac{1}{n_\calM}\sum_{i=1}^{n_\calM}|\hat \omega_{t,i}^{(k_i)}-\omega_{t,i}^{(k_i)}|^2$, with probability at least $1-2T(K+1)e^{-u}$, it holds that
\[\max_{t\in [T]}\hat\Omega(t)
\lesssim \left(\frac{K\log(n_\calM)}{n_\calM}\right)^{\frac{2\gamma_3}{2\gamma_3+1}} + \underbrace{\frac{1}{\eta}\left(\frac{\log n_0}{n_0}\right)^{\frac{2\gamma_3}{2\gamma_3+1}}}_{\text{major est. err. for trans. ratio}} + \frac{u}{\min\{n_0,n_\calM/K\}},\]
where $\gamma_3$ and $\eta$ are defined in Assumption \ref{asp:density-boundedness-transferability} and \ref{asp:covariate-shift}, respectively. 
\end{theorem}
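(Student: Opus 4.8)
The plan is to prove the two assertions separately: first the single–density bound for each $\hat\rho_t^{(k)}$ via a localized least–squares (M–estimation) argument, then push that bound through the explicit algebra defining $\hat\omega_t^{(k)}$, using the covariate–shift bound to move the target–density error onto the aggregated sample. For the first part, fix $t,k$ and note that $\hat\rho_t^{(k)}$ minimizes over $\calG(\bar L,\bar N,\bar M,\bar B)$ the empirical criterion $\hat J(g)=\frac{1}{2n_k}\sum_i g(\bs_{t,i}^{(k)},a_{t,i}^{(k)},\bs_i^\circ)^2-\frac{1}{n_k}\sum_i g(\bs_{t,i}^{(k)},a_{t,i}^{(k)},\bs_{t+1,i}^{(k)})$, which (since $\calS=[0,1]^d$ has unit volume and $\bs_i^\circ\sim\mathrm{Unif}(\calS)$ is independent of the data) is an unbiased estimate of the population criterion $J(g)$ of Section~\ref{sec:transition-ratio-estimation-total-dissimilarity}, itself globally minimized by $\rho_t^{(k)}$ with excess risk $J(g)-J(\rho_t^{(k)})=\tfrac12\|g-\rho_t^{(k)}\|_{\nu_t^{(k)}}^2$, $\nu_t^{(k)}=p_t^{(k)}(\bs,a)\,d\bs\,da\times\mathrm{Unif}(\bs')$ a probability measure. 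Starting from $\hat J(\hat\rho_t^{(k)})\le\hat J(\tilde\rho)$ with $\tilde\rho$ the best sup–norm ReLU approximant of $\rho_t^{(k)}$ (so $\bar M\ge\Upsilon_2$), the standard rearrangement gives the basic inequality $\tfrac12\|\hat\rho_t^{(k)}-\rho_t^{(k)}\|_{\nu_t^{(k)}}^2\le\big[(J-\hat J)(\hat\rho_t^{(k)})-(J-\hat J)(\rho_t^{(k)})\big]-\big[(J-\hat J)(\tilde\rho)-(J-\hat J)(\rho_t^{(k)})\big]+\tfrac12\|\tilde\rho-\rho_t^{(k)}\|_\infty^2$.

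Next I would control the first bracket by the localized supremum $\sup\{|(J-\hat J)(g)-(J-\hat J)(\rho_t^{(k)})|:g\in\calG,\ \|g-\rho_t^{(k)}\|_{\nu_t^{(k)}}\le r\}$. The linear part ($g$) and the quadratic part ($g^2$, which is $2\bar M$–Lipschitz in $g$ on the truncated class) being bounded and Lipschitz, symmetrization, contraction, and a chaining bound driven by the covering numbers of $\calG(\bar L,\bar N,\bar M,\bar B)$ (scaling, up to log factors, with the number of network weights $\mathsf p$) give an expectation bound of order $r\sqrt{\mathsf p\log n_k/n_k}+\mathsf p\log n_k/n_k$, upgraded to high probability with an extra $\sqrt{r^2u/n_k}+u/n_k$ via Talagrand's inequality; the second bracket is a single fixed pair of bounded functions, handled by Bernstein. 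Solving the resulting quadratic in $\|\hat\rho_t^{(k)}-\rho_t^{(k)}\|_{\nu_t^{(k)}}$ yields $\|\hat\rho_t^{(k)}-\rho_t^{(k)}\|_{\nu_t^{(k)}}^2\lesssim \mathsf p\log n_k/n_k+u/n_k+\|\tilde\rho-\rho_t^{(k)}\|_\infty^2$. Invoking the ReLU approximation theory for the hierarchical composition class $\calH_3$ of smoothness index $\gamma_3$ \citep{kohler2021rate}, choose $\bar L,\bar N$ with $\mathsf p\asymp(n_k/\log n_k)^{1/(2\gamma_3+1)}$ and $\|\tilde\rho-\rho_t^{(k)}\|_\infty^2\lesssim\mathsf p^{-2\gamma_3}$, which balances the terms at rate $(\log n_k/n_k)^{2\gamma_3/(2\gamma_3+1)}$. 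Finally, Assumption~\ref{asp:density-boundedness-transferability}(i) renders $\nu_t^{(k)}$, the joint law $\PP_t^{(k)}$, and --- after one more uniform Bernstein step over $\calG$ --- the empirical measure on $\{(\bs_{t,i}^{(k)},a_{t,i}^{(k)},\bs_{t+1,i}^{(k)})\}_i$ mutually equivalent up to $\Upsilon_1,\Upsilon_2$, giving both quantities inside the max, each at confidence $1-e^{-u}$.

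For the ratio bound, since $\rho_t^{(k)}\in[\Upsilon_1,\Upsilon_2]$ and the denominator of $\hat\omega_t^{(k)}$ is truncated at $\Upsilon_1$, a direct calculation gives $|\hat\omega_t^{(k)}-\omega_t^{(k)}|\le\frac{\Upsilon_2}{\Upsilon_1^2}\big(|\hat\rho_t^{(0)}-\rho_t^{(0)}|+|\hat\rho_t^{(k)}-\rho_t^{(k)}|\big)$ pointwise. Squaring, averaging over the $n_\calM$ aggregated samples, and grouping by task number $k_i$ (the $k_i=0$ terms vanish since $\omega_t^{(0)}\equiv1$), $\hat\Omega(t)$ is bounded by a constant times $\frac{1}{n_\calM}\sum_{k=1}^K\sum_{i:k_i=k}\big[(\hat\rho_t^{(k)}-\rho_t^{(k)})^2+(\hat\rho_t^{(0)}-\rho_t^{(0)})^2\big]$ at the task-$k$ trajectory points. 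The $\hat\rho_t^{(k)}$-part equals $\sum_{k=1}^K(n_k/n_\calM)\times(\text{empirical error of }\hat\rho_t^{(k)})$, which by the first part, concavity of $x\mapsto x^{1-2\gamma_3/(2\gamma_3+1)}$ and $\sum_k n_k\le n_\calM$ is $\lesssim(K\log n_\calM/n_\calM)^{2\gamma_3/(2\gamma_3+1)}+Ku/n_\calM$. For the $\hat\rho_t^{(0)}$-part, the sub-average over target samples is exactly the empirical error of $\hat\rho_t^{(0)}$; the sub-average over source samples is, conditionally on the target data, an average of a fixed bounded function over fresh samples, so Bernstein together with $d\PP_t^{\operatorname{agg}}/d\PP_t^{(0)}\le1/\eta$ (Assumption~\ref{asp:covariate-shift}) gives $\lesssim\frac{1}{\eta}\EE^{(0)}[(\hat\rho_t^{(0)}-\rho_t^{(0)})^2]+u/n_\calM\lesssim\frac{1}{\eta}(\log n_0/n_0)^{2\gamma_3/(2\gamma_3+1)}+u/(\eta n_0)+u/n_\calM$. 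Collecting terms, taking a union bound over $t\in[T]$ and $k\in\{0\}\cup[K]$ (and the few Bernstein events), and merging $Ku/n_\calM$, $u/(\eta n_0)$, $u/n_\calM$ into $u/\min\{n_0,n_\calM/K\}$ yields the claim at confidence $1-2T(K+1)e^{-u}$; the multinomial randomness of $(n_1,\dots,n_K)$ is absorbed by conditioning on it (or intersecting with the event $n_k\gtrsim n_\calM/K$).

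The main obstacle is the localized empirical-process control of the \emph{quadratic} least-squares objective over the ReLU class in the first part: one must accommodate the data dependence of $\hat\rho_t^{(k)}$ without sample splitting, extract the sharp logarithmic dependence from covering-number bounds for deep nets, and balance against the hierarchical-composition approximation error to hit the exponent $2\gamma_3/(2\gamma_3+1)$. A secondary nuisance is the careful bookkeeping of the several mutually equivalent norms and, in the ratio step, the mild dependence incurred by evaluating $\hat\rho_t^{(0)}$ on aggregated samples that overlap the data producing it --- handled by the target/source split and conditioning.
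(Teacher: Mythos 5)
Your proposal follows essentially the same route as the paper's proof: the same quadratic M-estimation criterion with excess risk $J(g)-J(\rho)=\tfrac12\|g-\rho\|_\nu^2$, a basic inequality against the best ReLU approximant, localized empirical-process control via covering numbers of the network class balanced against the HCM approximation error to hit $(\log n_k/n_k)^{2\gamma_3/(2\gamma_3+1)}$, and then the pointwise ratio decomposition with truncation at $\Upsilon_1$, task-wise aggregation via concavity of $x\mapsto x^{1/(2\gamma_3+1)}$, and the covariate-shift bound to move the target-density error onto the source samples. The only differences are cosmetic (explicit symmetrization/contraction/Talagrand and a conditional Bernstein step for the cross-evaluation term, where the paper invokes its packaged uniform Lemmas C.1, C.3, C.4), so the argument is correct and matches the paper.
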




\begin{remark}
When $n_\calM$ exceeds $n_0$ (i.e., $n_\calM\gtrsim n_0$), the bound consists primarily of a standard nonparametric term related to $n_0$. Importantly, the bound is independent of $\min n_k$ since $\hat \Omega(t)$ aggregates all source samples -- tasks with smaller $n_k$ values simply contribute proportionally less to the overall sum.
\end{remark}

\subsubsection{Transition Density Ratio Estimation with Transfers}

The following theorem establishes theoretical guarantees for estimating the transition ratio in the context of transferable transitions where transition transfer is performed.

\begin{theorem}
\label{thm:density-transfer-estimation-error-bound}
Consider the setting of transferable transitions under Assumptions \ref{asp:density-boundedness-transferability}, \ref{asp:coverage} and \ref{asp:covariate-shift}. We estimate the transition densities $\hat{\rho}_{t}^{(k),{\rm tr}}$ and their ratios $\hat\omega_t^{(k),{\rm tr}}$ using transition transfers via the method described in Section \ref{sec:transition-ratio-estimation-density-transfer}.
Then, with probability at least $1-e^{-u}$, it holds that
\[
\EE^{(0)}[(\hat\omega_t^{(k),{\rm tr}}-\omega_t^{(k)})^2]
\lesssim
\EE^{(0)}[|\hat\rho_t^{(k),{\rm tr}}-\rho_t^{(k)}|^2] + \left(\frac{\log n_0}{n_0}\right)^{\frac{2\gamma_4}{2\gamma_4+1}}+\frac{u}{n_0}.
\]
Further, for $\hat\Omega^{\rm tr}(t)=\frac{1}{n_\calM}\sum_{i=1}^{n_\calM}|\hat \omega_{t,i}^{(k_i),{\rm tr}}-\omega_{t,i}^{(k_i)}|^2$, with probability at least $1-3T(K+1)e^{-u}$, it holds that
\begin{align*}
\max_{t\in [T]}\;\hat\Omega^{\rm tr}(t)
& \lesssim
\underbrace{\frac{1}{\eta^2}\left(\frac{K\log n_\calM}{n_\calM}\right)^{\frac{2\gamma_3}{2\gamma_3+1}}}_{\text{est.~err.~of aggregated transition ratio}} + \underbrace{\frac{1}{\eta}\left(\frac{\log n_0}{n_0}\right)^{\frac{2\gamma_4}{2\gamma_4+1}}}_{\text{ est.~err.~of transition difference}} \\
& \hspace{18ex} + \frac{u}{\min\{n_0\eta,n_\calM\eta^2/K\}}+\frac{n_0^{\frac{1}{2\gamma_4+1}}K\log n_\calM}{n_\calM}, 
\end{align*}
where $\gamma_3$ and $\gamma_4$ defined in Assumption \ref{asp:density-boundedness-transferability} (ii) are the complexity of transitions and transition ratios, and $\eta$ is defined in Assumption \ref{asp:covariate-shift}. 
\end{theorem}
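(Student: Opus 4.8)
The plan is to read $\hat\omega_t^{(k),{\rm tr}}$ as a weighted nonparametric least-squares estimator and then propagate the first-stage density error through a coupling argument. Completing the square in the population counterpart of the objective in Algorithm~\ref{alg:trans-ratio-transfer} --- using that $\bs_i^\circ$ is uniform on $\calS=[0,1]^d$, so that the first sum is an unbiased Monte-Carlo proxy for $\tfrac12\,\EE^{(0)}_{(\bs,a)}\!\int_\calS(g\hat\rho_t^{(k)})^2(\bs,a,\bs')\,d\bs'$ and the second sum estimates $\EE^{(0)}[(g\hat\rho_t^{(k)})(\bs,a,\bs')]$ --- shows that the population risk equals $\tfrac12\,\EE^{(0)}_{(\bs,a)}\!\int_\calS\big(g(\bs,a,\bs')\hat\rho_t^{(k)}(\bs,a,\bs')-\rho_t^{(0)}(\bs,a,\bs')\big)^2\,d\bs'$ up to a $g$-free constant. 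Hence the estimator targets $\rho_t^{(0)}/\hat\rho_t^{(k)}$, which by $\rho_t^{(0)}/\hat\rho_t^{(k)}-\omega_t^{(k)}=\omega_t^{(k)}(\rho_t^{(k)}-\hat\rho_t^{(k)})/\hat\rho_t^{(k)}$ is close to $\omega_t^{(k)}=\rho_t^{(0)}/\rho_t^{(k)}\in\calH_4$. Because source and target trajectories are independent, I would condition on the source data so that $\hat\rho_t^{(k)}$ becomes a fixed function in $[\Upsilon_1,\bar M_2]$ (the lower truncation in Algorithm~\ref{alg:trans-ratio-transfer} and the $T_{\bar M_2}$ truncation of Definition~\ref{def:dnn}); on this event the weighted class $\{g\cdot\hat\rho_t^{(k)}:g\in\calG(\bar L_2,\bar N_2,\bar M_2,\bar B_2)\}$ has metric entropy comparable to that of the ReLU class itself, and the weighted empirical risk is a bona fide least-squares functional with curvature lower-bounded by a multiple of $\Upsilon_1^2$.

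For the first display I would run the standard oracle inequality for constrained deep-ReLU least squares (as in the proof of Theorem~\ref{thm:error-transition-and-ratio}), obtaining $\EE^{(0)}_{(\bs,a)}\!\int(\hat\rho_t^{(k)}\hat\omega_t^{(k),{\rm tr}}-\rho_t^{(0)})^2\,d\bs' \lesssim \mathrm{approx}+\mathrm{stoch}+u/n_0$. The approximation term splits, via the identity above, into the $\calH_4$-approximation rate (from the deep-ReLU approximation bound for hierarchical composition models underlying $\gamma^*$, since $\omega_t^{(k)}\in\calH_4$ with $\gamma(\calH_4)=\gamma_4$) plus a multiple of $\EE^{(0)}[|\hat\rho_t^{(k)}-\rho_t^{(k)}|^2]$; choosing $\bar N_2,\bar L_2$ to balance approximation against the $\asymp n_0^{1/(2\gamma_4+1)}\log n_0/n_0$ statistical complexity gives the $(\log n_0/n_0)^{2\gamma_4/(2\gamma_4+1)}$ rate. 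The stochastic term is a Bernstein/Talagrand localized empirical-process bound using boundedness of $g$ and $\hat\rho_t^{(k)}$; the extra randomness in $\{\bs_i^\circ\}$ is dispatched by Fubini (conditioning on the target trajectories) and a bounded-differences concentration, and is of the same order as the main stochastic term. Dividing the resulting $L^2$ bound for $\hat\rho_t^{(k)}\hat\omega_t^{(k),{\rm tr}}-\rho_t^{(0)}$ by $\rho_t^{(k)},\hat\rho_t^{(k)}\ge\Upsilon_1$ and adding and subtracting $\rho_t^{(0)}/\rho_t^{(k)}$ produces the first display, with $\EE^{(0)}[|\hat\rho_t^{(k),{\rm tr}}-\rho_t^{(k)}|^2]$ the carried-over first-stage error.

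For $\max_{t\in[T]}\hat\Omega^{\rm tr}(t)$ I would: union bound over the $T(K+1)$ pairs $(t,k)$; substitute the single-task density estimation bound from Theorem~\ref{thm:error-transition-and-ratio}, $\EE^{(k)}[(\rho_t^{(k)}-\hat\rho_t^{(k)})^2]\lesssim u/n_k+(\log n_k/n_k)^{2\gamma_3/(2\gamma_3+1)}$, for the plug-in term; and convert the per-task $L^2(\PP_t^{(0)})$ and $L^2(\PP_t^{(k)})$ bounds into a bound on $\hat\Omega^{\rm tr}(t)=\tfrac1{n_\calM}\sum_i|\hat\omega_{t,i}^{(k_i),{\rm tr}}-\omega_{t,i}^{(k_i)}|^2$ by first replacing the empirical average by its expectation $\EE^{\operatorname{agg}}=\sum_k\upsilon_k\EE^{(k)}$ (a uniform-over-the-bounded-net concentration) and then using Assumption~\ref{asp:covariate-shift} together with $\PP_t^{(k)}\le\upsilon_k^{-1}\PP_t^{\operatorname{agg}}$. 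These covariate-shift passages produce the $\eta^{-1}$ and $\eta^{-2}$ factors, and the $\upsilon_k\asymp 1/K$ weighting plus the worst-case balanced split (which maximizes $\sum_k\upsilon_k(\log n_k/n_k)^{2\gamma_3/(2\gamma_3+1)}$) yields the $(K\log n_\calM/n_\calM)^{2\gamma_3/(2\gamma_3+1)}$ term. The residual $n_0^{1/(2\gamma_4+1)}K\log n_\calM/n_\calM$ term is the price of the coupling: it is the product of the ReLU-net statistical complexity $\asymp n_0^{1/(2\gamma_4+1)}$ for the $\gamma_4$-smooth ratio and the per-source-sample density error $\asymp\log n_\calM/n_k\asymp K\log n_\calM/n_\calM$, picked up when the cross term between $(\hat\rho_t^{(k)}-\rho_t^{(k)})$ and the empirical process indexed by the ratio-net class is bounded by Cauchy--Schwarz.

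\textbf{Main obstacle.} The delicate point is exactly this two-stage coupling: $\hat\omega_t^{(k),{\rm tr}}$ minimizes over the \emph{data-dependent} class $\{g\cdot\hat\rho_t^{(k)}\}$, so the basic inequality controls $\|\hat\rho_t^{(k)}\hat\omega_t^{(k),{\rm tr}}-\rho_t^{(0)}\|_{L^2}$ rather than $\|\hat\omega_t^{(k),{\rm tr}}-\omega_t^{(k)}\|_{L^2}$, and the conversion generates a cross term between the first-stage error $(\hat\rho_t^{(k)}-\rho_t^{(k)})$ and the second-stage empirical process. Bounding this cross term sharply --- so that only the net complexity $n_0^{1/(2\gamma_4+1)}$, and not a full factor of the density-estimation rate, multiplies the density error --- while keeping the source/target independence argument clean and tracking the $\eta$- and $K$-dependent covariate-shift constants through the aggregation, is the crux; the remaining pieces (the deep-ReLU oracle inequality, the $\calH_4$-approximation rate, and the Monte-Carlo control of the $\{\bs_i^\circ\}$ term) are by now standard.
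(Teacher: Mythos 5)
Your proposal is correct and takes essentially the same route as the paper's proof: an oracle inequality for the weighted least-squares objective over the product class $\{g\cdot\hat\rho_t^{(k)}\}$ (with $\hat\rho_t^{(k)}$ frozen by independence of source and target data), deep-ReLU approximation of $\omega_t^{(k)}\in\calH_4$, division by the $\Upsilon_1$ lower bound to pass from $\|\hat\rho_t^{(k)}\hat\omega_t^{(k),{\rm tr}}-\rho_t^{(0)}\|$ to $\|\hat\omega_t^{(k),{\rm tr}}-\omega_t^{(k)}\|$, covariate-shift factors of $\eta^{-1}$ and $\eta^{-2}$, and a union bound with aggregation over the $K$ sources. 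The only cosmetic difference is that in the paper the residual term $n_0^{1/(2\gamma_4+1)}K\log n_\calM/n_\calM$ arises from the population-to-empirical norm conversion applied on the $n_k$ source samples with the network size tuned to $n_0$, rather than from a Cauchy--Schwarz cross term, but this does not change the substance of the argument.
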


\begin{remark}
The convergence rate of the transition density ratio comprises two primary terms: one representing the estimation error of the aggregated transition density ratio, and another accounting for the correction of discrepancy between target and aggregated transition ratios using target samples. In the setting of transferable transitions, the density transfer shows clear advantages over the error bounds in Theorem \ref{thm:error-transition-and-ratio} (without transfer), as $n_0\lesssim n_\calM$ and $\gamma_3 < \gamma_4 \le \calO(1)$.
\end{remark}

\subsection{Error Bounds for RWT Transfer $Q$-Learning with Estimated Transition Density Ratios} \label{sec:general-transition-similarity-Q*-bound}

Using the theoretical results from Section \ref{sec:error-bounds-transition-ratio-estimation}, we directly apply Theorem \ref{thm:density-to-Q*} to two distinct settings: transferable and non-transferable transition densities. We begin with the non-transferable setting, where $\omega^{(k)}_t(\bs'|\bs,a) \ne 1$ but remains bounded, as shown in the following corollary.

\begin{corollary}[Non-transferable transition densities] \label{thm:density-to-Q*-transition-diff}
Under the setting of Theorem \ref{thm:density-to-Q*}, let $\hat{Q}_t^{\rm (tr1)}$ denote the estimator obtained by Algorithm \ref{alg:rwt-trans-q-general} with DNN approximation and value transfers, where the transition ratios $\hat{\rho}_t^{(k)}$ and importance weights $\hat{\omega}_t^{(k)}$ are estimated without transition transfers using the method described in Section \ref{sec:transition-ratio-estimation-total-dissimilarity}. 
Then, with probability at least $1-2T(K+1)e^{-u}$, it holds that
\begin{align*}
\|\hat{Q}_t^{\rm (tr1)}-Q^*_t\|^2_{2,\PP_{t}^{(0)}}
&\lesssim (T-t)\max\{\kappa,1\}^{T-t}\left(\underbrace{\left(\frac{J\log n_0}{n_0}\right)^{\frac{2\gamma_2}{2\gamma_2+1}}}_\text{est.~err.~of {reward} differences}
+ \underbrace{\frac{1}{\eta}\left(\frac{J\log n_\calM}{n_\calM}\right)^{\frac{2\gamma_1}{2\gamma_1+1}}}_\text{est.~err.~of {reward} aggregation}\right.\\
& \left. +\underbrace{\frac{\gamma^2T^2}{\eta^2}\left(\frac{\log n_0}{n_0}\right)^{\frac{2\gamma_3}{2\gamma_3+1}}+\frac{\gamma^2T^2}{\eta}\left(\frac{K\log(n_\calM)}{n_\calM}\right)^{\frac{2\gamma_3}{2\gamma_3+1}}}_\text{est.~err.~of transition ratio, no transfer} \right. \\
&  \left. +\max\{\frac{\gamma^2T^2}{\eta},1\}\frac{u}{\min\{n_0,n_\calM/K,n_\calM \eta\}}\right),
\end{align*}
where $\gamma_1$ and $\gamma_2$ defined in Assumption \ref{asp:task-similarity-1} are the complexity of true $Q^*$ functions, $\gamma_3$ and $\gamma_4$ defined in Assumption \ref{asp:density-boundedness-transferability} (ii) are the complexity of transitions and transition ratios, and $\eta$ is defined in Assumption \ref{asp:covariate-shift}. 
\end{corollary}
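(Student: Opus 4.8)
The plan is to obtain this corollary by feeding the transition-ratio error bound of Theorem~\ref{thm:error-transition-and-ratio} into the general error-propagation bound of Theorem~\ref{thm:density-to-Q*}, which was deliberately stated to hold for an \emph{arbitrary} density-ratio estimator, with $\hat\Omega(\tau)$ kept symbolic. First I would instantiate Theorem~\ref{thm:density-to-Q*} with $\hat\omega_t^{(k)}$ the weights produced by Algorithm~\ref{alg:trans-ratio}: on an event $\cE_1$ of probability at least $1-7Te^{-u}$, simultaneously over $t\in[T]$,
\[
\|\hat{Q}_t^{\rm (tr1)}-Q^*_t\|^2_{2,\PP_t^{(0)}}\;\lesssim\;(T-t)\max\{\kappa,1\}^{T-t}\Big(\text{(reward diff.)}+\text{(reward agg.)}+\tfrac{\gamma^2T^2}{\eta}\max_{t\le\tau\le T}\hat\Omega(\tau)+\tfrac{u}{\min\{n_0,n_\calM\eta\}}\Big).
\]
Since Assumptions~\ref{asp:density-boundedness-transferability}(i)(ii), \ref{asp:coverage}, and \ref{asp:covariate-shift} hold in this regime, Theorem~\ref{thm:error-transition-and-ratio} applies and gives, on an event $\cE_2$ of probability at least $1-2T(K+1)e^{-u}$,
\[
\max_{\tau\in[T]}\hat\Omega(\tau)\;\lesssim\;\Big(\tfrac{K\log n_\calM}{n_\calM}\Big)^{\frac{2\gamma_3}{2\gamma_3+1}}+\tfrac{1}{\eta}\Big(\tfrac{\log n_0}{n_0}\Big)^{\frac{2\gamma_3}{2\gamma_3+1}}+\tfrac{u}{\min\{n_0,n_\calM/K\}}.
\]

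Next I would restrict to $\cE_1\cap\cE_2$, use $\max_{t\le\tau\le T}\hat\Omega(\tau)\le\max_{\tau\in[T]}\hat\Omega(\tau)$, and substitute the second bound into the first. Distributing the prefactor $\gamma^2T^2/\eta$ across the three summands of the $\hat\Omega$-bound reproduces exactly the two transition-ratio terms claimed: $\tfrac{\gamma^2T^2}{\eta}\cdot\tfrac{1}{\eta}(\log n_0/n_0)^{2\gamma_3/(2\gamma_3+1)}=\tfrac{\gamma^2T^2}{\eta^2}(\log n_0/n_0)^{2\gamma_3/(2\gamma_3+1)}$ and $\tfrac{\gamma^2T^2}{\eta}(K\log n_\calM/n_\calM)^{2\gamma_3/(2\gamma_3+1)}$, while the residual $\tfrac{\gamma^2T^2}{\eta}\cdot\tfrac{u}{\min\{n_0,n_\calM/K\}}$ joins the pre-existing $\tfrac{u}{\min\{n_0,n_\calM\eta\}}$; bounding their sum by $\max\{\tfrac{\gamma^2T^2}{\eta},1\}\big(\tfrac{u}{\min\{n_0,n_\calM/K\}}+\tfrac{u}{\min\{n_0,n_\calM\eta\}}\big)\lesssim\max\{\tfrac{\gamma^2T^2}{\eta},1\}\tfrac{u}{\min\{n_0,n_\calM/K,n_\calM\eta\}}$ yields the stated tail term. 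The reward-difference and reward-aggregation terms and the prefactor $(T-t)\max\{\kappa,1\}^{T-t}$ carry over verbatim.

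For the probability statement, a union bound gives failure probability at most $7Te^{-u}+2T(K+1)e^{-u}=(2K+9)Te^{-u}$, which is at most $2T(K+1)e^{-(u-c)}$ for an absolute constant $c$; the replacement $u\mapsto u+c$ only perturbs the additive tail term by a lower-order amount absorbed into $\lesssim$, so the final bound holds with probability at least $1-2T(K+1)e^{-u}$, as claimed.

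The argument is essentially bookkeeping, so the only point that genuinely needs care -- and the closest thing to an obstacle -- is the data reuse between the transition-tuple regressions in Algorithm~\ref{alg:trans-ratio} (which produce $\hat\omega_t^{(k)}$) and the backward-inductive regressions in Algorithm~\ref{alg:rwt-trans-q-general}, which share the same state--action observations: one must check that there is no circularity in the conditioning when chaining the two high-probability events. This is exactly where the ``for any density-ratio estimator'' form of Theorem~\ref{thm:density-to-Q*} pays off -- its proof controls the effect of $\hat\Omega(\tau)$ through uniform empirical-process bounds rather than by conditioning on an independently estimated ratio -- so substituting Theorem~\ref{thm:error-transition-and-ratio} afterward on the same sample is legitimate; I would verify this explicitly before finalizing.
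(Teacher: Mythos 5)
Your proposal is correct and follows essentially the same route as the paper, which states this corollary as a direct instantiation of Theorem \ref{thm:density-to-Q*} (whose bound is deliberately left in terms of the empirical quantity $\max_{t\le\tau\le T}\hat\Omega(\tau)$) combined with the bound on $\max_{t\in[T]}\hat\Omega(t)$ from Theorem \ref{thm:error-transition-and-ratio}, plus a union bound; your observation that no independence between the ratio estimation and the $Q$-regression is needed---because $\hat\Omega$ enters Lemma \ref{lemma:calEp-bound} purely through a Cauchy--Schwarz step on the empirical bias---is exactly why the substitution is legitimate. Your bookkeeping of the failure probability $(2K+9)Te^{-u}$ versus the stated $2T(K+1)e^{-u}$ is if anything more careful than the paper's.
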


\begin{remark}[Advantage of transfer RL without transition transfers]\label{remark:advantage-trl-wo-tt}
Excluding terms with tail probability, the bound comprises three major terms showing classic nonparametric rates. These terms correspond to different sample sizes: $n_0$ for $\gamma_2$ and $n_0$ for $\gamma_3$ associated with value transfer and density ratio estimation respectively, and $n_{\calM}$ for $\gamma_1$ related to aggregated value estimation. When $\eta$, $T$, and $\gamma$ are constant, the advantage of transfer over standard $Q$-learning (which has rate $\left(\frac{J\log n_0}{n_0}\right)^{\frac{2\gamma_3}{2\gamma_3+1}}$) is demonstrated by the inequality:
$\left(\frac{J\log n_0}{n_0}\right)^{\frac{2\gamma_2}{2\gamma_2+1}}+\frac{1}{\eta}\left(\frac{J\log n_\calM}{n_\calM}\right)^{\frac{2\gamma_1}{2\gamma_1+1}} + \left(\frac{\log n_0}{n_0}\right)^{\frac{2\gamma_3}{2\gamma_3+1}} \lesssim \left(\frac{J\log n_0}{n_0}\right)^{\frac{2\gamma_1}{2\gamma_1+1}}$.
This advantage emerges when $n_{\calM}\gg n_0$ and $\gamma_3\ge \gamma_1$. While we do not directly assume $\gamma_3\ge \gamma_1$, this condition can be verified through equation \eqref{eqn-yk}, and is supported by empirical evidence showing that transition density is typically smoother than reward functions. See \cite{shi2022statistical} and references therein. 
\end{remark}

The following corollary instantiate Theorem \ref{thm:density-to-Q*} to the setting of transferable transitions with transition transfer, as described in Section \ref{sec:transition-ratio-estimation-density-transfer}.

\begin{corollary}[Transferable transitions] \label{thm:density-to-Q*-transition-trans}
Under the setting of Theorem \ref{thm:density-to-Q*} and assuming Assumption \ref{asp:density-boundedness-transferability} (ii) holds, let $\hat{Q}_t^{\rm (tr2)}$ denote the estimator obtained by Algorithm \ref{alg:rwt-trans-q-general}. This estimator uses DNN approximation \eqref{eqn:Q^p_t} with both reward transfer and transition density transfer, where the transition density ratios $\hat{\rho}_{t}^{(k),{\rm tr}}$ and importance weights $\hat\omega_t^{(k),{\rm tr}}$ are estimated using the method described in Section \ref{sec:transition-ratio-estimation-density-transfer}. Then, with probability at least $1-3T(K+1)e^{-u}$, we have:
\begin{align*}
\|\hat{Q}_t^{\rm (tr2)}-Q^*_t\|^2_{2,\PP_{t}^{(0)}} & \lesssim (T-t)\max\{\kappa,1\}^{T-t}\left( \underbrace{\left(\frac{J\log n_0}{n_0}\right)^{\frac{2\gamma_2}{2\gamma_2+1}}}_\text{est.~err.~of reward differences}
+ \underbrace{\frac{1}{\eta}\left(\frac{J\log n_\calM}{n_\calM}\right)^{\frac{2\gamma_1}{2\gamma_1+1}}}_\text{est.~err.~of reward aggregation} \right. \\
& \left. + \underbrace{\frac{\gamma^2T^2}{\eta^2}\left(\frac{\log n_0}{n_0}\right)^{\frac{2\gamma_4}{2\gamma_4+1}}}_\text{est.~err.~of transition differences} \right. \left. + \underbrace{\frac{\gamma^2T^2}{\eta^3}\left(\frac{K\log n_\calM}{n_\calM}\right)^{\frac{2\gamma_3}{2\gamma_3+1}}}_\text{est.~err.~of transition aggregation} \right. \\
& \left.+ \max\left\{\frac{\gamma^2T^2}{\eta},1\right\}\frac{u}{\min\{n_0\eta,n_\calM \eta^2/K\}}+\frac{\gamma^2T^2}{\eta}\frac{n_0^{\frac{1}{2\gamma_4+1}}K\log n_\calM}{n_\calM}\right).
\end{align*}

\end{corollary}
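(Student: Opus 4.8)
The statement follows by composing the two error-propagation results already in hand: Theorem~\ref{thm:density-to-Q*}, which bounds $\|\hat Q_t^{\rm tr}-Q^*_t\|^2_{2,\PP_t^{(0)}}$ in terms of the generic transition-ratio error $\max_{t\le\tau\le T}\hat\Omega(\tau)$, and Theorem~\ref{thm:density-transfer-estimation-error-bound}, which controls exactly that quantity when the ratios are fitted by the density-transfer procedure of Section~\ref{sec:transition-ratio-estimation-density-transfer}. The plan is to instantiate the first with the estimator $\hat Q_t^{\rm (tr2)}$, substitute the second, and tidy up the resulting expression and failure probability.

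\textbf{Step 1: invoke the generic propagation bound.} First I would check that the hypotheses of Theorem~\ref{thm:density-to-Q*} hold for $\hat Q_t^{\rm (tr2)}$. Assumptions~\ref{asp:coverage}, \ref{asp:covariate-shift}, \ref{asp:regularity}, and \ref{asp:task-similarity-1} are inherited from the setting of Theorem~\ref{thm:density-to-Q*}, and the boundedness clause Assumption~\ref{asp:density-boundedness-transferability}(i) is in force in the transferable-transition regime. The point that needs verification is that the plugged-in weights $\hat\omega_t^{(k),{\rm tr}}$ produced by Algorithm~\ref{alg:trans-ratio-transfer} are bounded in a range compatible with the true ratio, since $\Upsilon=\Upsilon_2/\Upsilon_1$ enters the constant $\kappa$; this follows because the fit is over the truncated network class $\calG(\bar L_2,\bar N_2,\bar M_2,\bar B_2)$ and the source density in the construction is floored at $\Upsilon_1$. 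Granting this, Theorem~\ref{thm:density-to-Q*} applies with $\hat\Omega$ replaced by $\hat\Omega^{\rm tr}$, giving, on an event of probability at least $1-7Te^{-u}$,
\[
\|\hat Q_t^{\rm (tr2)}-Q^*_t\|^2_{2,\PP_t^{(0)}}
\lesssim (T-t)\max\{\kappa,1\}^{T-t}\left(\left(\frac{J\log n_0}{n_0}\right)^{\frac{2\gamma_2}{2\gamma_2+1}} + \frac{1}{\eta}\left(\frac{J\log n_\calM}{n_\calM}\right)^{\frac{2\gamma_1}{2\gamma_1+1}} + \frac{\gamma^2T^2}{\eta}\max_{t\le\tau\le T}\hat\Omega^{\rm tr}(\tau) + \frac{u}{\min\{n_0,n_\calM\eta\}}\right).
\]

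\textbf{Step 2: plug in the transition-transfer rate and simplify.} Since the corollary additionally assumes Assumption~\ref{asp:density-boundedness-transferability}(ii) (and the transferable regime supplies the transferability clause (iii)), Theorem~\ref{thm:density-transfer-estimation-error-bound} applies: on an event of probability at least $1-3T(K+1)e^{-u}$,
\[
\max_{t\le\tau\le T}\hat\Omega^{\rm tr}(\tau)
\lesssim \frac{1}{\eta^2}\left(\frac{K\log n_\calM}{n_\calM}\right)^{\frac{2\gamma_3}{2\gamma_3+1}} + \frac{1}{\eta}\left(\frac{\log n_0}{n_0}\right)^{\frac{2\gamma_4}{2\gamma_4+1}} + \frac{u}{\min\{n_0\eta,n_\calM\eta^2/K\}} + \frac{n_0^{1/(2\gamma_4+1)}K\log n_\calM}{n_\calM}.
\]
Inserting this into the display of Step~1 and distributing the prefactor $\gamma^2T^2/\eta$ produces exactly the four ``transition'' contributions in the claimed bound: a $\frac{\gamma^2T^2}{\eta^3}(K\log n_\calM/n_\calM)^{2\gamma_3/(2\gamma_3+1)}$ aggregation term, a $\frac{\gamma^2T^2}{\eta^2}(\log n_0/n_0)^{2\gamma_4/(2\gamma_4+1)}$ difference term, the residual $\frac{\gamma^2T^2}{\eta}\cdot n_0^{1/(2\gamma_4+1)}K\log n_\calM/n_\calM$, and a tail piece $\frac{\gamma^2T^2}{\eta}\cdot u/\min\{n_0\eta,n_\calM\eta^2/K\}$.

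\textbf{Step 3: bookkeeping.} It remains to merge the tail terms and the two probabilistic events. Using $\eta\le1$ and $K\ge1$ one has $n_0\eta\le n_0$ and $n_\calM\eta^2/K\le n_\calM\eta$, so the term $u/\min\{n_0,n_\calM\eta\}$ from Step~1 and the term $\frac{\gamma^2T^2}{\eta}\cdot u/\min\{n_0\eta,n_\calM\eta^2/K\}$ from Step~2 combine into $\max\{\gamma^2T^2/\eta,1\}\cdot u/\min\{n_0\eta,n_\calM\eta^2/K\}$. A union bound over the two events, with the $7Te^{-u}$ contribution absorbed into $3T(K+1)e^{-u}$ after rescaling $u$ by an absolute constant, gives the stated probability $1-3T(K+1)e^{-u}$; collecting terms yields the displayed inequality. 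The only genuinely non-routine point is the verification flagged in Step~1 --- that the transfer-based weight estimator, after Algorithm~\ref{alg:trans-ratio-transfer}'s flooring and truncation, is bounded compatibly with $\Upsilon_1\le|\rho_t^{(k)}|\le\Upsilon_2$ so that Theorem~\ref{thm:density-to-Q*} may be quoted as a black box with the correct $\kappa$; everything else is substitution and constant-chasing.
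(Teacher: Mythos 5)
Your proposal is correct and matches the paper's (implicit) argument exactly: the paper treats this corollary as a direct instantiation of Theorem~\ref{thm:density-to-Q*} with the bound on $\max_{t\le\tau\le T}\hat\Omega^{\rm tr}(\tau)$ from Theorem~\ref{thm:density-transfer-estimation-error-bound} substituted in, followed by the same merging of tail terms and union bound over the two events. Your Step~1 caveat about the boundedness of the plugged-in weights is a reasonable extra check but does not change the route.
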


\begin{remark}[Transferable transitions: Advantage of transfer RL with transition transfers]
Excluding terms with tail probability, the bound contains four major terms exhibiting classic nonparametric rates, represented by the leading terms in the expression. When $\eta$, $T$, and $\gamma$ are constant, the advantage of transfer over standard $Q$-learning (which has rate $\left(\frac{J\log n_0}{n_0}\right)^{\frac{2\gamma_1}{2\gamma_1+1}}$) is demonstrated by the inequality:
$\left(\frac{J\log n_0}{n_0}\right)^{\frac{2\gamma_2}{2\gamma_2+1}}+\frac{1}{\eta}\left(\frac{J\log n_\calM}{n_\calM}\right)^{\frac{2\gamma_1}{2\gamma_1+1}} + \left(\frac{\log n_0}{n_0}\right)^{\frac{2\gamma_4}{2\gamma_4+1}} + \left(\frac{K\log n_\calM}{n_\calM}\right)^{\frac{2\gamma_3}{2\gamma_3+1}} \lesssim \left(\frac{J\log n_0}{n_0}\right)^{\frac{2\gamma_1}{2\gamma_1+1}}$. 
This advantage emerges when $n_{\calM}\gg n_0$, $\gamma_2\ge \gamma_1$, and $\gamma_4\ge \gamma_1$, where the last condition is ensured by the relationship of $\gamma_3\ge \gamma_1$ as explained in Remark \ref{remark:advantage-trl-wo-tt} and transition transferability ($\gamma_4 \ge \gamma_3$).
\end{remark}



\section{Empirical Studies}
\label{sec:empirical}

\subsection{On-Policy Evaluation for RWT Transfer $Q$ Learning} \label{sec:etc} 

To evaluate our RWT transfer $Q$-learning algorithm, we assess how well the greedy policy derived from $\hat{Q}^{\rm (tr)}$ performs in the target environment through on-policy evaluation. Our experimental approach consists of three distinct phases (Figure \ref{fig:offline-online}):

First, in the target data collection phase, we gather initial RL trajectories from the target environment using a uniform random policy. The duration of this exploration determines the amount of target data available for the transfer learning phase.

Next, during the RWT Transfer $Q$ learning phase, we apply our method to both the collected target data and existing offline source data to compute $\hat{Q}^{\rm (tr)}$.

In the final on-policy evaluation phase, we deploy a greedy policy based on $\hat{Q}^{\rm (tr)}$ and measure its performance in the target environment. For comparison, we also evaluate a baseline approach that uses backward inductive $Q$ learning with only target data to estimate $\hat{Q}^{\rm (sg)}$. We assess the quality of both estimated $Q^*$ functions by measuring the total rewards accumulated when following their respective greedy policies.
\begin{figure}[ht!]
\centering
\includegraphics[width=0.98\linewidth]{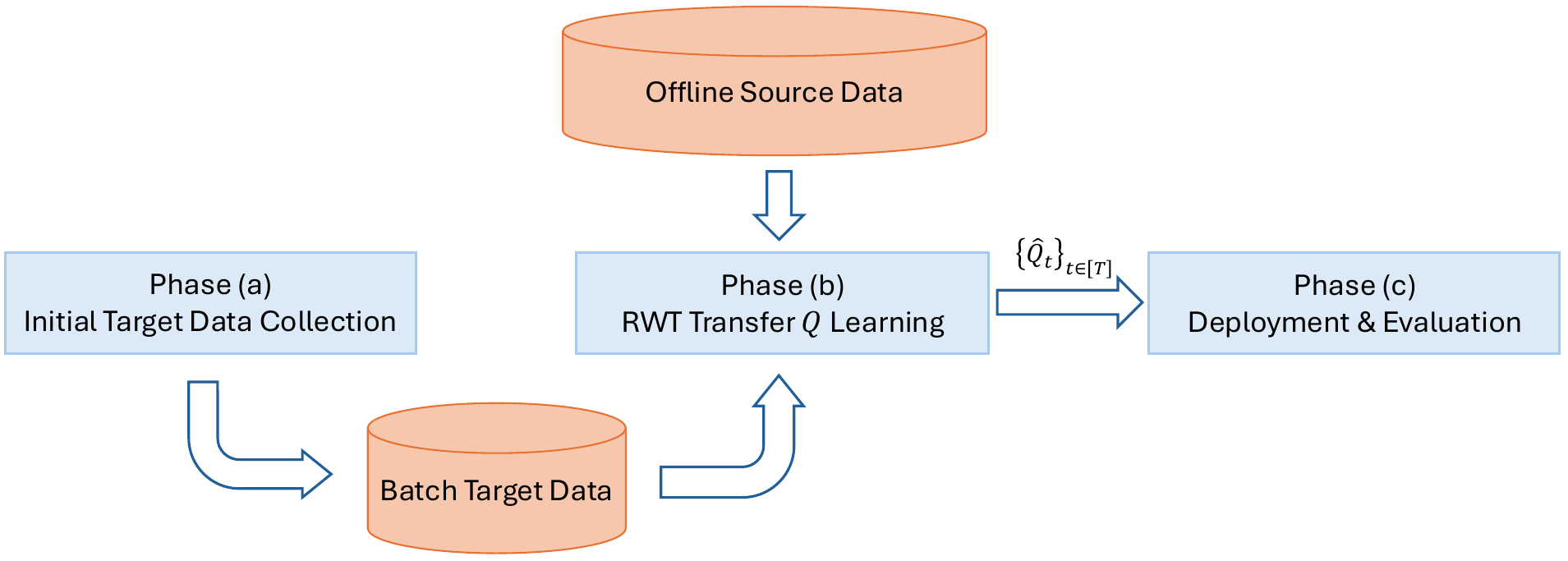}
\caption{\label{fig:offline-online}
Experimental workflow: Phase (a) collects initial target data using uniform random policies. Phase (b) applies RWT Transfer $Q$-learning using both target and source data. Phase (c) conducts on-policy evaluation of the derived greedy policy from $\hat{Q}^{\rm (tr)}$ in the target environment. 
} 
\end{figure}

Our experiments span two environments: a synthetic two-stage Markov Decision Process (MDP) and a calibrated sepsis management simulation using real data. Results show that RWT transfer learning achieves significantly higher accumulated rewards and lower regret compared to learning without transfers, demonstrating robust performance across these distinct settings.

\subsection{Two-Stage MDP with Analytical Optimal $Q^*$ Function}   \label{sec:chakraborty}
\noindent\textbf{Data Generating MDP.} 
The first environment in which we evaluate our method is a two-stage MDP ($T=2$) with binary states $\calX = \braces{-1,1}$ and actions $\calA=\braces{-1,1}$, adapted from \cite{chakraborty2010inference} and \cite{song2015penalized}. This simple environment provides an analytical form of the optimal $Q^*$ function, enabling explicit comparison of regrets during online learning. The states $X_t$ and actions $A_t$ are generated as follows.
At the initial stage ($t=1$), the states and actions are randomly generated, and in the next and final stage (t=2), the state depends on the outcomes of the state and action at the initial stage and is generated according to a logistic regression model.  Explicitly, 
\begin{equation*}
\begin{aligned}
&\Pr\paren{X_1=-1} = \Pr\paren{X_1=1} = 0.5, \\
&\Pr\paren{A_t=-1} = \Pr\paren{A_t=1} = 0.5, \quad t = 1, 2, \\
&\Pr\paren{X_2=1 | X_1, A_1} = 1 - \Pr\paren{X_2 =-1| X_1, A_1} = {\rm expit}\paren{b_1 X_1 + b_2 A_1},
\end{aligned}
\end{equation*}
where ${\rm expit}\paren{x} = \exp\paren{x} / \paren{1 + \exp\paren{x}}$.
The immediate rewards are $R_1 = 0$ and 
\begin{equation*}
R_2 = \kappa_1 + \kappa_2 X_1 + \kappa_3 A_1 + \kappa_4 X_1 A_1 + \kappa_5 A_2 + \kappa_6 X_2 A_2 + \kappa_7 A_1 A_2 + \eps_2,
\end{equation*}
where $\eps_2 \sim \calN\paren{0, 1}$.
Under this setting, the true $Q^*_t$ functions for stage $t=1, 2$ can be analytically derived and are given by
\begin{equation}  \label{eqn:true-q}
\begin{aligned}
Q_2^*\paren{S_2, A_2; \btheta_2} & = \theta_{2,1} + \theta_{2,2} X_1 + \theta_{2,3} A_1 + \theta_{2,4} X_1 A_1 \\
& ~~~ + \theta_{2,5} A_2 + \theta_{2,6} X_2 A_2 + \theta_{2,7} A_1 A_2 \\
Q_1^*\paren{S_1, A_1; \btheta_1} & = \theta_{1,1} + \theta_{1,2} X_1 + \theta_{1,3} A_1 + \theta_{1,4} X_1 A_1,
\end{aligned}
\end{equation}
where the true coefficients $\btheta_t$ are explicitly functions of $b_1$, $b_2$, $\kappa_1,\dots, \kappa_7$ given in equation (H.1) in Appendix H in the supplemental material.
We add more complexity to this MDP by setting the observed covariate $\bs_t \in \RR^p$, $p=31$, consisting of $1$, $S_t$ and remaining elements that are randomly sampled from standard normal. 

\noindent\textbf{Source and Target Environments.} 
We examine transfer learning between two similar MDPs derived from the above model. The MDPs differ in their coefficients $\kappa$'s and consequently $\theta$'s in \eqref{eqn:true-q}. For the target MDP, we set $b_1=1$, $b_2=1$, and $\theta_{2,j} = 1$ for $1 \le j \le 7$, while the source MDP differs only in $\theta_{2,2}^{(1)} = 1.2$. According to equation (H.1) in Appendix H, this leads to stage-one $Q^*$ coefficients of $\theta_{1,1}, \theta_{1,2}, \theta_{1,3}, \theta_{1,4} \approx 2.69, 1.19, 1.69, 1.19$ for the target MDP and $\theta_{1,1}^{(1)}, \theta_{1,2}^{(1)}, \theta_{1,3}^{(1)}, \theta_{1,4}^{(1)} \approx 2.69, 1.39, 1.69, 1.19$ for the source MDP. Thus, the MDPs differ only in $\theta_{1,2}$ for $Q_1$ and $\theta_{2,2}$ for $Q_2$ functions.

\noindent\textbf{The Neural Network Model for $Q$- and $\delta$-functions.} 
Our $Q$-function and difference function implementations utilize a neural network that integrates state-action encoding with a multi-layer perceptron (MLP) architecture:
\begin{align*}
&\text{embedding: } \bx_{\rm enc} = {\rm vec}({\rm concatenate}(\bs, a)\otimes\bM_{\rm ENC}), \nonumber\\
&\bh_1 = {\rm MLP}({\rm DCN}({\rm DCN}(\bx_{\rm enc})), {\rm ReLU}),  \nonumber \\
&y = {\rm MLP}(\bh_1,\; {\rm Linear}).  \nonumber
\end{align*}
During RWT Transfer Q-learning, the output $y$ represents either the $Q$-function value or the difference $\delta$ function value.
The network first encodes inputs using a trainable encoding matrix $\bM_{\rm ENC} \in \RR^{8\times 1}$. The resulting encodings generate a 256-dimensional feature vector that serves as input to the multi-layer perceptron. This MLP processes the 256-dimensional input through a 256-unit hidden layer with ReLU activation functions. The output layer produces a single scalar value without activation, which is suitable for our regression task.
We incorporate Deep \& Cross Network (DCN) blocks, as introduced by \cite{wang2021dcn}, to effectively model high-order interactions between input features while maintaining robustness to noise. These blocks are applied twice in succession to the encoded input before feeding into the MLP layers.

\noindent\textbf{On-Policy Evaluation and Comparison of $\hat{Q}^{\text{(tr)}}$ and $\hat{Q}^{\text{(sg)}}$.}
We generate 10,000 independent trajectories $\paren{\bs_{1,i}, a_{1,i}, r_{1,i}, \bs_{2,i}, a_{2,i}, r_{2,i}}$  from the source MDP and $n_0 \in \{100, 200, \cdots, 1000\}$ trajectories from the target MDP. 

To assess the performance of both Q-function estimates ($\hat{Q}^{\text{(tr)}}$ with transfer learning and $\hat{Q}^{\text{(sg)}}$ without transfer), we deployed their respective greedy policies in the target environment. The evaluation consisted of 100 policy executions for each target dataset size. We measured performance using cumulative rewards over each interaction sequence, adopting an undiscounted reward setting ($\gamma = 1$).

Figure \ref{fig:chakra-online} displays performance metrics averaged over 100 trajectories, comparing various target batch sizes from the exploration phase. We plot both cumulative regret (computed using the analytically-derived optimal Q-function $Q^*$ for this MDP) and cumulative rewards. The analysis reveals that greedy policies derived from the transfer-learned Q-function ($\hat{Q}^{\rm (tr)}$) significantly outperform those from the Q-function ($\hat{Q}^{\rm (sg)}$) without transfer, achieving both lower cumulative regret and higher cumulative rewards.   It demonstrates clearly the benefit of transfer learning in RL.

\begin{figure}[t]
\centering
\includegraphics[width=0.48\linewidth,height=5cm]{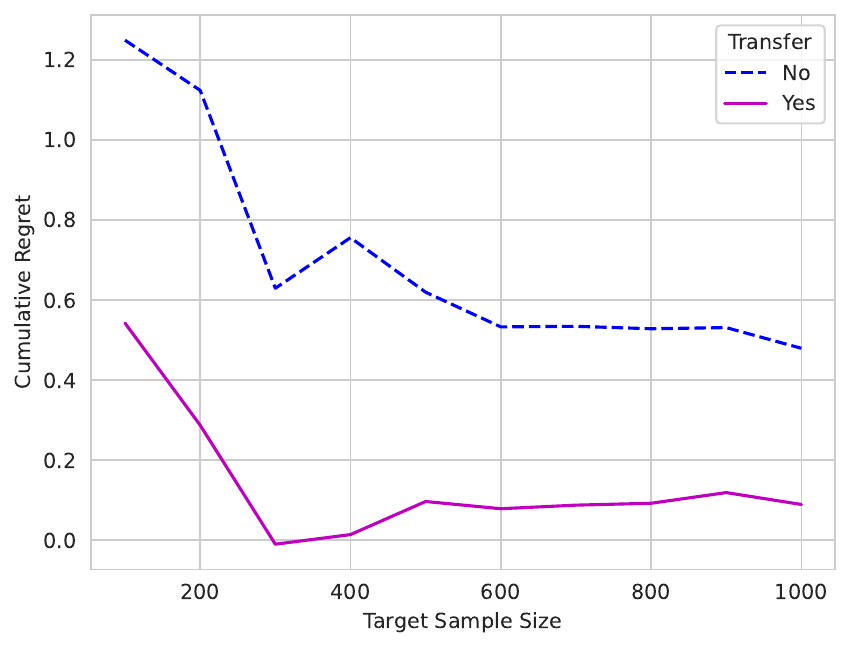}
\includegraphics[width=0.48\linewidth,height=5cm]{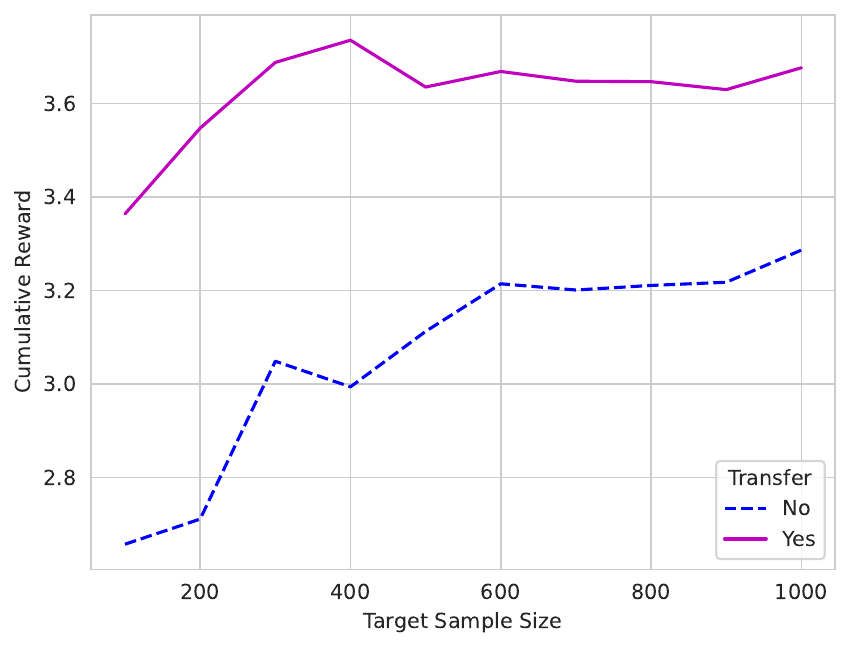}
\caption{\label{fig:chakra-online}
Cumulative regrets (left) and rewards (right) of the online evaluation phase with or without transfer, following the scheme illustrated in Figure \ref{fig:offline-online}. 
The offline source data set has $10,000$ trajectories.  The cumulative regrets and rewards
is shown as a function of the ``Target Sample Size", corresponding to
the amount of target data collected in Phase (a).
The online evaluation phase deploys the greedy policy for both with or without transfer.  
} 
\end{figure}

\subsection{Health Data Application: {\sc Mimic-iii} Sepsis Management} \label{sec:appl-mimic}
\noindent\textbf{Dynamic Treatment Data.}
We evaluated our RWT transfer $Q$-learning method using the MIMIC-III Database (Medical Information Mart for Intensive Care version III, \cite{johnson2016mimic}). This database contains anonymized critical care records collected between 2001-2012 from six ICUs at a Boston teaching hospital. 
For each patient, we encoded state variables as three-dimensional covariates $\bs_{i,t} \in\RR^{3}$ across $T=5$ time steps. The action space captured two key treatment decisions: the total volume of intravenous (IV) fluids and the maximum dose of vasopressors (VASO) \citep{komorowski2018artificial}. The combination of these two treatments yielded $M = 3 \times 3 = 9$ possible actions.
We constructed the reward signal $r_{i,t}$ following established approaches in the literature \citep{prasad2017reinforcement,komorowski2018artificial}. For complete details on data preprocessing, we refer readers to Section J of the supplemental material in \cite{chen2022reinforcement}.

\noindent\textbf{The Neural Network Model.} 
We implemented a neural network architecture for all function estimations, including model calibration, $Q$ functions, and difference functions. 
This architecture combines state and action encoding with a multi-layer perceptron:
\begin{align}
&\text{embedding: } \bs_{\rm enc} = {\rm vec}(\bs\otimes\bM_{\rm SE}), \quad \ba_{\rm enc} = {\rm vec}(a\otimes\bM_{\rm AE}), \nonumber\\
&\bh_1 = {\rm MLP}({\rm concatenate}(\bs_{\rm enc}, \ba_{\rm enc}),\; {\rm ReLU}),  \label{eqn:nn-model} \\
&y = {\rm MLP}(\bh_1,\; {\rm Linear}).  \nonumber
\end{align}
During environment calibration, the output $y$ represents either the reward function or the transition probability density.
During RWT Transfer Q-learning, the output $y$ represents either the Q-function value or the difference function value.
Our architecture encodes three-dimensional states using a learnable state encoder matrix $\bM_{\rm SE} \in \RR^{4\times 1}$ and actions using a learnable action encoder matrix $\bM_{\rm AE} \in \RR^{4\times 1}$. These encodings produce a 16-dimensional input vector (12 dimensions from state encoding and 4 from action encoding), which feeds into a multi-layer perceptron. The MLP takes a 16-dimensional input (12 dimensions from state encoding plus 4 from action encoding) and processes it through a hidden layer of size 16 with ReLU activations. The final layer outputs a single value without activation, appropriate for our regression.

\noindent\textbf{Source and target environment calibration.}
Our study analyzed $20,943$ unique adult ICU admissions, comprising $11,704$ ($55.88\%$) female patients (coded as 0) and $9,239$ ($44.1\%$) male patients (coded as 1). In implementing our Transfer $Q$-learning approach, we designated male patients as the target task and female patients as the auxiliary source task.
To facilitate online evaluation, we constructed neural network-calibrated reinforcement learning environments. Using the architecture described in equation \eqref{eqn:nn-model}, we fitted both reward and transition functions. The source environment was calibrated using $11,704$ trajectories from female patients, while the target environment used $9,239$ trajectories from male patients.
Detailed specifications of the real data calibration process are available in Appendix G in the supplemental material.

\noindent\textbf{RWT Transfer $Q$-learning.}
We generated $n_1 = 10,000$ trajectories from the calibrated source environment and collected varying sizes $n_0 \in \{100, 200, \cdots, 500\}$ of initial target data samples using uniformly random actions from the target environment, as shown in Phase (a) of Figure \ref{fig:offline-online}. For each target data size, we applied our RWT Transfer $Q$-learning method to obtain an estimated $\hat{Q}^{\text{(tr)}}$ function, as illustrated in Phase (b) of Figure \ref{fig:offline-online}. As a baseline comparison, we also estimated $\hat{Q}^{\text{(sg)}}$ using vanilla backward inductive $Q$-learning without transfer \citep{murphy2005generalization}, employing the same neural network architecture from model \eqref{eqn:nn-model} with different target data sizes.

\noindent\textbf{On-Policy Evaluation and Comparison of $\hat{Q}^{\text{(tr)}}$ and $\hat{Q}^{\text{(sg)}}$. }
We evaluated both estimated $Q$ functions ($\hat{Q}^{\text{(tr)}}$ with transfer and $\hat{Q}^{\text{(sg)}}$ without transfer) by deploying their corresponding greedy policies in the target environment. For each target data size, we executed 1,000 interactions using the greedy policy derived from each $\hat{Q}$. During each interaction, we computed the total accumulated reward using an undiscounted setting ($\gamma = 1$). 

Figure~\ref{fig:mimic3-calib-online} shows the average cumulative rewards across 1,000 trajectories, comparing different batch sizes used in the exploration phase. The greedy policies with transfer $\hat{Q}^{\text{(tr)}}$ performed nearly identically across different target sample sizes, with differences only appearing in the third decimal place. This suggests that even small target sample sizes are sufficient for this application. The results clearly show that greedy policies based on $\hat{Q}^{\text{(tr)}}$ with transfer substantially outperformed those using the non-transfer $\hat{Q}^{\text{(sg)}}$ approach in terms of cumulative rewards.
\begin{figure}[tb]
\centering
\includegraphics[width=0.58\linewidth]{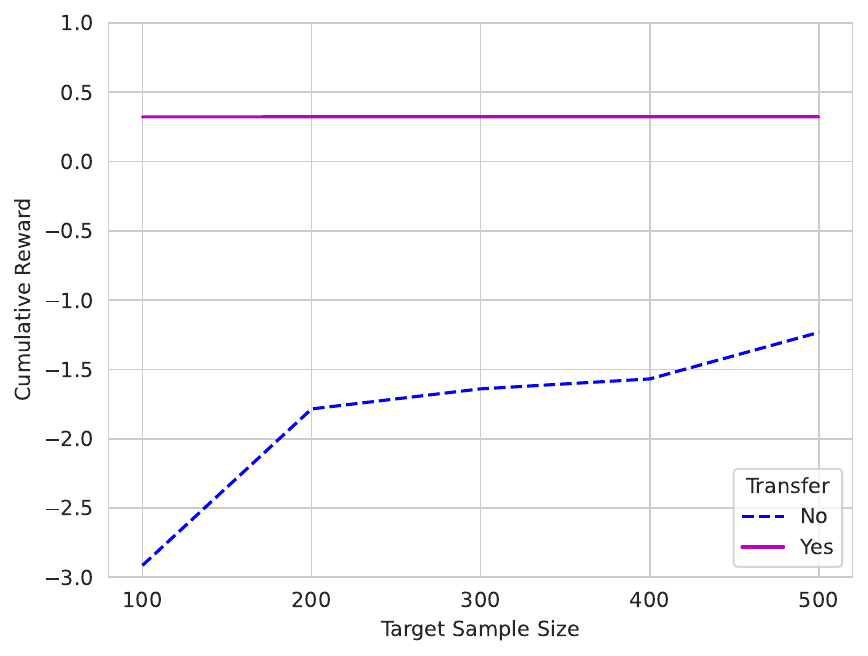}
\caption{\label{fig:mimic3-calib-online}
Cumulative rewards of the online evaluation phase with or without transfer in the MIMIC-3 calibrated environments, averaged over $1,000$ trajectories and following the scheme illustrated in Figure \ref{fig:offline-online}. 
The values of the purple line are nearly identically across different target sample sizes, with differences only appearing in the third decimal place.
The offline source data sets has $10,000$ trajectories. 
The x-axis titled ``Target Sample Size'' represents the number of target data sampled in the phase of initial target data collection. 
The online evaluation deploys the greedy policy for both with or without transfer.} 
\end{figure}

\section{Conclusion} \label{sec:conc}

This paper advances the field of reinforcement learning (RL) by addressing the challenges of transfer learning in non-stationary finite-horizon Markov Decision Processes (MDPs). We have demonstrated that the unique characteristics of RL environments necessitate a fundamental reimagining of transfer learning approaches, introducing the concept of ``transferable RL samples'' and developing the ``re-weighted targeting procedure'' for backward inductive $Q$-learning with neural network function approximation.

Our theoretical analysis provides robust guarantees for transfer learning in non-stationary MDPs, extending insights into deep transfer learning. The introduction of a neural network estimator for transition probability ratios contributes to the broader study of domain shift in deep transfer learning.

This work lays a foundation for more efficient decision-making in complex, real-world scenarios where data is limited but potential impact is substantial. By enabling the leverage of diverse data sources to enhance decision-making for specific target populations, our approach has the potential to significantly improve outcomes in critical societal domains such as healthcare, education, and economics.

While our study has made significant strides, it also opens up new directions for future research, including exploring the applicability of our methods to other RL paradigms and investigating the scalability of our approach to more complex environments.

\spacingset{1.18}
\bibliographystyle{agsm}
\bibliography{main}

\newpage
\setcounter{page}{1}
\begin{appendices}

\begin{center}
{\large\bf SUPPLEMENTARY MATERIAL of \\
``\TITLE''}
\end{center}


This supplementary material is organized as follows. Appendix~\ref{append:notations} provides the lists of notations.
Appendix~\ref{append:q-error} presents the proof of $Q^*$ error bounds with DNN approximation in Section 4.1. Appendix~\ref{append:transition-ratio-estimation} covers the
proofs of transition ratio estimation without density transfer in Section 4.2.1, while Appendix~\ref{append:transition-ratio-estimation-density-transfer} contains the
proofs of transition ratio estimation with density transfer in Section 4.2.2. We include in Appendix~\ref{append:RKHS} the instantiation of RKHS approximation in our general framework. Appendix~\ref{append:extensions} discusses the
extensions of our theory. Appendix~\ref{append:mimic3} provides detailed specifications of the real data calibration process.


\section{Notations}
\label{append:notations}
For any vector $\bx=(x_1,\ldots,x_p)^\top$, let $\|\bx\|:= \|\bx\|_2=(\sum_{i=1}^p x_i^2)^{1/2}$ be the $\ell_2$-norm, and let $\|\bx\|_1=\sum_{i=1}^p |x_i|$ be the $\ell_1$-norm. 
Besides, we use the following matrix norms: 
$\ell_2$-norm $\norm{\bX}_2 := \nu_{max}(\bX)$; $(2,1)$-norm $\norm{\bX}_{2,1} := \underset{\norm{\ba}_1=1}{\max} \norm{\bX\ba}_2 =\max_i \norm{\bx_{i}}_2$;
Frobenius norm $\|\bX\|_F = (\sum_{i,j}x_{ij}^2)^{1/2}$;
nuclear norm $\|\bX\|_*=\sum_{i=1}^n {\nu_{i}(\bX)}$.
When $\bX$ is a square matrix, we denote by $\Tr \paren{\bX}$, $\lambda_{max} \paren{\bX}$, and $\lambda_{min} \paren{\bX}$ the trace, maximum and minimum singular value of $\bX$, respectively.
For two matrices of the same dimension, define the inner product $\angles{\bX_1,\bX_2} = \Tr(\bX_1^\top \bX_2)$.

Let $\bz_1,\cdots, \bz_n$ be i.i.d.~copies of $\bz \sim \mathcal{Z}$ from some distribution $\mu$, $\mathcal{H}$ be a real-valued function class defined on $\mathcal{Z}$. Define the $L_n$ norm (or the empirical $L_2$ norm) and population $L_2$ norm for each $h\in \mathcal{H}$ respectively as
\begin{align*}
\|h\|_n = \left(\frac{1}{n} \sum_{i=1}^n h(\bz_i)^2\right)^{1/2} ~~~~\text{and}~~~~\|h\|_{2,\mu} = \left(\mathbb{E}[h(\bz)^2]\right)^{1/2} = \left(\int h(\bz)^2 \mu(d\bz) \right)^{1/2}.
\end{align*}
We write $\|h\|_{2} = \|h\|_{2,\mu}$ for simple notation when the underlying distribution is clear. 

\section{$Q^*$ Error Bounds with DNN Approximation} \label{append:q-error}

For notational simplicity, we define the following $L_2$ errors whose theoretical guarantees are to be established:
\begin{align*}
\calE(t) &:= \|\hat Q_t-Q^*_t\|^2_{2,\PP_{t}^{\operatorname{agg}}} & \text{(Error under $\PP_{t}^{\operatorname{agg}}$)}, \\
\calE_0(t) &:= \|\hat Q_t-Q^*_t\|^2_{2,\PP_{t}^{(0)}} & \text{(Error under $\PP_{t}^{(0)}$)}, \\
\calE^p(t) &:= \|\hat Q_t^p-Q^{*,\operatorname{agg}}_t\|^2_{2,\PP_{t}^{\operatorname{agg}}} & \text{(Piloting error under $\PP_{t}^{\operatorname{agg}}$)}, \\
\calE_0^p(t) &:= \|\hat Q_t^p-Q^{*,\operatorname{agg}}_t\|^2_{2,\PP_{t}^{(0)}} & \text{(Piloting error under $\PP_{t}^{(0)}$)},
\end{align*}
where $\hat{Q}_t$ is our estimator with RWT transfer for $Q^{*(0)}_t$ for stage $t$, $\hat{Q}^p_t$ is the piloting estimator defined in 
(3.2) which are the pooled backward inductive $Q^*$ estimator for $Q^{*(0)}_t$ for stage $t$, and $Q^{*,\operatorname{agg}}_t$ is defined in 
(2.18). 

We denote sample versions with a ``hat''. For instance:
\begin{align*}
\hat\calE(t) &:= \|\hat Q_t-Q^*_t\|^2_{n_\calM,\hat\PP_{t}^{\operatorname{agg}}} = \frac{1}{n_\calM}\sum_{k=1}^K\sum_{i=1}^{n_k}\left(\hat Q_t(\bs_{t,i}^{(k)},a_{t,i}^{(k)})-Q^*_t(\bs_{t,i}^{(k)},a_{t,i}^{(k)})\right)^2 \\
\hat\calE_0(t) &:= \|\hat Q_t-Q^*_t\|^2_{n_0,\hat\PP_{t}^{(0)}} = \frac{1}{n_0}\sum_{i=1}^{n_0}\left(\hat Q_t(\bs_{t,i}^{(0)},a_{t,i}^{(0)})-Q^*_t(\bs_{t,i}^{(0)},a_{t,i}^{(0)})\right)^2. 
\end{align*}

For the error propagation, we aim to bound $\calE(t)$ and $\calE_0(t)$ by $\calE(t+1)$ and $\calE_0(t+1)$, respectively. While our error bounds integrate over actions, we can analyze the estimation error of the optimal $Q$ function for frequently chosen actions separately. 

\begin{lemma}
\label{lemma:calEp-bound}
Recall that $\hat{\Omega}(t)=\frac{1}{n_\calM}\sum_{i=1}^{n_\calM}|\hat \omega_{t,i}^{(k_i)}-\omega_{t,i}^{(k_i)}|^2$ and assume that $|\omega_{t,i}^{(k)}|\le \Upsilon$. With probability at least $1-3e^{-u}$,
\begin{align*}
\max\{\calE^p(t),\hat\calE^p(t)\}\lesssim \left(\frac{J\log n_\calM}{n_\calM}\right)^{\frac{2\gamma_1}{2\gamma_1+1}}+\gamma^2 \hat\Omega(t)+\frac{\gamma^2 \Upsilon^2}{\underline{c}}\calE(t+1)+\frac{u}{n_\calM}.
\end{align*}
\end{lemma}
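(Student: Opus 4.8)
The estimator $\hat Q^p_t$ is the nonparametric least-squares fit over the deep ReLU class $\calG_1:=\calG(L_1,N_1,M_1,B_1)$ with estimated responses $\{\ywh\}_{k\in\{0\}\cup[K],\,i\in[n_k]}$, which --- via the aggregation device of Section \ref{sec:aggregation} --- I will treat as $n_\calM$ i.i.d.\ tuples $(\bs^{(k)}_{t,i},a^{(k)}_{t,i},\bs'^{(k)}_{t,i})$ from $\PP^{\operatorname{agg}}_t$ (with realized rewards), using that the multinomial task assignment makes $\EE[\yw\mid\bs^{(k)}_{t,i}=\bs,a^{(k)}_{t,i}=a]=Q^{*\operatorname{agg}}_t(\bs,a)$ by \eqref{mm1}, \eqref{mm2} and \eqref{eqn:Q-agg}. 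The plan is the usual ``basic inequality $+$ localized empirical process'' argument for NLS, carried out with two complications in mind: the regressed response $\ywh$ is only an \emph{estimate} of its ideal version $\yw$, and the plug-ins $\hat Q_{t+1}$ and $\hat\omega$ are statistically \emph{dependent} on the stage-$t$ data because the same trajectories are reused across backward steps. I first decompose the response error: writing $\ywh-\yw=\gamma\,\omegaih\max_a\hat Q_{t+1}(\bs'^{(k)}_{t,i},a)-\gamma\,\omegai\max_a Q^*_{t+1}(\bs'^{(k)}_{t,i},a)$, I split it into a transition-ratio part $\Delta^\omega_{t,i}:=\gamma(\omegaih-\omegai)\max_a Q^*_{t+1}$ and a value part $\Delta^Q_{t,i}:=\gamma\,\omegaih(\max_a\hat Q_{t+1}-\max_a Q^*_{t+1})$. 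Using Assumption \ref{asp:regularity} ($|Q^*|\le1$), $|\omegaih|\le\Upsilon$, and $|\max_a\hat Q-\max_a Q^*|\le\max_a|\hat Q-Q^*|$, these obey $|\Delta^\omega_{t,i}|\le\gamma|\omegaih-\omegai|$ and $|\Delta^Q_{t,i}|\le\gamma\Upsilon\max_a|\hat Q_{t+1}-Q^*_{t+1}|(\bs'^{(k)}_{t,i},a)$, while $\xi:=\yw-Q^{*\operatorname{agg}}_t$ is conditionally mean-zero and bounded (rewards and $\omega$ are bounded, hence so is $\yw$).

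From the basic inequality $\sum_{k,i}(\ywh-\hat Q^p_t)^2\le\sum_{k,i}(\ywh-\tilde g)^2$ with $\tilde g\in\calG_1$ a best deep-ReLU approximant of $Q^{*\operatorname{agg}}_t(\cdot,a)$, expanding squares and inserting the decomposition gives
\begin{align*}
\|\hat Q^p_t-Q^{*\operatorname{agg}}_t\|^2_{n_\calM}
&\lesssim \|\tilde g-Q^{*\operatorname{agg}}_t\|^2_{n_\calM}
+\frac1{n_\calM}\sum_{k,i}\xi^{(k)}_{t,i}\,(\hat Q^p_t-Q^{*\operatorname{agg}}_t)(\bs^{(k)}_{t,i},a^{(k)}_{t,i})\\
&\qquad+\frac1{n_\calM}\sum_{k,i}(\Delta^Q_{t,i}+\Delta^\omega_{t,i})\,(\hat Q^p_t-\tilde g)(\bs^{(k)}_{t,i},a^{(k)}_{t,i}).
\end{align*}
The deterministic approximation term is $\bigO{(J\log n_\calM/n_\calM)^{2\gamma_1/(2\gamma_1+1)}}$ by deep-ReLU approximation theory for the HCM class of Definition \ref{def:hcm} applied to $Q^{*\operatorname{agg}}_t(\cdot,a)\in\calH_1$ \citep{kohler2021rate,fan2024how}, with the factor $J=|\calA|$ entering through the finite action coordinate. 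The noise cross term I would control by a peeling / localized-complexity argument over $\calG_1$ (via the $L_\infty$-covering / pseudo-dimension bound for the class of Definition \ref{def:dnn} together with Bernstein-type concentration): with probability $\ge 1-e^{-u}$ it is at most $\tfrac12\|\hat Q^p_t-Q^{*\operatorname{agg}}_t\|^2_{n_\calM}+\bigO{(J\log n_\calM/n_\calM)^{2\gamma_1/(2\gamma_1+1)}+u/n_\calM}$, the half-term absorbed on the left. The last sum I bound by AM--GM as $\tfrac14\|\hat Q^p_t-\tilde g\|^2_{n_\calM}+\bigO{\frac1{n_\calM}\sum_{k,i}\big((\Delta^Q_{t,i})^2+(\Delta^\omega_{t,i})^2\big)}$, absorbing the quadratic piece and reducing everything to bounding the two perturbation averages.

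For the ratio part, $\frac1{n_\calM}\sum_{k,i}(\Delta^\omega_{t,i})^2\le\gamma^2\frac1{n_\calM}\sum_{k,i}|\omegaih-\omegai|^2=\gamma^2\hat\Omega(t)$, which I keep in empirical form as in the statement. For the value part, $\frac1{n_\calM}\sum_{k,i}(\Delta^Q_{t,i})^2\le\gamma^2\Upsilon^2\frac1{n_\calM}\sum_{k,i}\max_a|\hat Q_{t+1}-Q^*_{t+1}|^2(\bs'^{(k)}_{t,i},a)$; a uniform empirical-process bound over $\calG_1+\calG_2$ (which contains $\hat Q_{t+1}$) replaces this empirical average over the next-state marginal of the stage-$t$ tuples by its population version, which --- since after $\upsilon$-weighting that next-state marginal coincides with the state marginal of $\PP^{\operatorname{agg}}_{t+1}$ by the within-task Markov property --- equals $\EE_{\PP^{\operatorname{agg}}_{t+1}}[\max_a|\hat Q_{t+1}-Q^*_{t+1}|^2(S,a)]$. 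Assumption \ref{asp:coverage} then yields $\max_a f(\bs,a)^2\le\underline c^{-1}\EE[f(\bs,A)^2\mid S=\bs]$, so this term is $\le\frac{\gamma^2\Upsilon^2}{\underline c}\calE(t+1)+\bigO{u/n_\calM}$. Finally, a one-sided uniform equivalence of empirical and population $L_2$ norms over $\calG_1$ (same entropy bound, Bernstein concentration) converts $\|\hat Q^p_t-Q^{*\operatorname{agg}}_t\|^2_{n_\calM}$ into $\calE^p(t)$ and bounds $\hat\calE^p(t)$ simultaneously; collecting the three high-probability events gives the claimed bound on $\max\{\calE^p(t),\hat\calE^p(t)\}$ with probability $\ge 1-3e^{-u}$.

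The hard part will be the empirical-process step: obtaining the sharp rate $(J\log n_\calM/n_\calM)^{2\gamma_1/(2\gamma_1+1)}$ while the regression target is only an estimated $\ywh$ whose error is correlated with the design, and while --- unlike in completeness-based analyses --- the approximation error must be absorbed purely through the HCM structure of $Q^{*\operatorname{agg}}_t$ rather than through a Bellman-closed class. Making this work requires carrying the localization and uniform control not only over $\calG_1$ but also over the data-dependent classes holding $\hat Q_{t+1}$ and $\hat\omega$, so that the temporal dependence across backward steps does not break the concentration; this is the technical core of the lemma.
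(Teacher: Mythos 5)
Your proposal follows essentially the same route as the paper's proof: the same bias--variance decomposition of $\ywh$ around $Q^{*\operatorname{agg}}_t$ (with the bias split into a transition-ratio piece bounded by $\gamma|\omegaih-\omegai|$ and a value piece bounded by $\gamma\Upsilon\,\iota^{(k)}_{t+1,i}$, and the aggregation variance made conditionally mean-zero via the multinomial task assignment), the same basic-inequality-plus-localized-empirical-process machinery with deep-ReLU approximation of the HCM class, the same use of the coverage assumption to pass from the max-over-actions error to $\underline{c}^{-1}\calE(t+1)$ after an empirical-to-population norm equivalence over the class of value functions, and the same union over three events. The only cosmetic difference is which factor carries the hat in the cross-term splitting (you attach $\omegaih$ to the value increment where the paper attaches $\hat Q_{t+1}$ to the ratio increment), which yields identical bounds.
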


\begin{proof}
The upper bound on $|\omega_{t,i}^{(k)}|$ is satisfied by letting $\Upsilon=\frac{\Upsilon_2}{\Upsilon_1}$ and Assumption 4(i).

To streamline the presentation, we abuse a bit of notation and denote $y_{t,i}^{(rwt-0)}=y_{t,i}^{(0)},\ \hat y_{t,i}^{(rwt-0)}=\hat y_{t,i}^{(0)}$, and $\omega_{t,i}^{(0)}=\hat\omega_{t,i}^{(0)}=1$.

We first conduct a bias-variance decomposition of error around the aggregated $Q^*$ function. To be more concrete, for the reweighted responses $\ywh$, we have the following decomposition,
\begin{align}
\ywh=Q^{*\operatorname{agg}}_t(\samplet)+b_{t,i}^{(k)}+v_{t,i}^{(k)}
\end{align}
where the $b_{t,i}^{(k)}$ is the bias term and $v_{t,i}^{(k)}$ is the variance term. 

Recall that  $\ywh=r_{t,i}+\gamma\omegaih\cdot \max_{a\in\calA} \hat Q^{(0)}_{t+1}(\bs_{t+1,i}^{(k)}, a)$ is the RWT pseudo response, $\yw=r_{t,i}+\gamma\omegai\cdot \max_{a\in\calA} Q^{*(0)}_{t+1}(\bs_{t+1,i}^{(k)}, a)$ is the RWT true response. 

Since $\EE[\yw|\samplet]=Q^{*(0)}_t(\samplet) + \delta^{*(k)}_t(\samplet)$, we claim that the bias and the variance have the following form
\begin{align*}
b_{t,i}^{(k)} & = \gamma\hat\omega_{t,i}^{(k)}\cdot \max_{a\in\calA} \hat Q^{(0)}_{t+1}(\bs_{t+1,i}^{(k)}, a) - \gamma\omega_{t,i}^{(k)}\cdot \max_{a\in\calA} Q^{*(0)}_{t+1}(\bs_{t+1,i}^{(k)}, a), \\
v_{t,i}^{(k)} & = \yw-Q^{*\operatorname{agg}}_t(\samplet).
\end{align*}
The bias term can be further decomposed into the bias caused by the estimation error of $Q^*_{t+1}$ and by the estimation error of $\omegai$ as follows,
\begin{align*}|b_{t,i}^{(k)}|&=\Big|\gamma\left(\hat\omega_{t,i}^{(k)}-\omega_{t,i}^{(k)}\right)\cdot \max_{a\in \calA}\hat Q_{t+1}(\bs_{t+1,i}^{(k)}, a)+\gamma\omega_{t,i}^{(k)}\cdot \left(\max_{a\in \calA}\hat Q_{t+1}(\bs_{t+1,i}^{(k)}, a)-\max_{a\in \calA}Q^*_{t+1}(\bs_{t+1,i}^{(k)}, a)\right)\Big|\\&\lesssim 
\gamma |\hat \omega_{t,i}^{(k)}-\omega_{t,i}^{(k)}|+\gamma \Upsilon \iota_{t+1,i}^{(k)}
\end{align*}
where we defined 
$$\iota_{t+1,i}^{(k)}=\Big|\max_{a\in \calA}\hat Q_{t+1}(\bs_{t+1,i}^{(k)}, a)-\max_{a\in \calA}Q^*_{t+1}(\bs_{t+1,i}^{(k)}, a)\Big|.$$ 
The inequality follows from $ |\hat Q_{t+1}|\le |\hat Q_{t+1}^{\operatorname{agg}}|+|\hat Q_{t+1}-\hat Q_{t+1}^{\operatorname{agg}}|\le M_1+M_2\lesssim 1$, which can also be guaranteed if we add a truncation step and $|\omega^{(k)}_{t,i}|\le \Upsilon$.

The variance $v_{t,i}^{k}$ contains two terms. 
\begin{align*}
v_{t,i}^{(k)}=\underbrace{\left(\yw - r^{*(k)}_t(\samplet)-\gamma\EE[\max_{a'}Q^{*{(0)}}_{t+1}(\bs_{t+1},a')|\bs_t=\bs,a_t=a]\right)}_{v_{t,i}^{k}(1)}\\+
\underbrace{\left(r^{*(k)}_t(\samplet)-r^{*\operatorname{agg}}_t(\samplet)\right)}_{v_{t,i}^{k}(2)}.\end{align*}
The first term $v_{t,i}^{k}(1)$ comes from the intrinsic variance of value iteration and the second term $v_{t,i}^{k}(2)$ comes from the aggregation process. We now verify that $v_{t,i}^{(k)}$ is indeed the variance, with the conditional mean on $\samplet$ being 0. While it's straightforward from the Bellman Equation and the definition of $\yw$ that $\EE[v_{t,i}^{(k)}(1)|\samplet]=0$, generally we have  $\EE[v_{t,i}^{(k)}(2)|\samplet]\neq0$ for fixed $k$. However, when we pool the data and relabel them from $i=1$ to $n_\calM$, $k$ is random and can be regarded as $k_i$. Further as $n_k$ is itself drawn from a binomial distribution, we know $\{v_{t,i}^{k_i}(2)\}_{i=1}^{n_\calM}$ are i.i.d. From the definition of aggregate function it is straightforward to check that $\EE[v_{t,i}^{k_i}(2)|\samplet]=0$.


After clarifying the decomposition, we can shift to the analysis of nonparametric least squares. We will first state the following two lemmas. The first one characterizes the distance between $L_n$-norm and $L_2$-norm. The second bounded the tail of weighted empirical process.
\begin{lemma}
\label{lemma:L2-Ln-relation}
Let $z_1,\cdots,z_n\in \calZ$ be i.i.d. copies of $\bz$, $\calG$ be a $b$-uniformly-bounded function class satisfying $\log(\calN_\infty(\epsilon,\calG,\bz_1^n))\le v\log\left(\frac{ebn}{\epsilon}\right)$ for some quantity $v$. Then there exists $c_1,c_2,c_3$ such that as long as $t\ge c_1\sqrt{\frac{v\log n}{n}}$, with probability at least $1-c_2 e^{-c_3nt^2}$, we have
\[\Big|\|g\|_n^2-\|g\|_2^2\Big|\le \frac{1}{2}(\|g\|_2^2+t^2),\quad \forall g\in \calG.\]
\end{lemma}

\begin{proof}
The proof consists of a standard symmetrization technique followed by chaining. See, for example, Theorem 14.1 and Proposition 14.25 in \cite{wainwright2019high}, Theorem 19.3 in \cite{gyorfi2002distribution}, or Lemma 3 in \cite{fan2023factor}. Note here we use covering number instead of pseudo dimension to allow application to the class of value functions which consists of maxima over $Q$-functions.
\end{proof}

\begin{lemma}
\label{lemma:tail-empirical-process}
Let $z_1,\cdots,z_n$ be fixed and $\epsilon_1,\cdots,\epsilon_n$ be i.i.d. sub-Gaussian random variables with variance parameter $\sigma$. Let $\tilde G$ be a subset of $b$-uniformly bounded functions and $\tilde g$ be a fixed function. Suppose for some quantity $v$, it holds that $\log(\calN_\infty(\epsilon,\calG,\bz_1^n))\le v\log\left(\frac{ebn}{\epsilon}\right)$. Then with probability at least $1-c_2\log(1/\epsilon)e^{-t}$, we have for some constants $c_1,c_2$, \[\Big|\frac{1}{n}\sum_{i=1}^n\epsilon_i(g(z_i)-\tilde g(z_i))\Big|\le c_1(\|g-\tilde g\|_n+\epsilon)\sqrt{v_n^2+\frac{t}{n}},\quad \forall g\in \calG\]
where $v_n=\sqrt{\frac{v\log n}{n}}$.
\end{lemma}

\begin{proof}
The proof can be completed by a standard chaining technique followed by peeling device. See, for example, Lemma 4 in \cite{fan2023factor}.
\end{proof}

Recall that we define $\calG(L,N,M,B)$ to be the ReLU network with depth $L$, width $N$, truncation level $M$ and the bound on weights $B$. To facilitate presentation, we also define $\tilde \calG(L,N,M,B)=\{Q:Q(\cdot,a)\in \calG(L,N,M,B),\  \forall a\}$. 
By the approximation results for ReLU neural network \citep{fan2023factor}, we have that for $1\le M\lesssim 1,\log B\asymp \log n$,
\[\sup_{Q^*\in \calH,|Q^*|\le 1}\inf_{g\in \tilde\calG(L,N,M,B)}\|g-Q^*\|_\infty\le (NL)^{-2\gamma(\calH)}.\]
Therefore, pick $1\le M_1\lesssim 1,\log B_1\asymp \log n$, there exists a $g_t^p\in \tilde\calG(L_1,N_1,M_1,B_1)$ such that $\|g_t^p-Q^{*\operatorname{agg}}_t\|_\infty\le (N_1L_1)^{-2\gamma_1}$, where we used the Assumption 3 and 7.

From the optimality of our pooled estimator we have that 

\[\sum_{i=1}^{n_\calM}(\hat y_{t,i}^{(rwt-k_i)}-\hat Q_t^p(\sampleki))^2\le \sum_{i=1}^{n_\calM}(\hat y_{t,i}^{(rwt-k_i)}- g_t^p(\sampleki))^2.\]
After some algebra we get that 
\[\|\hat Q_t^p-Q_t^{*\operatorname{agg}}\|^2_{n_\calM,\hat\PP_t^{\operatorname{agg}}}\le \|g_t^p-Q_t^{*\operatorname{agg}}\|^2_{n_\calM,\hat\PP_t^{\operatorname{agg}}}+\frac{2}{n_\calM}\sum_{i=1}^{n_\calM}(\hat y_{t,i}^{rwt-k_i}-Q_{t,i}^{*\operatorname{agg}})\cdot(\hat Q_{t,i}^p-g_{t,i}^p).\]
By triangle inequality we have 
\[\|\hat Q_t^p-g_t^p\|^2_{n_\calM,\hat\PP_t^{\operatorname{agg}}}\le 2\|\hat Q_t^p-Q_t^{*\operatorname{agg}}\|^2_{n_\calM,\hat\PP_t^{\operatorname{agg}}}+2\|g_t^p-Q_t^{*\operatorname{agg}}\|^2_{n_\calM,\hat\PP_t^{\operatorname{agg}}},\]
which combined with the above inequality and the approximation of $g_t^p$ implies that
\[\|\hat Q_t^p-g_t^p\|^2_{n_\calM,\hat\PP_t^{\operatorname{agg}}}\le 4(N_1L_1)^{-4\gamma_1}+\frac{2}{n_\calM}\sum_{i=1}^{n_\calM}(\hat y_{t,i}^{rwt-k_i}-Q_{t,i}^{*\operatorname{agg}})\cdot(\hat Q_{t,i}^p-g_{t,i}^p).\]

Recall the bias-variance decomposition of $\hat y_{t,i}^{rwt-k_i}-Q_{t,i}^{*\operatorname{agg}}=b_{t,i}^{k_i}+v_{t,i}^{k_i}$. For the bias term we directly use Cauchy-Schwartz Inequality and arrives at
\[\Big|\frac{1}{n_\calM}\sum_{i=1}^{n_\calM}b_{t,i}^{k_i}(\hat Q_{t,i}^p-g_{t,i}^p)\Big|\le \|\hat Q_t^p-g_t^p\|_{n_\calM,\hat\PP_t^{\operatorname{agg}}}\sqrt{\frac{1}{n_\calM}\sum_{i=1}^{n_\calM}(b_{t,i}^{k_i})^2}.\]

For the variance term we apply the lemma to bound the tail of empirical process. To begin with, note that by Lemma 7 in \cite{fan2023factor}, the $L_\infty$ covering number of $\calG (L_1,N_1,M_1,B_1)$ can be bounded by $\log \calN_{\infty}(\epsilon,\calG (L_1,N_1,M_1,B_1),[0,1]^d)\le N_1^2L_1^2 \log(\frac{N_1L_1}{\epsilon})$. Also, $\tilde\calG (L_1,N_1,M_1,B_1)=\prod_{a=1}^J \calG_a (L_1,N_1,M_1,B_1)$. Therefore, letting $v=JN_1^2L_1^2\log(N_1L_1)$ and $v_n=\sqrt{\frac{v\log(n)}{n}}$, it holds that \[\log \calN_{\infty}(\epsilon,\tilde\calG (L_1,N_1,M_1,B_1),\bz_1^n)\le J\log \calN_{\infty}(\epsilon,\calG (L_1,N_1,M_1,B_1),\bz_1^n)\lesssim v\log(\frac{ebn}{\epsilon}).\]

By Lemma \ref{lemma:tail-empirical-process}, with probability at least $1-e^{-u}$,
\[\Big|\frac{1}{n_\calM}\sum_{i=1}^{n_\calM}v_{t,i}^{k_i}(\hat Q_{t,i}^p-g_{t,i}^p)\Big|\lesssim (\|\hat Q_t^p-g_t^p\|_{n_\calM,\hat\PP_t^{\operatorname{agg}}}+v_{n_\calM})\sqrt{v_{n_\calM}^2+\frac{u}{n_\calM}} .\]

Putting the pieces together, we get that 
\begin{align*}
\|\hat Q_t^p-g_t^p\|^2_{n_\calM,\hat\PP_t^{\operatorname{agg}}}\lesssim &(N_1L_1)^{-4\gamma_1}+\|\hat Q_t^p-g_t^p\|_{n_\calM,\hat\PP_t^{\operatorname{agg}}}\sqrt{\frac{1}{n_\calM}\sum_{i=1}^{n_\calM}\left(\gamma |\hat \omega_{t,i}^{k_i}-\omega_{t,i}^{k_i}|+\gamma \Upsilon \iota_{t+1,i}^{(k_i)}\right)^2}\\
+&(\|\hat Q_t^p-g_t^p\|_{n_\calM,\hat\PP_t^{\operatorname{agg}}}+v_{n_\calM})\sqrt{v_{n_\calM}^2+\frac{u}{n_\calM}}.
\end{align*}

Simplifying the terms we obtain that with probability at least $1-e^{-u}$,
\begin{equation}
\label{equ:basic-inequality-simplified-2}\|\hat Q_t^p-g_t^p\|^2_{n_\calM,\hat\PP_t^{\operatorname{agg}}}\lesssim (N_1L_1)^{-4\gamma_1}+\frac{\gamma^2}{n_\calM}\sum_{i=1}^{n_\calM}|\hat \omega_{t,i}^{k_i}-\omega_{t,i}^{k_i}|^2+\frac{\gamma^2 \Upsilon^2}{n_\calM}\sum_{i=1}^{n_\calM}(\iota _{t+1,i}^{(k_i)})^2+v_{n_\calM}^2+\frac{u}{n_\calM}.\end{equation}

Next, we tackle the two bias terms.
Recall the estimation error bound of transition density ratio $\frac{1}{n_\calM}\sum_{i=1}^{n_\calM}|\hat \omega_{t,i}^{k_i}-\omega_{t,i}^{k_i}|^2=\hat \Omega(t)$.

The second bias term is a little bit tricky because it's a sample-version 2-norm of V-function. It turns out we can translate it into population norm and use the Assumption 
5 to translate to the estimation error of $t+1$.

To be more concrete, we need to bound the metric entropy of $\calG_v=\{\max_a g(\cdot,a):g\in \tilde\calG (L_1,N_1,M_1,B_1)\}$.

Let $g_1,\cdots,g_\Pi$ be the $\epsilon$-covering set of $\calG=\calG (L_1,N_1,M_1,B_1)$, then we have that for all $ g\in\calG$, there exists a $\pi(g)$ such that $\|g-g_{\pi(g)}\|_\infty\le \epsilon$. We argue that $\max_{a=1}^J g_{i_a}(\cdot)$ indexed by a $J$-tuple $(i_1,\cdots,i_J)\in [\Pi]^J$ is an $\epsilon$-covering set of $\calG_v$.

In fact, for any function $g_v\in\calG_v$, by definition we have $g_v=\max_a g(\cdot,a)$. Again we can find $i_1,i_2,\cdots,i_J$ such that $\|g(\cdot,a)-g_{i_a}(\cdot)\|_\infty\le \epsilon$ for every $a\in [J]$. It then follows that $\|g_v-\max_{a=1}^J g_{i_a}(\cdot)\|_\infty\le \epsilon$.

The above reasoning shows that $\log N_\infty(\epsilon,\calG_v)\le J\log N_\infty(\epsilon,\calG)$. 

In view of Lemma \ref{lemma:L2-Ln-relation}, we have that there exists some universal constant $c$, such that with probability at least $1-e^{-u}$,
\[\frac{1}{n_\calM}(\iota_{t+1,i}^{k_i})^2\le \frac{3}{2}\EE[(\iota_{t+1,i}^{k_i})^2]+c(v_{n_\calM}^2+\frac{u}{{n_\calM}}).\]

We further connect the population quantities. By the coverage assumption 5, we have that
\begin{align*}\EE[(\iota_{t+1,i}^{(k_i)})^2]&=\EE_{\bs\sim\PP_{t+1}^{\operatorname{agg}}}\Big|\max_{a\in \calA}\hat Q_{t+1}(\bs, a)-\max_{a\in \calA}Q^*_{t+1}(\bs, a)\Big|^2\\&\le\EE_{\bs\sim\PP_{t+1}^{\operatorname{agg}}}\max_{a\in \calA}\Big|\hat Q_{t+1}(\bs, a)-Q^*_{t+1}(\bs, a)\Big|^2\\
&\le \frac{1}{\underline{c}}\EE_{(\bs,a)\sim\PP_{t+1}^{\operatorname{agg}}}\Big|\hat Q_{t+1}(\bs, a)-Q^*_{t+1}(\bs, a)\Big|^2\\
&=\frac{1}{\underline{c}}\calE(t+1).
\end{align*}

Plugging to (\ref{equ:basic-inequality-simplified-2}) and applying a union bound, we obtain with probability at least $1-2e^{-u}$,
\begin{align*}\|\hat Q_t^p-g_t^p\|^2_{n_\calM,\hat\PP_t^{\operatorname{agg}}}&\lesssim (N_1L_1)^{-4\gamma_1}+\gamma^2 \hat\Omega(t)+\frac{\gamma^2 \Upsilon^2}{\underline{c}}\calE(t+1)+v_{n_\calM}^2+\frac{u}{{n_\calM}}\\
&\lesssim (N_1L_1)^{-4\gamma_1}+\gamma^2 \hat\Omega(t)+\frac{\gamma^2 \Upsilon^2}{\underline{c}}\calE(t+1)+\frac{JN_1^2L_1^2\log(N_1L_1)\log(n_\calM)}{n_\calM}+\frac{u}{{n_\calM}}
\end{align*}
where we plugged in the expression of $v_n$.

Set the neural network parameters such that $N_1L_1\asymp \left(\frac{J}{n_\calM}\right)^{\frac{1}{4\gamma_1+2}}$, also as $\|\hat Q_t^p-Q^{*\operatorname{agg}}_t\|^2_{n_\calM,\hat\PP_t^{\operatorname{agg}}}\le 2\|\hat Q_t^p-g_t^p\|^2_{n_\calM,\hat\PP_t^{\operatorname{agg}}}+2\|Q^{*\operatorname{agg}}_t-g_t^p\|^2_{n_\calM,\hat\PP_t^{\operatorname{agg}}}$, we attain that with probability at least $1-2e^{-u}$,
\[\hat\calE^p(t)=\|\hat Q_t^p-Q^{*\operatorname{agg}}_t\|^2_{n_\calM,\hat\PP_t^{\operatorname{agg}}}\lesssim \left(\frac{J\log n_\calM}{n_\calM}\right)^{\frac{2\gamma_1}{2\gamma_1+1}}+\gamma^2 \hat\Omega(t)+\frac{\gamma^2 \Upsilon^2}{\underline{c}}\calE(t+1)+\frac{u}{n_\calM}.\]

Note that we have calculated the metric entropy of $\tilde\calG$. 
Applying the Lemma \ref{lemma:L2-Ln-relation} again and a union bound we get with probability at least $1-3e^{-u}$,
\begin{align*}
\calE^p(t)&=\|\hat Q_t^p-Q^{*\operatorname{agg}}_t\|^2_{2,\PP_t^{\operatorname{agg}}}\lesssim
\|\hat Q_t^p-g_t^p\|^2_{n_\calM,\hat\PP_t^{\operatorname{agg}}}+v_{n_\calM}^2+\frac{u}{n_\calM}
\\&\lesssim \left(\frac{J\log n_\calM}{n_\calM}\right)^{\frac{2\gamma_1}{2\gamma_1+1}}+\gamma^2 \hat\Omega(t)+\frac{\gamma^2 \Upsilon^2}{\underline{c}}\calE(t+1)+\frac{u}{n_\calM},
\end{align*}
and therefore, $\max\{\calE^p(t),\hat\calE^p(t)\}\lesssim \left(\frac{J\log n_\calM}{n_\calM}\right)^{\frac{2\gamma_1}{2\gamma_1+1}}+\gamma^2 \hat\Omega(t)+\frac{\gamma^2 \Upsilon^2}{\underline{c}}\calE(t+1)+\frac{u}{n_\calM}$.

\end{proof}

\begin{lemma}
\label{lemma:calE0-bound}
With probability at least $1-e^{-4t}$, we have
\[\calE_0(t)\lesssim \paren{\frac{J\log n_0}{n_0}}^{\frac{2\gamma_2}{2\gamma_2+1}}+\frac{\gamma^2}{\underline{c}}\calE_0(t+1)+\frac{u}{n_0}+\calE^p_0(t).\]
\end{lemma}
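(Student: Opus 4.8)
The estimator at stage $t$ is $\hat Q_t=\hat Q_t^p+\hat\delta_t$, and since $Q^*_t=Q^{*\operatorname{agg}}_t-\delta^{*\operatorname{agg}}_t$ we may write $\hat Q_t-Q^*_t=(\hat\delta_t+\delta^{*\operatorname{agg}}_t)+\Delta_t^p$ with $\Delta_t^p:=\hat Q_t^p-Q^{*\operatorname{agg}}_t$ the pilot error, whose $\PP_t^{(0)}$-norm is exactly $\sqrt{\calE_0^p(t)}$. The plan is to treat $\hat\delta_t$ as a nonparametric least-squares fit over $\tilde\calG(L_2,N_2,M_2,B_2)$ to the responses $\hat y^{(0)}_{t,i}-\hat Q_t^p(\bs^{(0)}_{t,i},a^{(0)}_{t,i})$, whose conditional mean is $-\delta^{*\operatorname{agg}}_t-\Delta_t^p$ up to the usual value-iteration bias. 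Concretely, mirroring Lemma \ref{lemma:calEp-bound}, decompose $\hat y^{(0)}_{t,i}-\hat Q_t^p(\bs^{(0)}_{t,i},a^{(0)}_{t,i})=-\delta^{*\operatorname{agg}}_t(\bs^{(0)}_{t,i},a^{(0)}_{t,i})-\Delta_t^p(\bs^{(0)}_{t,i},a^{(0)}_{t,i})+b^{(0)}_{t,i}+v^{(0)}_{t,i}$, where $b^{(0)}_{t,i}=\gamma\bigl(\max_a\hat Q^{(0)}_{t+1}(\bs^{(0)}_{t+1,i},a)-\max_aQ^{*(0)}_{t+1}(\bs^{(0)}_{t+1,i},a)\bigr)$ and $v^{(0)}_{t,i}=y^{(0)}_{t,i}-Q^{*(0)}_t(\bs^{(0)}_{t,i},a^{(0)}_{t,i})$ is conditionally mean-zero and sub-Gaussian by the Bellman equation and Assumption \ref{asp:regularity}. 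Note that, unlike the pooled case, there is no aggregation-variance term here since the target is not aggregated.

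Next I would run the standard basic-inequality argument. Since $\delta^{*\operatorname{agg}}_t(\cdot,a)\in\calH_2$ with $\gamma^*(\calH_2)=\gamma_2$ (Assumption \ref{asp:task-similarity-1}), ReLU approximation gives $g_t^\delta\in\tilde\calG(L_2,N_2,M_2,B_2)$ with $\|g_t^\delta+\delta^{*\operatorname{agg}}_t\|_\infty\lesssim(N_2L_2)^{-2\gamma_2}$. Optimality of $\hat\delta_t$ yields $\|\hat\delta_t-g_t^\delta\|_{n_0}^2\le 2\langle \hat y^{(0)}_{t,\cdot}-\hat Q_t^p-g_t^\delta,\ \hat\delta_t-g_t^\delta\rangle_{n_0}$, and I would split the cross term into four pieces: (i) the approximation piece, $\lesssim(N_2L_2)^{-2\gamma_2}\|\hat\delta_t-g_t^\delta\|_{n_0}$ by Cauchy--Schwarz; (ii) the pilot piece, $\lesssim\|\Delta_t^p\|_{n_0}\|\hat\delta_t-g_t^\delta\|_{n_0}$ with $\|\Delta_t^p\|_{n_0}^2=\hat\calE_0^p(t)$; (iii) the bias piece, $\lesssim\|b^{(0)}\|_{n_0}\|\hat\delta_t-g_t^\delta\|_{n_0}$ with $\|b^{(0)}\|_{n_0}^2=\gamma^2\,n_0^{-1}\sum_i(\iota^{(0)}_{t+1,i})^2$, $\iota^{(0)}_{t+1,i}=|\max_a\hat Q_{t+1}(\bs^{(0)}_{t+1,i},a)-\max_aQ^*_{t+1}(\bs^{(0)}_{t+1,i},a)|$; and (iv) the noise piece, handled by Lemma \ref{lemma:tail-empirical-process} with metric-entropy parameter $v=JN_2^2L_2^2\log(N_2L_2)$, giving $\lesssim(\|\hat\delta_t-g_t^\delta\|_{n_0}+v_{n_0})\sqrt{v_{n_0}^2+u/n_0}$ with $v_{n_0}=\sqrt{v\log n_0/n_0}$.

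Two population-translation steps close the loop. For (iii), $\max_a\hat Q_{t+1}$ lies in the class of coordinatewise maxima of networks from the sum of the classes $\tilde\calG(L_1,N_1,M_1,B_1)$ and $\tilde\calG(L_2,N_2,M_2,B_2)$ (reflecting $\hat Q_{t+1}=\hat Q^p_{t+1}+\hat\delta_{t+1}$), whose $\log$-covering number is $J$ times that of the underlying network classes; hence Lemma \ref{lemma:L2-Ln-relation} gives $n_0^{-1}\sum_i(\iota^{(0)}_{t+1,i})^2\lesssim\EE_{\bs\sim\PP^{(0)}_{t+1}}[(\iota^{(0)}_{t+1})^2]+v_{n_0}^2+u/n_0$, and bounding the squared maximum by the sum over $a\in[J]$ together with positive action coverage for the target (Assumption \ref{asp:coverage}) gives $\EE_{\bs\sim\PP^{(0)}_{t+1}}[(\iota^{(0)}_{t+1})^2]\le\underline c^{-1}\calE_0(t+1)$. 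For (ii), Lemma \ref{lemma:L2-Ln-relation} again gives $\hat\calE_0^p(t)\lesssim\calE_0^p(t)+v_{n_0}^2+u/n_0$. Collecting these with a self-bounding (AM--GM) step, choosing $N_2L_2\asymp(n_0/J)^{1/(4\gamma_2+2)}$ so $(N_2L_2)^{-4\gamma_2}$ and $v_{n_0}^2$ are both of order $(J\log n_0/n_0)^{2\gamma_2/(2\gamma_2+1)}$, converting $\|\hat\delta_t-g_t^\delta\|_{n_0}^2$ back to $\PP_t^{(0)}$-norm by a final use of Lemma \ref{lemma:L2-Ln-relation}, and unfolding $\hat Q_t-Q^*_t=(\hat\delta_t-g_t^\delta)+(g_t^\delta+\delta^{*\operatorname{agg}}_t)+\Delta_t^p$ by the triangle inequality (middle term $\lesssim(N_2L_2)^{-2\gamma_2}$, last term contributing $\calE_0^p(t)$) yields the claim after a union bound over the finitely many high-probability events.

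I expect the main obstacle to be the dependence structure rather than any single inequality. Within a trajectory the stage-$t$ target sample is not independent of $\hat Q^{(0)}_{t+1}$ (which enters $b^{(0)}_{t,i}$) nor of $\hat\delta_t$, and the ``true'' regression function for $\hat\delta_t$ is itself random because it contains $\Delta_t^p=\hat Q_t^p-Q^{*\operatorname{agg}}_t$. The fix --- which is really the point --- is to never condition on $\hat Q^{(0)}_{t+1}$ or $\hat Q_t^p$ as if fixed: one controls the bias-to-population conversion in (iii) and the empirical-process term in (iv) \emph{uniformly} over the low-metric-entropy neural-network classes they inhabit, and one treats the NLS target as the fixed function $-\delta^{*\operatorname{agg}}_t$ (well approximated by $\tilde\calG(L_2,N_2,M_2,B_2)$) while carrying $\Delta_t^p$ as an additive perturbation that is absorbed into $\calE_0^p(t)$. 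A minor care point: the coverage constant $\underline c$ must be available for the target task (not only the aggregate), which is precisely why the coefficient of $\calE_0(t+1)$ comes out as $\gamma^2/\underline c$.
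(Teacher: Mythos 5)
Your proposal is correct and follows essentially the same route as the paper's proof: the same basic inequality from the optimality of $\hat\delta_t$, the same decomposition of $\hat y^{(0)}_{t,i}-\hat Q^p_{t,i}-Q^*_{t,i}+Q^{*\operatorname{agg}}_{t,i}$ into an approximation piece, the pilot error $\hat\calE^p_0(t)$, the value-iteration bias (converted to $\underline{c}^{-1}\calE_0(t+1)$ via the covering number of the max-class, Lemma \ref{lemma:L2-Ln-relation}, and the coverage assumption), and a mean-zero Bellman noise handled by Lemma \ref{lemma:tail-empirical-process}, with the same choice $N_2L_2\asymp n_0^{1/(4\gamma_2+2)}$. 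Your closing observations (no aggregation-variance term for the target, uniformity over the network classes in place of conditioning on $\hat Q_{t+1}$ and $\hat Q^p_t$, and the need for target-side coverage) accurately reflect what the paper's argument implicitly relies on.
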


\begin{proof}
Note that although $n_0$ is random, we can condition on fixed $n_0$. The pipeline of this proof is similar to Lemma \ref{lemma:calEp-bound}, with the bias now coming from pooling at the same time stage. Note that in this proof, all the neural network size is confined to be $(L_2,N_2,M_2,B_2)$ and sometimes we omit it.

Again, using the approximation results for ReLU neural networks \citep{fan2023factor}, there exists a $g_t\in \tilde\calG$ such that $\|Q_t^*-Q_t^{*\operatorname{agg}}-g_t\|_\infty\le (N_2L_2)^{-2\gamma_2}$, where we used the Assumption 3
and 7
.

From the optimality of the debiased estimator, we have that 
\[\sum_{i=1}^{n_0}(\hat y_{t,i}^{(0)}-\hat Q^p_{t,i}-\hat\delta_t(\bs_{t,i}^{(0)},a_{t,i}^{(0)}))^2\le \sum_{i=1}^{n_0}(\hat y_{t,i}^{(0)}-\hat Q^p_{t,i}-g_t(\bs_{t,i}^{(0)},a_{t,i}^{(0)}))^2.\]

After some algebra we have that 
\[\|Q_t^*-Q_t^{*\operatorname{agg}}-\hat\delta_t\|_{n_0,\hat\PP_t^{(0)}}^2\le \|Q_t^*-Q_t^{*\operatorname{agg}}-g_t\|_{n_0,\hat\PP_t^{(0)}}^2+\frac{2}{n_0}\sum_{i=1}^{n_0}(\hat y_{t,i}^{(0)}-Q^*_{t,i}-\hat Q^p_{t,i}+Q^{*\operatorname{agg}}_{t,i})\cdot (\hat\delta_{t,i}-g_{t,i}).\]

By the definition of $g_t$, the first term on the right-hand-side can be bounded by $(N_2L_2)^{-2\gamma_2}$. We again decompose the second term into the bias part and variance part.

$\hat Q^p_{t,i}-Q^{*\operatorname{agg}}_{t,i}$ is viewed as bias, while for $\hat y_{t,i}^{(0)}-Q^*_{t,i}$, we treat the error incurred by estimation error after time $t$ as bias while the randomness at this stage as variance. Specifically, recall the pseudo outcome $\hat y_{t,i}^{(0)}=r_{t,i}^{(0)}+\gamma\max_{a\in\calA}\hat Q_{t+1}(\bs_{t+1,i}^{(0)},a)$.
Define the true outcome $y_{t,i}^{(0)}=r_{t,i}^{(0)}+\gamma\max_{a\in\calA} Q^*_{t+1}(\bs_{t+1,i}^{(0)},a)$, we have that $\EE[y_{t,i}^{(0)}-Q_{t,i}^*|\bs_{t,i}^{(0)},a_{t,i}^{(0)}]=0$ by the Bellman Equation.

On the other hand, we can view $\zeta_{t,i}=\hat y_{t,i}^{(0)}-y_{t,i}^{(0)}$ as another term of bias.

Therefore, the basic inequality boils down to 
\begin{align*}
\|\hat \delta_t-g_t\|^2_{n_0,\hat\PP_t^{(0)}}&\lesssim (N_2L_2)^{-4\gamma_2}+\frac{1}{n_0}\sum_{i=1}^{n_0}(\hat y_{t,i}^{(0)}-y_{t,i}^{(0)}+y_{t,i}^{(0)}-Q^*_{t,i}-\hat Q^p_{t,i}+Q^{*\operatorname{agg}}_{t,i})\cdot (\hat\delta_{t,i}-g_{t,i})\\
&\lesssim (N_2L_2)^{-4\gamma_2}+\underbrace{\Big|\frac{1}{n_0}\sum_{i=1}^{n_0}(y_{t,i}^{(0)}-Q^*_{t,i})\cdot (\hat\delta_{t,i}-g_{t,i})\Big|}_{T_1}+\underbrace{\|\hat \delta_t-g_t\|_{n_0,\hat\PP_t^{(0)}}\sqrt{\frac{1}{n_0}\sum_{i=1}^{n_0}(\hat y_{t,i}^{(0)}-y_{t,i}^{(0)})^2}}_{T_2}\\
&+\underbrace{\|\hat \delta_t-g_t\|_{n_0,\hat\PP_t^{(0)}}\sqrt{\frac{1}{n_0}\sum_{i=1}^{n_0}(\hat Q^p_{t,i}-Q^{*\operatorname{agg}}_{t,i})^2}}_{T_3}.
\end{align*}

We deal with $T_1$--$T_3$ separately. Define $v=JN_2^2L_2^2\log(N_2L_2)$ and $v_n=\sqrt{\frac{v\log n}{n}}$.
By Lemma \ref{lemma:tail-empirical-process} and similar analysis on metric entropy of $\tilde\calG$, we have with probability at least $1-e^{-u}$,
\[T_1\le (\|\hat \delta_t-g_t\|_{n_0,\hat\PP_t^{(0)}}+v_{n_0})\sqrt{v_{n_0}^2+\frac{u}{n_0}}.\]

For $T_2$, we again bridge it through the 
population version. 
Similarly, by bounding the metric entropy of $\calG_v=\{\max_a g(\cdot,a):g\in\tilde\calG\}$, we can apply Lemma \ref{lemma:L2-Ln-relation} and arrive at
\[\frac{1}{n_0}\sum_{i=1}^{n_0}(\hat y_{t,i}^{(0)}-y_{t,i}^{(0)})^2\lesssim \EE[(\hat y_{t,i}^{(0)}-y_{t,i}^{(0)})^2]+v_{n_0}^2+\frac{u}{n_0}\]with probability at least $1-e^{-u}$. We also have, by Assumption 
5, that
\[\EE[(\hat y_{t,i}^{(0)}-y_{t,i}^{(0)})^2]\le \frac{\gamma^2}{\underline{c}}\calE_0(t+1).\]
Therefore, we have $T_2\lesssim \|\hat \delta_t-g_t\|_{n_0,\hat\PP_t^{(0)}}\sqrt{\frac{\gamma^2}{\underline{c}}\calE_0(t+1)+v_{n_0}^2+\frac{u}{n_0}}$.

$T_3$ is just given by $T_3=\|\hat \delta_t-g_t\|_{n_0,\hat\PP_t^{(0)}}\sqrt{\hat\calE_0^p(t)}$. Putting bounds on $T_1$--$T_3$ together we can obtain that with probability at least $1-2e^{-u}$,
\[\|\hat \delta_t-g_t\|_{n_0,\hat\PP_t^{(0)}}^2\le (N_2L_2)^{-4\gamma_2}+v_{n_0}^2+\hat\calE_0^p(t)+\frac{\gamma^2}{\underline{c}}\calE_0(t+1)+\frac{u}{n_0}.\]

Again using Lemma \ref{lemma:L2-Ln-relation}, we have 
with probability at least $1-3e^{-u}$,
\[\|\hat \delta_t-g_t\|_{n_0,\PP_t^{(0)}}^2\le (N_2L_2)^{-4\gamma_2}+v_{n_0}^2+\hat\calE_0^p(t)+\frac{\gamma^2}{\underline{c}}\calE_0(t+1)+\frac{u}{n_0}.\]

It follows that
\begin{align*}\|\hat Q_t-Q_t^*\|^2_{n_0,\PP_t^{(0)}}&\lesssim \|\hat \delta_t-g_t\|_{n_0,\PP_t^{(0)}}^2+\|Q^*_t-Q^{*\operatorname{agg}}_t-g_t\|_{n_0,\PP_t^{(0)}}^2+\|\hat Q^p_t-Q^{*\operatorname{agg}}_t\|_{n_0,\PP_t^{(0)}}^2\\
&\lesssim (N_2L_2)^{-4\gamma_2}+v_{n_0}^2+\hat\calE_0^p(t)+\frac{\gamma^2}{\underline{c}}\calE_0(t+1)+\frac{u}{n_0}+\calE^p_0(t)\\
&\lesssim (N_2L_2)^{-4\gamma_2}+v_{n_0}^2+\frac{\gamma^2}{\underline{c}}\calE_0(t+1)+\frac{u}{n_0}+\calE^p_0(t)
\end{align*}
where the first inequality is by $\hat Q_t=\hat\delta_t+\hat Q_t^p$ and the triangle inequality, and the last inequality applies Lemma \ref{lemma:L2-Ln-relation} on $\hat\calE^p_0(t)$.

Note that $v_{n_0}^2=\frac{JN_2^2L_2^2\log(N_2L_2)\log n_0}{n_0}$. Set the neural network size such that $N_2L_2\asymp n_0^{\frac{1}{4\gamma_2+2}}$, we get with probability at least $1-4e^{-u}$,
\[\calE_0(t)=\|\hat Q_t-Q_t^*\|^2_{n_0,\PP_t^{(0)}}\lesssim (\frac{J\log n_0}{n_0})^{\frac{2\gamma_2}{2\gamma_2+1}}+\frac{\gamma^2}{\underline{c}}\calE_0(t+1)+\frac{u}{n_0}+\calE^p_0(t).\]
\end{proof}

\textbf{Proof of Theorem 8
}
\begin{proof}
From Lemma \ref{lemma:calEp-bound}, \ref{lemma:calE0-bound} and the union bound, we have with probability at least $1-7Te^{-u}$, for every $t\in[T]$ it holds that
\begin{align*}
\calE^p(t)\lesssim \left(\frac{J\log n_\calM}{n_\calM}\right)^{\frac{2\gamma_1}{2\gamma_1+1}}+\gamma^2 \hat\Omega(t)+\frac{\gamma^2 \Upsilon^2}{\underline{c}}\calE(t+1)+\frac{u}{n_\calM},
\end{align*}
\[\calE_0(t)\lesssim \paren{\frac{J\log n_0}{n_0}}^{\frac{2\gamma_2}{2\gamma_2+1}}+\frac{\gamma^2}{\underline{c}}\calE_0(t+1)+\frac{u}{n_0}+\calE^p_0(t).\]

By Assumption 6
, we have that 
\[\calE_0(t)=\|\hat Q_t-Q^*_t\|^2_{2,\PP_{t}^{(0)}}\ge \eta \|\hat Q_t-Q^*_t\|^2_{2,\PP_{t}^{\operatorname{agg}}}=\eta \calE(t).\]

Similarly, we have that $\calE(t)\ge \eta \calE_0(t)$, and $\eta \calE_0^p(t)\le\calE^p(t)\le \frac{1}{\eta} \calE_0^p(t)$.

Therefore, we can recursively bound $\calE_0(t)$ as 
\begin{align*}
\calE_0(t)&\lesssim (\frac{J\log n_0}{n_0})^{\frac{2\gamma_2}{2\gamma_2+1}}+\frac{\gamma^2}{\underline{c}}\calE_0(t+1)+\frac{u}{n_0}+\calE^p_0(t)\\
&\lesssim (\frac{J\log n_0}{n_0})^{\frac{2\gamma_2}{2\gamma_2+1}}+\frac{\gamma^2}{\underline{c}}\calE_0(t+1)+\frac{u}{n_0}+\frac{1}{\eta}\calE^p(t)\\
&\lesssim (\frac{J\log n_0}{n_0})^{\frac{2\gamma_2}{2\gamma_2+1}}+\frac{\gamma^2}{\underline{c}}\calE_0(t+1)+\frac{u}{n_0}+\frac{1}{\eta}\left(\frac{J\log n_\calM}{n_\calM}\right)^{\frac{2\gamma_1}{2\gamma_1+1}}+\frac{\gamma^2}{\eta}\hat\Omega(t)+\frac{\gamma^2 \Upsilon^2}{\underline{c}\eta}\calE(t+1)+\frac{u}{n_\calM\eta}\\
&\lesssim (\frac{J\log n_0}{n_0})^{\frac{2\gamma_2}{2\gamma_2+1}}+\frac{\gamma^2}{\underline{c}}\calE_0(t+1)+\frac{u}{n_0}+\frac{1}{\eta}\left(\frac{J\log n_\calM}{n_\calM}\right)^{\frac{2\gamma_1}{2\gamma_1+1}}+\frac{\gamma^2}{\eta}\hat\Omega(t)+\frac{\gamma^2 \Upsilon^2}{\underline{c}\eta^2}\calE_0(t+1)+\frac{u}{n_\calM\eta}\\
&\lesssim (\frac{J\log n_0}{n_0})^{\frac{2\gamma_2}{2\gamma_2+1}}+\frac{1}{\eta}\left(\frac{J\log n_\calM}{n_\calM}\right)^{\frac{2\gamma_1}{2\gamma_1+1}}+\frac{\gamma^2}{\eta}\hat\Omega(t)+\kappa\calE_0(t+1)+\left(\frac{u}{n_0}+\frac{u}{n_\calM \eta}\right)
\end{align*}
where recall that $\kappa=\left(\frac{\gamma^2}{\underline{c}}+\frac{\gamma^2 \Upsilon^2}{\underline{c}\eta^2}\right)$.

As $\calE_0(T+1)=0$, we can iteratively get 
\[\calE_0(t)\lesssim (T-t)\max\{\kappa,1\}^{T-t}\left((\frac{J\log n_0}{n_0})^{\frac{2\gamma_2}{2\gamma_2+1}}+\frac{1}{\eta}\left(\frac{J\log n_\calM}{n_\calM}\right)^{\frac{2\gamma_1}{2\gamma_1+1}}+\frac{\gamma^2 T^2}{\eta}\max_{t\le\tau\le T}\hat\Omega(\tau)+\frac{u}{\min(n_0,n_\calM \eta)}\right).\]

\end{proof}

\section{Transition Ratio Estimation by DNN} \label{append:transition-ratio-estimation}
This section is devoted to establishing density estimation error bound, as well as discussing the density transfer.
Recall the definition of the estimator, 
\begin{align*}
\hat\rho_t^{(k)}:&=\arg\min_{g\in \calG(\bar L,\bar N,\bar M,\bar B)} \frac{1}{2n_k}\sum_{i=1}^{n_k} g(\bs^{(k)}_{t,i},a^{(k)}_{t,i},\bs^\circ_i)^2-\frac{1}{n_k}\sum_{i=1}^{n_k}g(\bs^{(k)}_{t,i},a^{(k)}_{t,i},\bs'^{(k)}_{t,i})
\end{align*}

\subsection{Proof of the First result in Theorem 9
} \label{append:proof-thm:transition-density-estimation-error}

\begin{proof}[Proof of the first result in Theorem 9
]
The proof involves the localization analysis on this loss function $g(\bs^{(k)}_{t,i},a^{(k)}_{t,i},\bs^\circ_i)^2-\frac{1}{2}g(\bs^{(k)}_{t,i},a^{(k)}_{t,i},\bs'^{(k)}_{t,i})$. To lighten notation, we omit $k,t$ as they are fixed throughout the proof.

We first state the following two variants of Lemma \ref{lemma:L2-Ln-relation}.

\begin{lemma}
\label{lemma:L2-Ln-relation-version2}
Let $z_1,\cdots,z_n\in \calZ$ be i.i.d. copies of $\bz$, $b\asymp 1$ and $\calG$ be a $b$-uniformly-bounded function class satisfying $\log(\calN_\infty(\epsilon,\calG,\bz_1^n))\le v\log\left(\frac{ebn}{\epsilon}\right)$ for some quantity $v\in (0,1)$. Then for any constant $\zeta\in (0,1)$, there exists constants $c_1,c_2,c_3$ such that as long as $t\ge c_1\sqrt{\frac{v\log n}{n}}$, with probability at least $1-c_2 e^{-c_3nt^2}$, we have
\[\Big|\frac{1}{n}\sum_{i=1}^n g(z_i)-\EE[g(z_1)]\Big|\le \zeta(\EE|g(z_1)|^2+t^2),\quad \forall g\in \calG.\]
\end{lemma}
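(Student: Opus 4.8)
\textbf{Proof proposal for Lemma~\ref{lemma:L2-Ln-relation-version2}.} The plan is to prove this localized (second--moment--weighted) uniform deviation inequality via the standard recipe of symmetrization, chaining against the empirical entropy, Talagrand--type concentration, and a peeling argument over dyadic shells of $\calG$ organized by their $L_2$ radius. This is essentially the machinery behind Lemma~\ref{lemma:L2-Ln-relation}; the extra bookkeeping here is that we keep track of the population second moment $\EE|g(z_1)|^2$ rather than discarding it, so that the final bound is multiplicative in $\EE|g|^2+t^2$.

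First I would fix a radius $R\ge t^2$ and work on the shell $\calG_{\le R}:=\{g\in\calG:\EE|g(z_1)|^2\le R\}$. Conditionally on the sample $\bz_1^n$, symmetrization followed by Dudley's entropy integral against the empirical $L_2$ metric, combined with the hypothesis $\log\calN_\infty(\epsilon,\calG,\bz_1^n)\le v\log(ebn/\epsilon)$, yields
\[
\EE\sup_{g\in\calG_{\le R}}\Bigl|\frac{1}{n}\sum_{i=1}^n g(z_i)-\EE g(z_1)\Bigr|\;\lesssim\;\sqrt{\frac{vR\log n}{n}}+\frac{bv\log n}{n}.
\]
Since every $g\in\calG_{\le R}$ obeys $\Var\bigl(g(z_1)\bigr)\le R$ and $\|g\|_\infty\le b\asymp 1$, Talagrand's (Bousquet's) inequality for the supremum of a bounded empirical process upgrades this in--expectation bound to a high--probability one: for any $s>0$, with probability at least $1-e^{-s}$,
\[
\sup_{g\in\calG_{\le R}}\Bigl|\frac{1}{n}\sum_{i=1}^n g(z_i)-\EE g(z_1)\Bigr|\;\lesssim\;\sqrt{\frac{vR\log n}{n}}+\frac{bv\log n}{n}+\sqrt{\frac{Rs}{n}}+\frac{bs}{n}.
\]

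Next I would run the peeling device. Using $\|g\|_\infty\le b\asymp1$ we have $\EE|g|^2\le b^2\lesssim1$, so only $J_0\lesssim\log(1/t^2)$ dyadic shells $\calG_j:=\{g\in\calG:2^{j-1}t^2<\EE|g|^2+t^2\le 2^{j}t^2\}$, $j=0,1,\dots,J_0$, are nonempty. On shell $j$ I take $R=2^jt^2$ and $s=s_j:=c_3 n2^jt^2$ in the display above, and then divide the resulting bound through by the shell's lower envelope $\EE|g|^2+t^2\ge 2^{j-1}t^2$; each of the four terms collapses to a constant multiple of either $\sqrt{v\log n/(nt^2)}$ or $\sqrt{c_3}$ (after using $b\asymp1$, $v<1$, $j/2^j\le1$). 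Choosing $c_1$ large enough that the hypothesis $t\ge c_1\sqrt{v\log n/n}$ forces the entropy--driven terms below $\zeta/2$, and $c_3$ small enough (depending only on $\zeta$ and the boundedness constant) to force the concentration terms below $\zeta/2$, gives the claimed inequality uniformly on shell $j$ off an event of probability at most $e^{-c_3 n2^jt^2}$. A union bound over $j\ge0$ costs $\sum_{j\ge0}e^{-c_3 n2^jt^2}\lesssim e^{-c_3 nt^2}$, which is the advertised failure probability after relabeling constants.

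The main obstacle is securing the variance--dependent concentration: a bounded--differences/McDiarmid argument concentrates the supremum only at scale $b/\sqrt n$, independently of the shell radius $R$, which is too crude to survive the division by $2^{j-1}t^2$ in the peeling step. One genuinely needs Talagrand's functional Bernstein inequality so that the fluctuation on the radius--$R$ shell scales like $\sqrt{R/n}$ up to a lower--order $1/n$ term. A secondary point worth stating explicitly is that the covering--number hypothesis is phrased for the random design $\bz_1^n$, so the chaining bound must first be carried out conditionally on the data and only afterwards combined with the concentration step; this is routine but, as in Lemma~\ref{lemma:L2-Ln-relation}, should be noted. For the standard pieces I would cite \cite{wainwright2019high} (Chapter~14) and \cite{gyorfi2002distribution} (Chapter~19), and reuse the ReLU--class entropy bookkeeping from \cite{fan2023factor}.
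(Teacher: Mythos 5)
Your proposal follows essentially the same route as the paper's proof: symmetrization, chaining against the empirical covering number, Talagrand's concentration to get radius-dependent fluctuations of order $\sqrt{R/n}$, and a peeling argument over dyadic shells in the population $L_2$ norm, finished by a union bound. The only cosmetic differences are that the paper first converts the population ball into an empirical ball via Lemma~\ref{lemma:L2-Ln-relation} before chaining and takes a union bound over $O(\log n)$ shells at a fixed deviation level, whereas you let the deviation level $s_j$ grow with the shell so the failure probabilities sum geometrically; both are valid and yield the stated conclusion.
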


\begin{proof}[Proof of Lemma \ref{lemma:L2-Ln-relation-version2}]
Let  $v_n=\sqrt{\frac{v\log n}{n}}$. For $\calB(r,\calG):=\{g\in \calG:\|g\|_2=\sqrt{\EE|g(z_1)|^2}\le r\}$, $r\ge v_n$, we can conduct symmetrization as 
\[\EE\sup_{g\in \calB(r,\calG)}\Big|\frac{1}{n}\sum_{i=1}^n g(z_i)-\EE[g(z_1)]\Big|\le 2\EE\Big[\sup_{g\in \calB(r,\calG)}\Big|\frac{1}{n}\sum_{i=1}^n \epsilon_i g(z_i)\Big|\Big]\]where $\epsilon_i$ are i.i.d. Rademacher variables. 
By applying Lemma \ref{lemma:L2-Ln-relation}, we have with probability at least $1-e^{-cnt^2}$, we have that  $\calB(r,\calG)\subset \calB_n(2r+t,\calG,\bz_1^n):=\{g\in \calG:\|g\|_n\le r\}$.
Applying chaining we have that 
\begin{align*}
\EE\Big[\sup_{g\in \calB(r,\calG)}\Big|\frac{1}{n}\sum_{i=1}^n \epsilon_i g(z_i)\Big|\Big]&\lesssim \frac{1}{\sqrt{n}}\EE\int_{0}^b \sqrt{\log\calN_n(\epsilon,\calB(r,\calG),\bz_1^n)}d\epsilon\\&\lesssim \frac{1}{\sqrt{n}}\EE\int_{0}^{2r+t} \sqrt{\log\calN_n(\epsilon,\calB_n(2r+t,\calG,\bz_1^n),\bz_1^n)}d\epsilon\\&\lesssim \frac{1}{\sqrt{n}}\EE\int_{0}^{2r+t} \sqrt{\log(\calN_\infty(\epsilon,\calG,\bz_1^n))}d\epsilon\\&\lesssim (2r+t)\sqrt{\frac{v\log n}{n}}\lesssim r^2+t^2+\frac{v\log n}{n}\end{align*}
where in the second inequality we used that for $\epsilon>2r+t$, $\calN_n(\epsilon,\calB_n(2r+t,\calG,\bz_1^n),\bz_1^n)=1$.

Therefore, by Talagrand's concentration (Theorem 3.27 in \cite{wainwright2019high}), we have for $t\ge c_1\sqrt{\frac{v\log n}{n}}$, there exist some $c_1,c_2,\tilde c_3$ with probability at least $1-c_2e^{-\tilde c_3nt^2}$,
\[\sup_{g\in \calB(r,\calG)}\Big|\frac{1}{n}\sum_{i=1}^n g(z_i)-\EE[g(z_1)]\Big|\le \frac{\zeta}{4}r^2+t^2.\]

We now use the peeling argument to extend to uniform $r$. That is, define $\calS_m=\{g\in \calG:2^m v_n\le\sqrt{\EE|g(z_1)|^2}\le 2^{m+1}v_n\}$. We have 
\[\Big|\frac{1}{n}\sum_{i=1}^n g(z_i)-\EE[g(z_1)]\Big|\le \frac{\zeta}{4}(2^{m+1}v_n)^2+t^2\le\zeta\EE|g(z_1)|^2+t^2\]for every $g\in\calS_m$. A union bound then indicates that with probability at least $1-c_2(\log n) e^{-\tilde c_3nt^2}$, the above holds for every $1\le m\lesssim \log n$, and hence for every $g\in \calG$. Note that $t\ge v_n\ge \sqrt{\frac{\log n}{n}}$, we have $e^{nt^2}\gtrsim n\gtrsim \log n$, let $c_3=\tilde c_3+1$ we can remove the $\log n$ coming from the union bound in the probability term.
\end{proof}

\begin{lemma}
\label{lemma:L2-Ln-relation-version3}
Let $z_1,\cdots,z_n\in \calZ$ be i.i.d. copies of $\bz$, $b\asymp 1$ and  $\calG$ be a $b$-uniformly-bounded function class satisfying $\log(\calN_\infty(\epsilon,\calG,\bz_1^n))\le v\log\left(\frac{ebn}{\epsilon}\right)$ for some quantity $v$. Let $\tilde g$ be a fixed $b$-uniformly-bounded function, not necessarily in $\calG$. Then for any constant $\zeta\in (0,1)$, there exists $c_1,c_2,c_3$ such that as long as $t\ge c_1\sqrt{\frac{v\log n}{n}}$, with probability at least $1-c_2 e^{-c_3nt^2}$, we have
\[\Big|\frac{1}{n}\sum_{i=1}^n (g^2(z_i)-\tilde g^2(z_i))-\EE[g^2(z_1)-\tilde g^2(z_1)]\Big|\le \zeta(\EE|g(z_1)-\tilde g(z_1)|^2+t^2),\quad \forall g\in \calG.\]
\end{lemma}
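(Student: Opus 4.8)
The plan is to follow the same localized empirical-process argument already used for Lemma~\ref{lemma:L2-Ln-relation-version2}, the only new ingredient being the factorization $g^2-\tilde g^2=(g-\tilde g)(g+\tilde g)$. Write $h_g:=g^2-\tilde g^2$. Since $\calG$ and $\tilde g$ are $b$-uniformly bounded, $|g+\tilde g|\le 2b$, so $|h_g|\le 2b\,|g-\tilde g|$, giving $\|h_g\|_{2}\le 2b\,\|g-\tilde g\|_{2}$, $\|h_g\|_\infty\le 4b^2$, and $|h_g-h_{g'}|\le 2b\,|g-g'|$; the last estimate yields $\calN_\infty(\epsilon,\{h_g:g\in\calG\},\bz_1^n)\le\calN_\infty(\epsilon/(2b),\calG,\bz_1^n)\lesssim v\log(ebn/\epsilon)$ because $b\asymp1$. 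Hence controlling the empirical process indexed by $\{h_g\}$ in terms of $\|g-\tilde g\|_2$ is, up to constants, the same as controlling the process indexed by the shifted class $\calG-\tilde g:=\{g-\tilde g:g\in\calG\}$, which has the same $L_\infty$ covering numbers as $\calG$ and is $2b$-uniformly bounded.

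Next I would localize. Fix $r\ge v_n:=\sqrt{v\log n/n}$ and set $\calB(r):=\{g\in\calG:\|g-\tilde g\|_2\le r\}$. After symmetrization, $\EE\sup_{g\in\calB(r)}|\tfrac1n\sum_i h_g(z_i)-\EE h_g|\le 2\,\EE\sup_{g\in\calB(r)}|\tfrac1n\sum_i\epsilon_i h_g(z_i)|$ with Rademacher $\epsilon_i$. Applying Lemma~\ref{lemma:L2-Ln-relation} to $\calG-\tilde g$ shows that on an event of probability $1-e^{-cnt^2}$ one has $\|g-\tilde g\|_n\lesssim r+t$ for all $g\in\calB(r)$, hence $\|h_g\|_n\lesssim r+t$ there. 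A Dudley chaining bound over $\{h_g:g\in\calB(r)\}$ using the covering-number estimate above gives $\EE\sup_{g\in\calB(r)}|\tfrac1n\sum_i\epsilon_i h_g(z_i)|\lesssim \tfrac{1}{\sqrt n}(r+t)\sqrt{v\log n}\lesssim(r+t)v_n$. Since $|h_g|\le 4b^2\asymp1$ and $\Var(h_g(z_1))\le\|h_g\|_2^2\le 4b^2r^2$ on $\calB(r)$, Talagrand's concentration inequality (Theorem~3.27 in \cite{wainwright2019high}) upgrades this to: with probability $1-c_2e^{-c_3nt^2}$, $\sup_{g\in\calB(r)}|\tfrac1n\sum_i h_g(z_i)-\EE h_g|\lesssim (r+t)v_n+rt+t^2$. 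Using $rv_n\le\tfrac{\zeta}{8}r^2+\tfrac{2}{\zeta}v_n^2$ and analogous AM--GM splits, and choosing $c_1$ large enough (depending on $\zeta$) that $\tfrac{2}{\zeta}v_n^2\le t^2$, the right-hand side is at most $\tfrac\zeta2(r^2+t^2)$.

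Finally I would peel. Let $\calS_m:=\{g\in\calG:2^mv_n\le\|g-\tilde g\|_2\le 2^{m+1}v_n\}$; since $\|g-\tilde g\|_2\le 2b$ only $m\lesssim\log(b/v_n)\lesssim\log n$ shells are nonempty, and the innermost ball $\{g:\|g-\tilde g\|_2\le v_n\}=\calB(v_n)$ is handled by the case $r=v_n$, where $\tfrac\zeta2(v_n^2+t^2)\le\zeta t^2$ since $t\ge v_n$. On $\calS_m\subseteq\calB(2^{m+1}v_n)$ the displayed bound reads $\lesssim\zeta((2^{m+1}v_n)^2+t^2)\le\zeta(4\|g-\tilde g\|_2^2+t^2)$, which, after rescaling $\zeta$ by an absolute constant, is the claimed bound $\zeta(\EE|g(z_1)-\tilde g(z_1)|^2+t^2)$. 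A union bound over the $O(\log n)$ shells costs a factor $\log n$ in the failure probability, absorbed into the exponent by noting $t\ge v_n\ge\sqrt{\log n/n}$, so $e^{nt^2}\gtrsim n\gtrsim\log n$, at the price of decreasing $c_3$ by one.

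The main obstacle is mild, since the skeleton is identical to the proof of Lemma~\ref{lemma:L2-Ln-relation-version2}: the care needed is in the quadratic factorization and in localizing around $\tilde g$, which need not lie in $\calG$. One must verify that multiplying by the bounded factor $(g+\tilde g)$ neither inflates the metric entropy nor spoils the localization — it only changes constants, because $\|h_g\|_2\asymp\|g-\tilde g\|_2$ and $\|h_g\|_n\asymp\|g-\tilde g\|_n$ up to the factor $2b\asymp1$ — and then track the constants through the AM--GM steps so that the final right-hand side is exactly of the form $\zeta(\|g-\tilde g\|_2^2+t^2)$, with $c_1$ permitted to depend on $\zeta$.
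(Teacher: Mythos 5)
Your argument is correct, but it takes a longer road than the paper does. You re-run the entire localization machinery (symmetrization, Dudley chaining, Talagrand, peeling over shells of $\|g-\tilde g\|_2$) for the quadratic class $\{h_g=g^2-\tilde g^2\}$, localizing directly in $\|g-\tilde g\|_2$. The paper instead treats Lemma \ref{lemma:L2-Ln-relation-version2} as a black box: it defines $\bar\calG=\{g^2-\tilde g^2:g\in\calG\}$, observes that an $\epsilon$-cover of $\calG$ induces a $2b\epsilon$-cover of $\bar\calG$ (so the entropy condition holds with $v$ replaced by $2v$), applies Lemma \ref{lemma:L2-Ln-relation-version2} to $\bar\calG$ with $\zeta$ replaced by $\zeta/(4b^2)$ to get a bound in terms of $\EE|g^2-\tilde g^2|^2+t^2$, and finishes with the pointwise inequality $\EE|g^2-\tilde g^2|^2\le 4b^2\,\EE|g-\tilde g|^2$. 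Both routes rest on the same two facts you identified — the factorization $g^2-\tilde g^2=(g-\tilde g)(g+\tilde g)$ with $|g+\tilde g|\le 2b$, and the stability of covering numbers under this Lipschitz map — so neither is logically stronger; what your version buys is independence from the precise statement of Lemma \ref{lemma:L2-Ln-relation-version2} (and a localization radius phrased directly in the quantity $\|g-\tilde g\|_2$ that appears in the conclusion), while the paper's reduction buys brevity and avoids re-verifying the chaining and peeling steps. One small point worth making explicit if you keep your route: the step where you condition on the event $\|g-\tilde g\|_n\lesssim r+t$ before taking the expectation in the chaining bound should be stated as a high-probability restriction of the supremum rather than an unconditional expectation bound, exactly as is done in the proof of Lemma \ref{lemma:L2-Ln-relation-version2}.
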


\begin{proof}[Proof of Lemma \ref{lemma:L2-Ln-relation-version3}]
Define a new function class as $\bar\calG=\{g^2-\tilde g^2:g\in \calG\}$. For $g_1,\cdots,g_N$ being an $\epsilon$-covering set of $\calG$, we claim that $g_1^2-\tilde g^2,\cdots,g_N^2-\tilde g^2$ is an $2b\epsilon$-covering set of $\bar \calG$. In fact, for any $g\in\calG$, there exists $\pi(g)\in [N]$ such that $|g(z_i)-g_{\pi(g)}(z_i)|\le \epsilon$. Therefore, $|(g(z_i)^2-\tilde g(z_i)^2)-(g_{\pi(g)}(z_i)^2-\tilde g(z_i)^2)|=|(g_{\pi(g)}(z_i)-g(z_i))\cdot (g_{\pi(g)}(z_i)+g(z_i))|\le 2b\epsilon$. And hence $\log(\calN_\infty(\epsilon,\bar\calG,\bz_1^n))\le v\log\left(\frac{2eb^2 n}{\epsilon}\right)\le 2v\log\left(\frac{eb n}{\epsilon}\right)$.

Applying Lemma \ref{lemma:L2-Ln-relation-version2} on $\bar \calG$ with $\zeta$ replaced by $\frac{\zeta}{4b^2}$, we have with probability at least $1-c_2 e^{-c_3nt^2}$,
\[\Big|\frac{1}{n}\sum_{i=1}^n (g^2(z_i)-\tilde g^2(z_i))-\EE[g^2(z_1)-\tilde g^2(z_1)]\Big|\le \frac{\zeta}{4b^2}(\EE|g^2(z_1)-\tilde g^2(z_1)|^2+t^2),\quad \forall g\in \calG.\]
Noticing $\EE|g^2(z_1)-\tilde g^2(z_1)|^2\le 4b^2\EE|g(z_1)-\tilde g(z_1)|^2$ completes the proof.

\end{proof}

We first define empirical loss $\hat J$ and the population version 
loss $J$ as 
\[\hat J(g)=\frac{1}{2n}\sum_{i=1}^n g(\bs_i,a_i,\bs_i^\circ)^2-\frac{1}{n}\sum_{i=1}^n g(\bs_i,a_i,\bs'_i).\]
\[J(g)=\frac{1}{2}\EE g(\bs_i,a_i,\bs_i^\circ)^2-\EE g(\bs_i,a_i,\bs'_i).\]
We have by change of variable from $\bs'_i$ to $\bs_i^\circ$,
\begin{align*}
J(g)&=\frac{1}{2}\int\Big[ g(\bs_i,a_i,\bs_i^\circ)^2-2g(\bs_i,a_i,\bs^\circ_i)\rho(\bs_i,a_i,\bs_i^\circ)\Big]p(\bs_i,a_i)d\bs_i da d\bs_i^\circ
\end{align*}
Therefore, we have 
\[J(g)-J(\rho)=\frac{1}{2}\int \left(g(\bs_i,a_i,\bs_i^\circ)-\rho(\bs_i,a_i,\bs_i^\circ)\right)^2p(\bs_i,a_i)d\bs_i da d\bs_i^\circ.\]

Again using neural network approximation results \citep{fan2023factor}, we have a $\bar g\in \calG(\bar L,\bar N,\bar M,\bar B)$, such that $\|\bar g-\rho\|_\infty\le (\bar N\bar L)^{-2\gamma_3}$.

The optimality of $\hat \rho$ leads to
\[\hat J(\hat \rho)\le \hat J(\bar g).\]

By bounding the metric entropy of $\calG$ similar as in Lemma \ref{lemma:calE0-bound}, the conditions in Lemma \ref{lemma:L2-Ln-relation-version2} and \ref{lemma:L2-Ln-relation-version3} are satisfied with $v=\bar N^2\bar L^2\log(\bar N\bar L)$.

Applying Lemma \ref{lemma:L2-Ln-relation-version3} we have with probability at least $1-c_2e^{-c_3nu^2}$, and $u\ge c_1 v_n$,
\[\Big|\frac{1}{2n}\sum_{i=1}^n \bar g(\bs_i,a_i,\bs_i^\circ)^2-\frac{1}{2n}\sum_{i=1}^n \hat \rho(\bs_i,a_i,\bs_i^\circ)^2-\frac{1}{2}\EE \bar g(\bs_i,a_i,\bs_i^\circ)^2 +\frac{1}{2}\EE \hat \rho(\bs_i,a_i,\bs_i^\circ)^2 \Big|\le\frac{1}{8\Upsilon_2}(\EE(\bar g-\hat\rho)^2+u^2).\]

Applying Lemma \ref{lemma:L2-Ln-relation-version2}, we have with probability at least $1-c_2e^{-c_3nu^2}$, and $u\ge c_1 v_n$,
\[\Big|\frac{1}{n}\sum_{i=1}^n \bar g(\bs_i,a_i,\bs'_i)-\frac{1}{n}\sum_{i=1}^n \hat\rho(\bs_i,a_i,\bs'_i)-\EE\bar g(\bs_i,a_i,\bs'_i)+\EE\hat\rho(\bs_i,a_i,\bs'_i)\Big|\le \frac{1}{8\Upsilon_2}(\EE(\bar g-\hat\rho)^2+u^2).\]
Combining the above two inequalities we obtain $|\hat J(\bar g)-\hat J(\hat\rho)-J(\bar g)+J(\hat\rho)|\le\frac{1}{4\Upsilon_2}(\EE(\bar g-\hat\rho)^2+u^2)$.

For the difference of population loss, we have \begin{align*}
J(\bar g)-J(\hat\rho)&=J(\bar g)-J(\rho)+J(\rho)-J(\hat\rho)\\
&\le \frac{1}{2}(\bar N\bar L)^{-4\gamma_3}+J(\rho)-J(\hat\rho)\end{align*}

Therefore, we have 
\begin{align*}
J(\hat\rho)-J(\rho)&\le\frac{1}{4\Upsilon_2}(\EE(\bar g-\hat\rho)^2+u^2)+ \frac{1}{2}(\bar N\bar L)^{-4\gamma_3}\\
&\le \frac{1}{4\Upsilon_2}(\EE(\rho-\hat\rho)^2+u^2)+ (\bar N\bar L)^{-4\gamma_3}
\end{align*}
where we use approximation results in the second inequality again and $\Upsilon_2\ge 1$.

On the other hand, \begin{align*}J(\hat\rho)-J(\rho)&=\frac{1}{2}\int \left(\hat\rho(\bs_i,a_i,\bs_i^\circ)-\rho(\bs_i,a_i,\bs_i^\circ)\right)^2p(\bs_i,a_i)d\bs_i da d\bs_i^\circ\\
&=\frac{1}{2}\int \left(\hat\rho(\bs_i,a_i,\bs'_i)-\rho(\bs_i,a_i,\bs'_i)\right)^2\frac{p(\bs_i,a_i,\bs'_i)}{\rho(\bs_i,a_i,\bs'_i)}d\bs_i da d\bs'_i\\&\ge \frac{1}{2\Upsilon_2}\int \left(\hat\rho(\bs_i,a_i,\bs'_i)-\rho(\bs_i,a_i,\bs'_i)\right)^2p(\bs_i,a_i,\bs'_i)d\bs_i da d\bs'_i\\&=\frac{1}{2\Upsilon_2}\EE(\rho-\hat\rho)^2
\end{align*}
Putting pieces together, we have for $u\ge \sqrt{\frac{\bar N^2\bar L^2\log(\bar N\bar L)\log n}{n}}$, with probability at least $1-c_2e^{-c_3nu^2}$,
\[\EE(\rho-\hat\rho)^2\lesssim u^2+ \Upsilon_2(\bar N\bar L)^{-4\gamma_3}.\]
That is equivalent to saying that with probability at least $1-e^{-u}$,
\[\EE(\rho-\hat\rho)^2\lesssim \frac{u}{n}+\Upsilon_2(\bar N\bar L)^{-4\gamma_3}+\frac{\bar N^2\bar L^2\log(\bar N\bar L)\log n}{n}.\]
Applying Lemma \ref{lemma:L2-Ln-relation} again and adding back scripts $k,t$, we have that 
\[\max\Big\{\EE^{(k)}(\rho_{t,i}^{(k)}-\hat\rho_{t,i}^{(k)})^2,\frac{1}{n_k}\sum_{i=1}^{n_k}(\rho_{t,i}^{(k)}-\hat\rho_{t,i}^{(k)})^2\Big\}\lesssim \frac{u}{n_k}+\Upsilon_2(\bar N\bar L)^{-4\gamma_3}+\frac{\bar N^2\bar L^2\log(\bar N\bar L)\log n_k}{n_k}.\]
Set the size parameters such that $\bar N\bar L\asymp (\frac{n_k\Upsilon_2}{\log n_k})^{\frac{1}{4\gamma_3+2}}$, we have that with probability at least $1-e^{-u}$,
\[\max\Big\{\EE^{(k)}(\rho_{t}^{(k)}-\hat\rho_{t}^{(k)})^2,\frac{1}{n_k}\sum_{i=1}^{n_k}(\rho_{t,i}^{(k)}-\hat\rho_{t,i}^{(k)})^2\Big\}\lesssim \frac{u}{n_k}+(\frac{\log n_k}{n_k})^{\frac{2\gamma_3}{2\gamma_3+1}}\]where recall that $\EE^{(k)}$ means data generating process under task $k$.
\end{proof}

\subsection{Proof of the Second Result in Theorem 9
}

\begin{proof}[Proof of the second result in Theorem 9
]
From the first reuslt in Theorem 9 
and a union bound, with probability at least $1-(K+1)e^{-u}$, for every $k=0,1,\cdots,K$,
\begin{equation}
\label{equ:transition-density-estimation-bound}
\max\Big\{\EE^{(k)}(\rho_{t}^{(k)}-\hat\rho_{t}^{(k)})^2,\frac{1}{n_k}\sum_{i=1}^{n_k}(\rho_{t,i}^{(k)}-\hat\rho_{t,i}^{(k)})^2\Big\}\lesssim \frac{u}{n_k}+(\frac{\log n_k}{n_k})^{\frac{2\gamma_3}{2\gamma_3+1}}\end{equation}
By Assumption 6
, we have for $1\le k\le K$,
\[\EE^{(k)}(\rho_{t}^{(0)}-\hat\rho_{t}^{(0)})^2\le\frac{1}{\eta}\EE^{(k)}(\rho_{t}^{(0)}-\hat\rho_{t}^{(0)})^2\lesssim \frac{u}{n_0\eta}+\frac{1}{\eta}(\frac{\log n_0}{n_0})^{\frac{2\gamma_3}{2\gamma_3+1}} .\]
Using Lemma \ref{lemma:L2-Ln-relation}, we have for $1\le k\le K$, 
\begin{equation}
\label{equ:transition-density-estimation-bound1}
\frac{1}{n_k}\sum_{i=1}^{n_k}(\rho_{t}^{(0)}(\bs_{t,i}^{(k)},a_{t,i}^{(k)},\bs'^{(k)}_{t,i})-\hat\rho_{t}^{(0)}(\bs_{t,i}^{(k)},a_{t,i}^{(k)},\bs'^{(k)}_{t,i}))^2\lesssim \frac{u}{n_0\eta}+\frac{1}{\eta}(\frac{\log n_0}{n_0})^{\frac{2\gamma_3}{2\gamma_3+1}}+\frac{u}{n_k}+(\frac{\log n_k}{n_k})^{\frac{2\gamma_3}{2\gamma_3+1}} \end{equation}
Note that we are interested in bounding $\hat\Omega(t)=\frac{1}{n_\calM}\sum_{i=1}^{n_\calM}|\hat \omega_{t,i}^{k_i}-\omega_{t,i}^{k_i}|^2=\frac{1}{n_\calM}\sum_{k=1}^K\hat\Omega_k(t)$, where $\hat\Omega_k(t)=\sum_{i=1}^{n_k}|\hat \omega_{t,i}^{k}-\omega_{t,i}^{k}|^2$.

We have by Assumption 4
(i) and the truncation step at $\Upsilon_1$,
\begin{align*}|\hat \omega_{t,i}^{k}-\omega_{t,i}^{k}|^2&=\Big|\frac{\rho_{t}^{(0)}(\bs_{t,i}^{(k)},a_{t,i}^{(k)},\bs'^{(k)}_{t,i})}{\max\{\rho_{t,i}^{(k)},\Upsilon_1\}}-\frac{\hat\rho_{t}^{(0)}(\bs_{t,i}^{(k)},a_{t,i}^{(k)},\bs'^{(k)}_{t,i})}{\max\{\hat\rho_{t,i}^{(k)},\Upsilon_1\}}\Big|^2\\
&\lesssim (\rho_{t}^{(0)}(\bs_{t,i}^{(k)},a_{t,i}^{(k)},\bs'^{(k)}_{t,i})-\hat\rho_{t}^{(0)}(\bs_{t,i}^{(k)},a_{t,i}^{(k)},\bs'^{(k)}_{t,i}))^2+ (\rho_{t,i}^{(k)}-\hat\rho_{t,i}^{(k)})^2
\end{align*}
And hence 
\[\hat\Omega_k(t)\lesssim n_k\left(\frac{u}{n_0\eta}+\frac{1}{\eta}(\frac{\log n_0}{n_0})^{\frac{2\gamma_3}{2\gamma_3+1}}+\frac{u}{n_k}+(\frac{\log n_k}{n_k})^{\frac{2\gamma_3}{2\gamma_3+1}}\right).\]
Summing up we get with probability at least $1-T(K+1)e^{-u}$, for every $t\in [T]$,
\begin{align*}
\hat\Omega(t)&\lesssim \frac{1}{n_\calM}\sum_{k=1}^K n_k\left(\frac{u}{n_0\eta}+\frac{1}{\eta}(\frac{\log n_0}{n_0})^{\frac{2\gamma_3}{2\gamma_3+1}}+\frac{u}{n_k}+(\frac{\log n_k}{n_k})^{\frac{2\gamma_3}{2\gamma_3+1}}\right) \\
&\lesssim \frac{u}{n_0\eta}+\frac{Ku}{n_\calM}+\frac{1}{\eta}(\frac{\log n_0}{n_0})^{\frac{2\gamma_3}{2\gamma_3+1}}+\log^{\frac{2\gamma_3}{2\gamma_3+1}}(n_\calM)\frac{\sum_{k=1}^K n_k^{\frac{1}{2\gamma_3+1}}}{n_\calM}\\
&\le \frac{u}{n_0\eta}+\frac{Ku}{n_\calM}+\frac{1}{\eta}(\frac{\log n_0}{n_0})^{\frac{2\gamma_3}{2\gamma_3+1}}+\log^{\frac{2\gamma_3}{2\gamma_3+1}}(n_\calM)\frac{K^{\frac{2\gamma_3}{2\gamma_3+1}} n_{\calM}^{\frac{1}{2\gamma_3+1}}}{n_\calM}\\
&\lesssim \frac{u}{\min\{n_0,n_\calM/K\}}+\frac{1}{\eta}(\frac{\log n_0}{n_0})^{\frac{2\gamma_3}{2\gamma_3+1}}+(\frac{K\log(n_\calM)}{n_\calM})^{\frac{2\gamma_3}{2\gamma_3+1}}
\end{align*}
\end{proof}

\section{Transition Ratio Estimation by DNN with Density Transfer} \label{append:transition-ratio-estimation-density-transfer}

Recall that the density ratio estimator under density similarity is given by 
\begin{align*}
\hat\omega_t^{(k)}:&=\arg\min_{g\in \calG(\bar L_2,\bar N_2,\bar M_2,\bar B_2)} \frac{1}{2n_0}\sum_{i=1}^{n_0} (g\cdot \hat\rho_t^{(k)})^2(\bs^{(0)}_{t,i},a^{(0)}_{t,i},\bs^\circ_i)-\frac{1}{n_0}\sum_{i=1}^{n_0} (g\cdot\hat\rho_t^{(k)})(\bs^{(0)}_{t,i},a^{(0)}_{t,i},\bs'^{(0)}_{t,i})
\end{align*}

\subsection{Proof of the First Result in Theorem 10}

\begin{proof}[Proof of the first result in Theorem 10]
To lighten notation, we omit subscript $t$ and superscripts as they are fixed through the proof, except keeping superscript $(k)$ in $\hat\rho^{(k)}$. For example, we abbreviate $\hat\omega_t^{(k)}$ as $\hat\omega$ and abbreviate $\hat\rho^{(k)}_t$ as $\hat\rho^{(k)}$.

For any $g$, define
\[\hat J_0(g):=\frac{1}{2n}\sum_{i=1}^n g(\bs_i,a_i,\bs_i^\circ)^2-\frac{1}{n}\sum_{i=1}^n g(\bs_i,a_i,\bs'_i).\]
\[J_0(g):=\frac{1}{2}\EE g(\bs_i,a_i,\bs_i^\circ)^2-\EE g(\bs_i,a_i,\bs'_i)\]where note the tuples $(\bs_i,a_i,\bs'_i)$ come from the target task. We have
\[J_0(g)-J_0(\omega\rho^{(k)})=\frac{1}{2}\int \left(g(\bs_i,a_i,\bs_i^\circ)-(\omega\cdot\rho^{(k)})(\bs_i,a_i,\bs_i^\circ)\right)^2p(\bs_i,a_i)d\bs_i da d\bs_i^\circ.\]
Employing neural network approximation results \citep{fan2023factor} and by Assumption 4, we have a $\bar g\in \calG(\bar L_2,\bar N_2,\bar M_2,\bar B_2)$, such that $\|\bar g-\omega\|_\infty\le (\bar N_2\bar L_2)^{-2\gamma_4}$.

The optimality of the estimator leads to
\[\hat J_0(\hat \omega\hat\rho^{(k)})\le \hat J_0(\bar g\hat\rho^{(k)}).\]

By bounding the metric entropy of $\calG$ similar as in Lemma \ref{lemma:calE0-bound}, the conditions in Lemma \ref{lemma:L2-Ln-relation-version2} and \ref{lemma:L2-Ln-relation-version3} are satisfied with $v=\bar N_2^2\bar L_2^2\log(\bar N_2\bar L_2)$.

Similar to the proof of Theorem 10, applying Lemma \ref{lemma:L2-Ln-relation-version2} and \ref{lemma:L2-Ln-relation-version3} and adding up, we obtain with probability at least $1-c_2e^{-c_3nu^2}$, \begin{align*}|\hat J_0(\bar g\hat\rho^{(k)})-\hat J_0(\hat \omega\hat\rho^{(k)})-J_0(\bar g\hat\rho^{(k)})+J_0(\hat \omega\hat\rho^{(k)})|\le\frac{1}{8}(\EE_0(\bar g\hat\rho^{(k)}-\hat\omega\hat\rho^{(k)})^2+u^2).
\end{align*}

Moreover, We have the decomposition
\begin{align*}
J_0(\bar g\hat\rho^{(k)})-J_0(\hat\omega\hat\rho^{(k)})
= J_0(\bar g\hat\rho^{(k)})-J_0( \omega\rho^{(k)})+J_0( \omega\rho^{(k)})-J_0(\hat\omega\hat\rho^{(k)}),
\end{align*}
and
\begin{align*}|J_0(\bar g\hat\rho^{(k)})-J_0( \omega\rho^{(k)})|&\le\frac{1}{2}\int \left((\bar g\cdot\hat\rho^{(k)})(\bs_i,a_i,\bs_i^\circ)-(\omega\cdot\rho^{(k)})(\bs_i,a_i,\bs_i^\circ)\right)^2p(\bs_i,a_i)d\bs_i da d\bs_i^\circ \\
&\le \frac{\Upsilon_2}{2}\EE_0|\bar g\cdot\hat\rho^{(k)}-\omega\cdot\rho^{(k)}|^2\\
&\le \frac{\Upsilon_2}{2}(\EE_0|\bar g\cdot\hat\rho^{(k)}-\bar g\cdot\rho^{(k)}|^2+\EE_0|\bar g\cdot\rho^{(k)}-\omega\cdot\rho^{(k)}|^2)\\
&\le \frac{\Upsilon_2}{2}(\bar B_2^2\EE_0|\hat\rho^{(k)}-\rho^{(k)}|^2+\Upsilon_2^2\EE_0|\bar g-\omega|^2)\\
&\lesssim \EE_0|\hat\rho^{(k)}-\rho^{(k)}|^2+(\bar N_2\bar L_2)^{-2\gamma_4}
\end{align*}
where we used the boundedness of $\bar g$ and $\rho^{(k)}$, and $\EE_0$ means the expectation is taken in target samples.

Putting pieces together, we get that for some constant $C$,
\begin{align*}
\EE_0|\omega\rho^{(k)}-\hat\omega\hat\rho^{(k)}|^2&=J_0( \omega\rho^{(k)})-J_0( \hat\omega\hat\rho^{(k)}) \\
&\le \frac{1}{8}(\EE_0(\bar g\hat\rho^{(k)}-\hat\omega\hat\rho^{(k)})^2+u^2)+C\EE_0|\hat\rho^{(k)}-\rho^{(k)}|^2+C(\bar N_2\bar L_2)^{-2\gamma_4}\\
&\le \frac{1}{2}(\EE_0(\omega\hat\rho^{(k)}-\hat\omega\hat\rho^{(k)})^2+u^2)+C\EE_0|\hat\rho^{(k)}-\rho^{(k)}|^2+2C(\bar N_2\bar L_2)^{-2\gamma_4}
\end{align*}
Meanwhile, we have \begin{align*}\EE_0|\omega\rho^{(k)}-\hat\omega\hat\rho^{(k)}|^2&\ge \frac{1}{2}\EE_0(\omega\hat\rho^{(k)}-\hat\omega\hat\rho^{(k)})^2-\EE_0(\omega\hat\rho^{(k)}-\omega\rho^{(k)})^2\\
&\ge \frac{1}{2}\EE_0(\omega\hat\rho^{(k)}-\hat\omega\hat\rho^{(k)})^2-\frac{\Upsilon_2^2}{\Upsilon_1^2}\EE_0(\hat\rho^{(k)}-\rho^{(k)})^2\end{align*}where in the last inequality we used $|\omega|\le \frac{\Upsilon_2}{\Upsilon_1}$.

As a result, we have with probability at least $1-e^{-u}$,
\[\EE_0(\omega\hat\rho^{(k)}-\hat\omega\hat\rho^{(k)})^2\lesssim \EE_0|\hat\rho^{(k)}-\rho^{(k)}|^2+(\bar N_2\bar L_2)^{-2\gamma_4}+\frac{\bar N_2^2\bar L_2^2\log n_0\log (\bar N_2\bar L_2)}{n_0}+\frac{u}{n_0}.\]
Letting $\bar N_2\bar L_2\asymp (\frac{n_0}{\log n_0})^{\frac{1}{4\gamma_4+2}}$, as well as using $|\hat\rho^{(k)}|\ge \Upsilon_1$ by the truncation step, we have that with probability at least $1-e^{-u}$, \[\EE_0(\omega_t^{(k)}-\hat\omega_t^{(k)})^2\lesssim\EE_0|\hat\rho_t^{(k)}-\rho_t^{(k)}|^2+ (\frac{\log n_0}{n_0})^{\frac{2\gamma_4}{2\gamma_4+1}}+\frac{u}{n_0}.\]
\end{proof}

\subsection{Proof of the Second Result in Theorem 10}

\begin{proof}[Proof of the second result in Theorem 10]
From Theorem 9, with probability at least $1-e^{-u}$,
\[\EE^{(k)}(\rho_{t}^{(k)}-\hat\rho_{t}^{(k)})^2\lesssim \frac{u}{n_k}+(\frac{\log n_k}{n_k})^{\frac{2\gamma_3}{2\gamma_3+1}}.\]    
Therefore, from Theorem 10,
\begin{align*}
\EE^{(k)}(\omega_t^{(k)}-\hat\omega_t^{(k)})^2&\le \frac{1}{\eta}\EE_0(\omega_t^{(k)}-\hat\omega_t^{(k)})^2\\&\lesssim\frac{1}{\eta}\EE_0|\hat\rho_t^{(k)}-\rho_t^{(k)}|^2+ \frac{1}{\eta}(\frac{\log n_0}{n_0})^{\frac{2\gamma_4}{2\gamma_4+1}}+\frac{u}{n_0\eta}\\&\le\frac{1}{\eta^2}\EE^{(k)}|\hat\rho_t^{(k)}-\rho_t^{(k)}|^2+ \frac{1}{\eta}(\frac{\log n_0}{n_0})^{\frac{2\gamma_4}{2\gamma_4+1}}+\frac{u}{n_0\eta}\\
&\le\frac{1}{\eta^2}(\frac{\log n_k}{n_k})^{\frac{2\gamma_3}{2\gamma_3+1}}+ \frac{1}{\eta}(\frac{\log n_0}{n_0})^{\frac{2\gamma_4}{2\gamma_4+1}}+\frac{u}{\min\{n_0\eta,n_k\eta^2\}}
\end{align*}

Applying Lemma \ref{lemma:L2-Ln-relation} we have 
\[\frac{1}{n_k}\sum_{i=1}^{n_k}(\omega_{t,i}^{(k)}-\hat\omega_{t,i}^{(k)})^2\lesssim \frac{1}{\eta^2}(\frac{\log n_k}{n_k})^{\frac{2\gamma_3}{2\gamma_3+1}}+ \frac{1}{\eta}(\frac{\log n_0}{n_0})^{\frac{2\gamma_4}{2\gamma_4+1}}+\frac{u}{\min\{n_0\eta,n_k\eta^2\}}+\frac{n_0^{\frac{1}{2\gamma_4+1}}\log n_k}{n_k}.\]
Therefore,
\begin{align*}
\hat\Omega(t)&=\frac{1}{n_\calM}\sum_{k=1}^K \sum_{i=1}^{n_k}(\omega_{t,i}^{(k)}-\hat\omega_{t,i}^{(k)})^2\\
&\lesssim \frac{1}{\eta^2}(\frac{K\log n_\calM}{n_\calM})^{\frac{2\gamma_3}{2\gamma_3+1}}+ \frac{1}{\eta}(\frac{\log n_0}{n_0})^{\frac{2\gamma_4}{2\gamma_4+1}}+\frac{u}{\min\{n_0\eta,n_\calM\eta^2/K\}}+\frac{n_0^{\frac{1}{2\gamma_4+1}}K\log n_\calM}{n_\calM}
\end{align*}
Taking a union bound over $k,t$ yields the desired result.
\end{proof}

\section{MIMIC-III: Calibrated Sepsis Management Environment} \label{append:mimic3}

\noindent\textbf{State variables.}
The samples of state, action, reward, and next state $\braces{{\bx_{i,t}, a_{i,t}, r_{i,t}, \bx_{i,t+1}}}_{i\in[N], t\in[T_i]}$ are constructed as follows. 
Each patient in the cohort is characterized by a set of $45$ variables, including demographics, vital signs, and laboratory values.
We conduct a dimension reduction using principal component analysis (PCA) and choose the top three principal components (PCs) as our state features, which explain about $98.97\%$ of the total variance. 
\begin{figure}[htbp]
\centering
\includegraphics[width=0.7\textwidth]{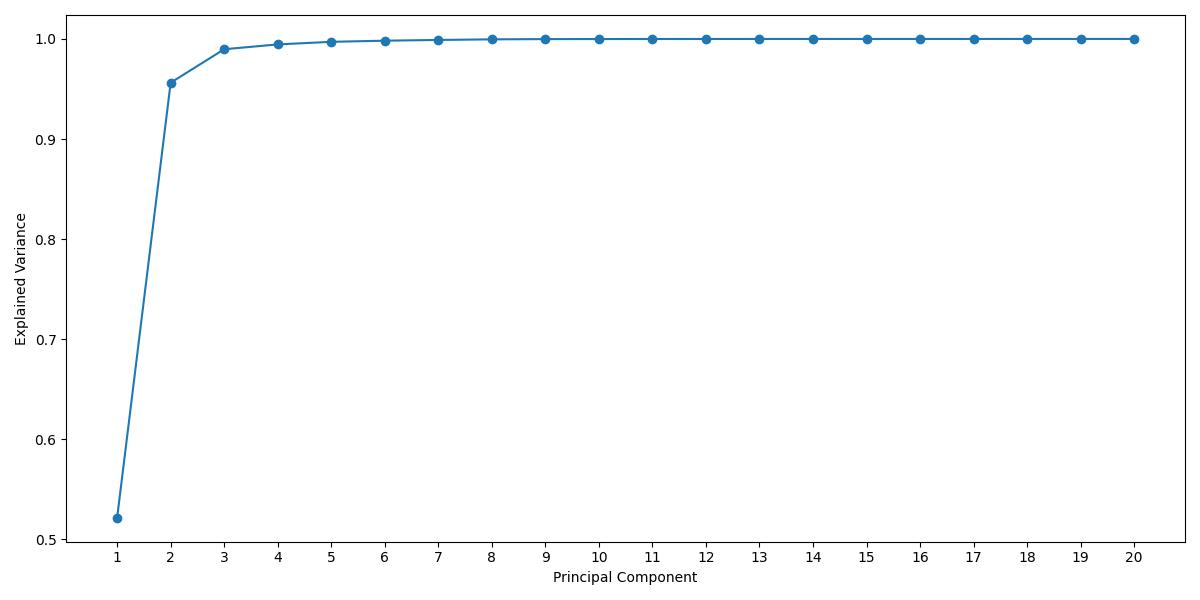}
\caption{Scree plot of the principal component analysis on 45 state variables.}
\label{fig:scree-plot}
\end{figure}

\noindent
\textbf{Rewards.} 
The reward signal is important and is crafted carefully in real applications.
For the final reward, we follow \cite{komorowski2018artificial} and use hospital mortality or 90-day mortality.
Specifically, when a patient survived after 90 days out of hospital, a positive reward (+1) was released at the end of each patient's trajectory; a negative reward (-1) was issued if the patient died in hospital or within 90 days out of hospital.
In our dataset, the mortality rate is 24.21\% for female and 22.71\% for male.
For the intermediate rewards, we follow \cite{prasad2017reinforcement} and associates reward to the health measurement of a patient.
The detailed description of the data pre-processing is presented in Section J of the supplemental material in \cite{chen2022reinforcement}.

\noindent
\textbf{Trajectory horizon and inverse steps.} 
The trajectory horizons are different in the dataset, with the maximum being 20 and the minimum being 1.
The trajectories are aligned at the last steps while allowing the starting steps to vary.
For example, the trajectories with length 20 start at step $1$ while the trajectories with length 10 start at step $11$.
But they all end at step 20.
We adopt this method because the distribution of final status is similar across trajectories.
Figure 5 in \cite{chen2022transferred} presents mortality rates of different lengths.
We see that while the numbers of trajectories differ a lot, the mortality rates do not vary much across trajectories with different horizons.
On the contrary, the starting status of patients may be very different. 
The one with trajectory length $20$ may be in a worse status and needs $10$ steps to reach the status similar to the starting status of the one with length $10$. 
We believe this is a reasonable setup to illustrate our method.
A rigorous medical analysis is beyond the scope of this paper and is a worthwhile topic for future research.  

\noindent
\textbf{Stage Compression.} 
In the calibrated environment with the function class of neural networks, we consider source and target MDPs with 5 stages. 
We have in total 20 steps and we number steps as $0-19$ starting from the end. So we end up having much more samples in the steps close to 0 (end). But we have fewer in the steps close to 19, because most trajectories have fewer than 20 steps, oftentimes fewer than 10 steps.
When generating buckets, we want to group adjacent steps together such that each bucket contains approximately the same number of samples. 
The state variables and rewards for the new aggregated stage are computed by averaging the corresponding values across all original stages that were combined.
The table we use:
\begin{table}[tb]
\centering
\resizebox{0.6\textwidth}{!}{%
\begin{tabular}{c|ccccc}
\hline
Stages (Inclusive) & 0-1 & 2-4 & 5-7 & 8-11 & 12-19 \\ \hline
New Stages & 0 & 1 & 2 & 3 & 4 \\ \hline
\end{tabular}%
}
\caption{Transformation of Stages}
\label{tab:my-table}
\end{table}


\noindent
\textbf{Environment Calibration.} 
We use the final processed data of $26,355$ tuples $\braces{{\bx_{i,t}, a_{i,t}, r_{i,t}, \bx_{i,t+1}}}$ with $i\in [2000]$ and $t \in [5]$. 
We devide the source and target task as corresponding to different gender of the patients. 
We use model (5.2) 
to learn the transition and reward models for the calibrated environments of source and target task respectively. 

\section{Explicit Expression of $Q$ Function in Section 5.2}\label{append:express-Q}

The true coefficients for the Q-functions in (5.1)
are $\theta_{2j} = \kappa_j$, $1\le j\le 7$ and
\begin{equation}  \label{eqn:true-q-theta}
    \begin{aligned}
       \theta_{11} & = \kappa_1 + q_1 \abs{f_1} + q_2 \abs{f_2} + (0.5-q_1) \abs{f_3} + (0.5-q_2)\abs{f_4},  \\
       \theta_{12} & = \kappa_2 + q_1'\abs{f_1} + q_2' \abs{f_2} - q_1' \abs{f_3} - q_2' \abs{f_4}, \\
       \theta_{13} & = \kappa_3 + q_1 \abs{f_1} - q_2 \abs{f_2} + (0.5-q_1) \abs{f_3} - (0.5-q_2)\abs{f_4}, \\
       \theta_{14} & = \kappa_4 + q_1'\abs{f_1} - q_2' \abs{f_2} - q_1' \abs{f_3} + q_2' \abs{f_4},
    \end{aligned}
\end{equation}
where 
\begin{align*}
q_1 & = 0.25\paren{ {\rm expit}\paren{b_1 + b_2} + {\rm expit}\paren{-b_1 + b_2}} \\   
q_2 & = 0.25\paren{ {\rm expit}\paren{b_1 - b_2} + {\rm expit}\paren{-b_1 - b_2} } \\
q_1'& = 0.25\paren{ {\rm expit}\paren{b_1 + b_2} - {\rm expit}\paren{-b_1 + b_2} } \\
q_2'& = 0.25\paren{ {\rm expit}\paren{b_1 - b_2} - {\rm expit}\paren{-b_1 - b_2} } \\
f_1 & = \kappa_5 + \kappa_6 + \kappa_7 \\
f_2 & = \kappa_5 + \kappa_6 - \kappa_7 \\
f_3 & = \kappa_5 - \kappa_6 + \kappa_7 \\
f_4 & = \kappa_5 - \kappa_6 - \kappa_7
\end{align*}

\end{appendices}

\end{document}